\newcommand{\cmark}{\ding{51}}%
\newcommand{\xmark}{\ding{55}}%
\newtheorem{remark}{Remark}
\newtheorem{ass}{Assumption}
\newtheorem{definition}{Definition}
\newtheorem{corollary}{Corollary}
\newtheorem{proposition}{Proposition}
\newtheorem{lemma}{Lemma}
\begin{document}
\begin{frontmatter}
\title{A Hybrid Stochastic Gradient Tracking Method for Distributed Online Optimization Over Time-Varying Directed Networks}

\author[]{Xinli Shi}\ead{xinli\_shi@seu.edu.cn},
\author[]{Xingxing Yuan}\ead{220224980@seu.edu.cn},
\author[]{Longkang Zhu}\ead{ 230248643@seu.edu.cn},
\author[]{Guanghui Wen}\ead{ghwen@seu.edu.com}




\begin{abstract}
With the increasing scale and dynamics of data, distributed online optimization has become essential for real-time decision-making in various applications. However, existing algorithms often rely on bounded gradient assumptions and overlook the impact of stochastic gradients, especially in time-varying directed networks. This study proposes a novel Time-Varying Hybrid Stochastic Gradient Tracking algorithm named TV-HSGT, based on hybrid stochastic gradient tracking and variance reduction mechanisms. Specifically, TV-HSGT integrates row-stochastic and column-stochastic communication schemes over time-varying digraphs, eliminating the need for Perron vector estimation or out-degree information. By combining current and recursive stochastic gradients, it effectively reduces gradient variance while accurately tracking global descent directions. Theoretical analysis demonstrates that TV-HSGT can achieve improved bounds on dynamic regret without assuming gradient boundedness. Experimental results on logistic regression tasks confirm the effectiveness of TV-HSGT in dynamic and resource-constrained environments.
\end{abstract}
\begin{keyword}
distributed online optimization; 
hybrid stochastic gradient tracking; time-varying directed networks; dynamic regret
\end{keyword}
\end{frontmatter}


\section{Introduction}\label{shosd:sec:introduction}
Distributed optimization has received significant attention and found applications in various fields such as control, signal processing, and machine learning \cite[]{shahrampour2015distributed, nedic2017fast,Shahrampour2017ACC}. It aims to solve a large-scale optimization problem by decomposing it into smaller, more tractable subproblems that can be solved iteratively and in parallel by a network of interconnected agents through communication.
Most traditional works on distributed optimization focus on static problems, making them unsuitable for dynamic tasks arising in real-world applications, such as networked autonomous vehicles, smart grids, and online machine learning, among others \cite[]{Dall2020}.   


Online optimization, which addresses time-varying cost functions, plays a vital role in solving dynamic problems in timely application fields \cite[]{Zinkevich2003,Mairal2009,Li2024TAC,Cao2021TAC}. In many practical scenarios, such as machine learning with information streams \cite[]{shalev2012online}, the objective functions of optimization problems change over time, making them inherently dynamic \cite[]{wei2023distributed,Zinkevich2003}. 
Online learning has emerged as a powerful method for handling sequential decision-making tasks in dynamic contexts, enabling real-time operation while ensuring bounded performance loss in terms of regret~\cite[]{hazan2016introduction}. Regret is the gap between the cumulative objective value achieved by the online algorithm and that of the optimal offline solution \cite[]{li2020distributed,Shahrampour2018}. In the literature, two types of regret are commonly considered, i.e., \textit{static} and \textit{dynamic regret}. The former evaluates the performance of an online algorithm relative to a fixed optimal decision $x^*$, and is typically formulated as
$\min_{t=1}^T(f_t(x_t)-f_t(x^*))$, where $x_t$ denotes the output of the online algorithm and $x^*$ is the optimal fixed decision in hindsight, i.e., $x^* \in \arg\min_{t=1}^Tf_t(x)$. In contrast, the dynamic regret is obtained by replacing the above static $x^*$ by a dynamic solution $x_t^*\in \arg\min f_t(x)$. This makes dynamic regret more suitable for non-stationary environments, although it is generally more challenging to minimize due to the evolving nature of the optimal points. Both metrics are commonly used to assess the performance of online algorithms. Achieving a sublinear regret growth, i.e., one that grows slower than linearly with time, is often regarded as a key indicator of algorithmic efficiency~\cite[]{yuan2017adaptive}. Therefore, minimizing regret, particularly in terms of establishing sublinear regret bounds, is fundamental to the design and analysis of effective online optimization methods.

\begin{table*}[t]
\centering
\captionsetup[table]{justification=centering, labelsep=colon}
\caption{Comparison with Distributed Online Optimization Algorithms}
\label{table:comparison}
\renewcommand{\arraystretch}{1.1}
\begin{tabular}{p{5.2cm} >{\centering\arraybackslash}p{3.2cm} cccccc}
\toprule
Works & Weight Matrix & TVN? & SG? & NBG? & Mo. Term? & Regret Type \\
\midrule
\cite{Shahrampour2018}         & Undirected, DS  & \xmark  & \cmark & \xmark & \xmark  & Dynamic \\
\cite{cao2023decentralized}&
Undirected, DS &  \xmark &  \xmark 
&  \xmark &  \xmark & Static \\
\cite{Zhang2022SMC}         & Directed, DS    & \cmark & \xmark  & \xmark & \xmark  & Static  \\
\cite{nazari2022dadam} & Undirected, DS  & \xmark  & \cmark & \xmark & \cmark  & Dynamic \\
\cite{Li2022TCMS}         & Directed, DS    & \cmark & \cmark & \xmark & \xmark  & Dynamic \\
\cite{carnevale2022gtadam} & Undirected, DS & \xmark  & \xmark  & \cmark  & \cmark & Dynamic \\
\cite{Sharma2024TSP}        & Undirected, DS  & \xmark  & \xmark  & \cmark  & \xmark & Dynamic \\
\cite{Li2024TAC}        & Directed, RS    & \xmark  & \cmark & \xmark & \xmark  & Static  \\
\cite{yao2025online} &Directed, RCS 
& \xmark & \xmark & \xmark & \xmark &Dynamic \\
Ours                  & Directed, RCS  & \cmark & \cmark & \cmark  & \cmark & Dynamic  \\
\bottomrule
\end{tabular}
\vspace{1em}
\begin{minipage}{0.9\textwidth}
\small
\textbf{Note:} “TVN?” indicates whether the network is time-varying;  
“SG?” indicates whether stochastic gradients are used;  
“NBG?” means no bounded gradient assumption in the analysis;  
“Mo. Term?” indicates whether momentum terms are incorporated in the algorithm;  
“RCS” stands for row- and column-stochastic weight matrices;  
“RS” and “DS” denote row-stochastic and doubly stochastic weight matrices, respectively;  
“Mo.” is short for momentum.
\end{minipage}
\end{table*}

Distributed online optimization offers a flexible framework for handling dynamic settings, combining the benefits of decentralized computation with the ability to adapt to non-stationary environments. 
Earlier works \cite[]{hosseini2013online, yan2012distributed} investigate online distributed optimization in networks with doubly stochastic mixing matrices and achieve a static regret bound of $\mathcal{O}(\sqrt{T})$. \cite{Shahrampour2018} further consider dynamic regret for both determined and stochastic online distributed optimization. 
\cite{carnevale2022gtadam} propose GTAdam without the bounded gradient assumption, combining gradient tracking and adaptive momentum. However, these works assume static or undirected communication topologies, which are insufficient for modeling dynamic networked systems with directional and time-varying interactions. To address this, several algorithms have been developed under time-varying directed graphs with corresponding theoretical guarantees. For instance, \cite{Lee2018TCNS} propose the ODA-PS algorithm by integrating dual averaging with the Push-Sum protocol over a directed time-varying network, achieving an $\mathcal{O}(\sqrt{T})$ static regret. \cite{Li2021TAC} further extend the Push-Sum framework to handle inequality-constrained optimization over unbalanced networks, establishing sublinear dynamic regret and constraint violation. \cite{Xiong2024TNSE} address feedback delays and propose an event-triggered online mirror descent method with regret guarantees. In addition, stochastic gradient methods have been explored to reduce computational costs. \cite{Lee2017TAC} analyze stochastic dual averaging under gradient noise, while \cite{Li2022TCMS} introduce a gradient tracking scheme with aggregation variables, achieving regret bounds under both exact and noisy gradients.

Nevertheless, many of the above methods rely on the assumption of uniformly bounded gradients and neglect the high variance commonly encountered in practice. Moreover, few of them \cite[]{nazari2022dadam,Lee2017TAC,Li2022TCMS,Li2024TAC} incorporate variance reduction techniques, limiting both accuracy and stability in stochastic settings. To overcome these limitations, recent studies have focused on gradient tracking-based approaches, which aim to approximate global descent directions by dynamically aggregating local gradient information. \cite{Zhang2019CDC} establish dynamic regret bounds for a basic tracking scheme, while \cite{carnevale2022gtadam} propose a momentum-enhanced variant inspired by adaptive methods. \cite{Sharma2024TSP} develop a generalized framework for strongly convex objectives without requiring gradient boundedness, further advancing the applicability of gradient tracking in decentralized online settings.

This work addresses the distributed online stochastic optimization over time-varying directed networks under limited computational resources, where agents interact over asymmetric communication links modeled by time-varying row- and column-stochastic mixing matrices. To overcome the challenges introduced by stochastic gradient noise and dynamic topologies, we design a novel online algorithm that incorporates hybrid variance reduction, gradient tracking, and an AB communication scheme \cite[]{Saadatniaki2020TAC,Pu2021TAC,Nguyen2023}. Table \ref{table:comparison} summarizes the comparison of our methods with several existing online optimization algorithms in terms of communication schemes, gradient assumptions, and types of regret. The main contributions are summarized as follows:

\begin{enumerate}
    \item We propose a Time-Varying Hybrid Stochastic Gradient Tracking method, named by TV-HSGT, for distributed online optimization over dynamic directed networks. It integrates a hybrid variance reduction strategy by combining current and recursive stochastic gradients. This method effectively reduces the variance introduced by stochastic gradients and accelerates convergence, as demonstrated in our experimental results.

    \item To address the limited information access inherent in decentralized systems, the algorithm incorporates a gradient tracking mechanism to approximate the global gradient direction over time-varying directed networks. In addition, an AB communication scheme is employed, utilizing both row-stochastic and column-stochastic weight matrices. This design eliminates the need to estimate the Perron vector, as required in traditional Push-Sum methods, improving practical applicability in directed network settings.

\item The algorithm is implemented within an adapt-then-combine (ATC) framework, which allows for relaxed step-size conditions compared with the combine-then-adapt (CTA) framework \cite[]{li2024npga}. We adopt a dynamic regret metric to evaluate performance and introduce a weighted averaging variable to characterize the deviation between local decisions and the global optimal trajectory. Theoretical analysis establishes upper bounds on dynamic regret, and numerical simulations validate the algorithm’s effectiveness in reducing stochastic gradient variance under dynamic and asymmetric communication topologies.
\end{enumerate}

The remainder of this paper is organized as follows. Section II formulates the problem and introduces necessary notations. Section III provides the proposed TV-HSGT algorithm, and Section IV analyzes its dynamic regret.
Section V presents numerical studies. Finally, we conclude the paper and discuss future directions in Section VI.

\section{PROBLEM FORMULATION}
Consider a networked system composed of $n$ agents, denoted by the set $\mathcal{V} = \{1, 2, \dots, n\}$. The agents communicate through a sequence of time-varying directed graphs $\{\mathcal{G}_t = (\mathcal{V}, \mathcal{E}_t)\}_{t \geq 0}$, where $\mathcal{E}_t \subseteq \mathcal{V} \times \mathcal{V}$ represents the set of available communication links at time $t$. If $(j, i) \in \mathcal{E}_t$, agent $i$ can receive information from agent $j$ at time $t$. This work aims to solve the following distributed online optimization problem:
\begin{equation}
\label{problem}
\min_{x \in \mathbb{R}^d} f_t(x) :=  \frac{1}{n} \sum_{i=1}^{n} f_{i,t}(x), \quad t \geq 0,
\end{equation}
where $x \in \mathbb{R}^d$ is the decision variable, and $f_{i,t}(x): \mathbb{R}^d \rightarrow \mathbb{R}$ denotes the local loss function of agent $i$ at time $t$, defined as the expected loss over a local random variable $\xi_{i,t}$, i.e.,
\(
f_{i,t}(x) := \mathbb{E}_{\xi_{i,t} \sim \mathcal{D}_{i,t}} \left[ \hat{f}_{i,t}(x; \xi_{i,t}) \right],
\)
where $\xi_{i,t}$ is a random variable following the distribution $\mathcal{D}_{i,t}$ at time $t$, and $\hat{f}_{i,t}(x; \xi_{i,t})$ denotes the loss function under the sampled random variable $\xi_{i,t}$.
In practical computation, due to limited computational resources, each agent constructs an unbiased stochastic gradient estimator \(
\nabla \hat{f}_{i,t}(x_{i,t}; \xi_{i,t}),
\) based on the current sample $\xi_{i,t}$, and uses it to update its decision variable. The aim of this study is to design a distributed online optimization algorithm tailored to time-varying directed network topologies, where each agent relies solely on limited computational resources and cooperates with neighbors to effectively minimize $f_t(x)$.

\begin{definition}[Dynamic Regret] \label{def:dynamic_regret}
For a sequence of local decisions $\{x_{i,t}\}$ generated by a given online distributed algorithm, the dynamic regret over $T$ time steps is defined as
\[
R_T^d := \mathbb{E} \left[ \sum_{t=1}^T f_t(\hat{x}_t) - \sum_{t=1}^T f_t(x_t^*) \right],
\]
where $\hat{x}_t := \sum_{i=1}^n [\phi_t]_i x_{i,t}$ denotes a weighted average of all agents’ decisions at time $t$, and $\{x_t^*\}_{t \ge 1}$ denotes the sequence of minimizers of the global objective functions $f_t(x)$. 
\end{definition}

To evaluate the algorithm's performance in a time-varying environment, this work adopts dynamic regret as the performance metric, defined formally in Definition~\ref{def:dynamic_regret}. Dynamic regret quantifies the discrepancy between the cumulative loss of an online algorithm and that of a time-dependent sequence of optimal solutions. Various forms of dynamic regret have been proposed in the literature. In particular, the GTAdam framework~\cite[]{carnevale2022gtadam} considers the version
\(
R_T^d := \mathbb{E} \left[ \sum_{t=1}^T f_t(\bar{x}_t) - \sum_{t=1}^T f_t(x_t^*) \right],
\)
where $\bar{x}_t := \frac{1}{n} \sum_{i=1}^n x_{i,t}$ is the simple average of agents’ decisions. However, GTAdam assumes undirected networks with doubly stochastic weight matrices. In contrast, this work addresses time-varying directed networks, where the weight matrices are not necessarily symmetric or doubly stochastic. Hence, we adopt a weighted average $\hat{x}_t := \sum_{i=1}^n [\phi_t]_i x_{i,t}$, as specified in Definition~\ref{def:dynamic_regret}, where $\phi_t \in \mathbb{R}^n$ is a stochastic vector used to accommodate such network structures.
Compared with static regret, dynamic regret effectively captures the algorithm’s asymptotic behavior relative to the evolving optimal decisions $\{x_t^*\}_{t=1}^T$. 

The time-variability and non-stationarity of the problem are characterized by two regularity measures that reflect changes in the objective functions and the evolving optimal solutions. Specifically, \( q_t \) characterizes the maximum discrepancy between the gradients of local objective functions across agents at two consecutive time steps, while \( p_t \) quantifies the variation between successive optimal solutions. These measures are defined as follows
\begin{align}
& q_t := \sup_{i \in \mathcal{V}} \sup_{x \in \mathbb{R}^d} \left\| \nabla f_{i,t+1}(x) - \nabla f_{i,t}(x) \right\|, \label{qt}\\
& p_t := \left\| x_{t+1}^* - x_t^* \right\| \label{pt}.
\end{align}

We impose the following standard assumptions on the loss functions.
\begin{ass}
\label{ass:strongtu}
The global objective function \(f_t(x)\) is \(\mu\)-strongly convex, i.e., for any \(x, y \in \mathbb{R}^d\), it holds that
\begin{equation}
\langle \nabla f_t(x) - \nabla f_t(y), x - y \rangle \geq \mu \| x - y \|^2,
\end{equation}
where \(\mu > 0\) is the strong convexity parameter.
\end{ass}

\begin{ass}
\label{ass:Lg}
For any agent \(i \in \mathcal{V}\), the stochastic gradient estimator is \(L_g\)-Lipschitz continuous in the mean square sense. That is, for some constant \(L_g > 0\) and any \(x, y \in \mathbb{R}^d\), the following inequality holds 
\begin{equation}
\label{Lipsch}
\mathbb{E} \left[ \|\nabla \hat{f}_{i,t}(x; \xi_{i,t}) - \nabla \hat{f}_{i,t}(y; \xi_{i,t})  \|^2 \right] \leq L_g^2 \|x - y\|^2 .
\end{equation}
\end{ass}

Let~\(\mathcal{F}_t\) denote the \(\sigma\)-algebra generated by \(\{\xi_{i,0}, \xi_{i,1}, \ldots, \xi_{i,t-1}\}\). The following assumption is widely adopted in distributed stochastic optimization and federated learning \cite[]{9226112,pmlr-v139-xin21a,10715643,9713700}.

\begin{ass}
\label{ass:sigma}
For any agent \(i \in \mathcal{V}\), its stochastic gradient is unbiased and has bounded variance, i.e., 
\begin{equation}
\mathbb{E} \left[ \nabla \hat{f}_{i,t}(x, \xi_{i,t}) \mid \mathcal{F}_t \right] = \nabla f_{i,t}(x),
\end{equation}
\begin{equation}
\mathbb{E} \left[ \left\| \nabla \hat{f}_{i,t}(x, \xi_{i,t}) - \nabla f_{i,t}(x) \right\|^2 \mid \mathcal{F}_t \right] \leq \sigma^2,
\end{equation}
where \(\sigma^2 \geq 0\) is a finite constant. 
\end{ass}

Under Assumptions~\ref{ass:Lg} and \ref{ass:sigma}, one can derive that $ f_{i,t}(x)$ is $L_g$-smooth, i.e., 
\begin{equation}
\|\nabla f_{i,t}(x) - \nabla f_{i,t}(y)\| \leq L_g \|x - y\|, \quad \forall x, y \in \mathbb{R}^d.
\end{equation}
Assumptions~\ref{ass:Lg} and \ref{ass:sigma} are standard in establishing the convergence of distributed stochastic optimization algorithms \cite[]{pmlr-v139-xin21a,Huang2024,liu2020optimal,Dinh2022}. 

\section{PROPOSED ALGORITHMS}

In this section, based on an improved stochastic gradient tracking scheme, a novel distributed online optimization algorithm called TV-HSGT is provided to efficiently solve the problem \eqref{problem} over a time-varying directed network.

We define \(\nabla \hat{f}_{i,t+1}(x_{i,t+1}, \xi_{i,t+1})\) and \(\nabla \hat{f}_{i,t+1}(x_{i,t}, \xi_{i,t+1})\) as the stochastic gradients evaluated at \(x_{i,t+1}\) and \(x_{i,t}\), respectively, based on the random sample \(\xi_{i,t+1}\). To reduce the variance inherent in stochastic gradient estimation, we adopt a hybrid variance-reduction approach introduced for stochastic optimization problems \cite[]{liu2020optimal,Dinh2022, pmlr-v139-xin21a}. Let \(z_{i,t}\) denote the hybrid stochastic gradient variable, which is updated as follows
\begin{align}
\label{zchange1}
   z_{i,t+1} & =  (1 - \beta) \left(z_{i,t} - \nabla \hat{f}_{i,t+1}(x_{i,t}, \xi_{i,t+1}) \right) \notag \\
   & \quad +\nabla \hat{f}_{i,t+1}(x_{i,t+1}, \xi_{i,t+1}),
\end{align}
where \(\beta \in [0,1]\) is the mixing parameter. This update rule is equivalent to
\begin{align}
\label{zchange2}
   z_{i,t+1} & = \beta \underbrace{ \nabla \hat{f}_{i,t+1}(x_{i,t+1}, \xi_{i,t+1})}_{\text{stochastic gradient}}  + (1 - \beta) \times \notag \\
   &  \underbrace{ \left(z_{i,t} +  \nabla \hat{f}_{i,t+1}(x_{i,t+1}, \xi_{i,t+1}) - \nabla \hat{f}_{i,t+1}(x_{i,t}, \xi_{i,t+1}) \right)}_{\text{stochastic recursive gradient}}.
\end{align}

When \(\beta = 1\), the method reduces to the standard stochastic gradient, while for \(\beta = 0\), it is equivalent to the stochastic recursive gradient method \cite[]{10.55553305890.3305951}. Compared to classical variance-reduction methods such as SVRG \cite[]{Defazio2014} and SAGA \cite[]{NIPS2013_ac1dd209}, this hybrid strategy offers improved convergence speed and stability\cite[]{pmlr-v139-xin21a}.

While variance reduction enhances gradient estimation stability, each agent in a distributed setting typically only accesses local information, which may not reflect the global objective direction accurately. To address this, the proposed algorithm incorporates a gradient tracking mechanism for estimating the global gradient direction. In contrast to the commonly used CTA framework \cite[]{9226112}, our algorithm employs the ATC framework, which outperforms the CTA framework with larger step-sizes \cite[]{cattivelli2009diffusion,li2024npga}. Each agent $i \in \mathcal{V}$ maintains the variables including the decision variable $x_{i,t} \in \mathbb{R}^d$, the hybrid stochastic gradient variable $z_{i,t} \in \mathbb{R}^d$, and the gradient tracking variable $y_{i,t} \in \mathbb{R}^d$. In each iteration, all agents execute the following procedures in parallel.

Each agent \(i\) sends \(x_{i,t} - \alpha y_{i,t}\) to its out-neighbors \(j \in \mathcal{N}^{\text{out}}_{i,t}\) and receives corresponding vectors from its in-neighbors \(j \in \mathcal{N}^{\text{in}}_{i,t}\), then updates its decision variable as
\begin{align}
\label{xchange1}
x_{i,t+1} = \sum_{j=1}^n [A_t]_{ij} (x_{j,t} - \alpha y_{j,t}),
\end{align}
where \(\alpha > 0\) is the step size, \(\mathcal{N}_{i,t}^{\mathrm{in}}\) and \(\mathcal{N}_{i,t}^{\mathrm{out}}\) denote the in-neighbor and out-neighbor sets of agent \(i\) at time \(t\), respectively.

Next, the agent computes the hybrid stochastic gradient \(z_{i,t+1}\) using \eqref{zchange1}. It then forms the gradient tracking increment \(y_{i,t} + z_{i,t+1} - z_{i,t}\), transmits \([B_t]_{ji}(y_{i,t} + z_{i,t+1} - z_{i,t})\) to each out-neighbor, and updates its gradient tracking variable by
\begin{align}
\label{ychange1}
y_{i,t+1} = \sum_{j=1}^n [B_t]_{ij} \left( y_{j,t} + z_{j,t+1} - z_{j,t} \right).
\end{align}
The detailed execution steps are presented in Algorithm \ref{alg:reactmomentum}.

\begin{algorithm}[t]
\caption{Hybrid Stochastic Gradient Tracking over Time-Varying Directed Networks (TV-HSGT)}
\label{alg:reactmomentum}
\begin{algorithmic}[1]
\Require For each agent \(i \in \mathcal{V}\), initialize \(x_{i,0} \in \mathbb{R}^d\), set \(z_{i,0} = \nabla \hat{f}_{i,0}(x_{i,0}, \xi_{i,0})\), \(y_{i,0} = z_{i,0}\); choose \(\alpha > 0\), \(\beta \in [0,1)\).
\For{\(t = 0,1,2,\ldots,T-1\)}
    \For{each agent \(i = 1, \ldots, n\) in parallel}
        \State Send \(x_{i,t} - \alpha y_{i,t}\) to out-neighbors \(j \in \mathcal{N}^{\text{out}}_{i,t}\), receive corresponding data from in-neighbors.
        \State Update decision variable:
        \[
        x_{i,t+1} = \sum_{j=1}^n [A_t]_{ij} \left(x_{j,t} - \alpha y_{j,t} \right)
        \]
        \State Compute hybrid stochastic gradient:
        \begin{align*}
        z_{i,t+1} &=  (1 - \beta) \left(z_{i,t} - \nabla \hat{f}_{i,t+1}(x_{i,t}, \xi_{i,t+1}) \right) \\
        & \quad +\nabla \hat{f}_{i,t+1}(x_{i,t+1}, \xi_{i,t+1})  
        \end{align*}
        \State Transmit \([B_t]_{ji}(y_{i,t} + z_{i,t+1} - z_{i,t})\) to out-neighbors, receive updates from in-neighbors.
        \State Update gradient tracking variable:
        \[
        y_{i,t+1} = \sum_{j=1}^n [B_t]_{ij} \left( y_{j,t} + z_{j,t+1} - z_{j,t} \right)
        \]
    \EndFor
\EndFor
\end{algorithmic}
\end{algorithm}

The iterative updates rely on two non-negative weight matrices \(A_t\) and \(B_t\), consistent with the structure of the directed graph \(\mathcal{G}_t\). These matrices satisfy
\begin{align*}
[A_t]_{ij} > 0, \, & \forall j \in \mathcal{N}_{i,t}^{\mathrm{in}} \cup \{i\}; \quad [A_t]_{ij} = 0, \, \forall j \notin \mathcal{N}_{i,t}^{\mathrm{in}} \cup \{i\},\\
[B_t]_{ji} > 0, \, & \forall j \in \mathcal{N}_{i,t}^{\mathrm{out}} \cup \{i\}; \quad [B_t]_{ji} = 0, \, \forall j \notin \mathcal{N}_{i,t}^{\mathrm{out}} \cup \{i\}.
\end{align*}

The following introduces the assumptions related to the time-varying communication networks.
\begin{ass}
\label{ass:strong-connectivity}
For any $t \ge 0$, the directed graph \(\mathcal{G}_t\) is strongly connected, and each node \(i \in \mathcal{V}\) has a self-loop, i.e., the edge \((i,i)\) exists.
\end{ass}

Assumption~\ref{ass:strong-connectivity} can be relaxed to the setting of a periodically strongly connected graph sequence. Specifically, if there exists a positive integer \(C \geq 1\) such that for any \(t \geq 0\), the union of edge sets
\(
\mathcal{E}_t^C := \bigcup_{i = tC}^{(t+1)C - 1} \mathcal{E}_i
\)
forms a strongly connected graph over \(C\) consecutive iterations, then the sequence is said to be \(C\)-strongly connected.

Each agent \(i\) independently determines the values of \([A_t]_{ij}\) for its in-neighbors \(j \in \mathcal{N}_{i,t}^{\mathrm{in}}\), while the corresponding values of \([B_t]_{ij}\) are determined by its out-neighbors. We further impose the following assumptions on the matrices \(A_t\) and \(B_t\).

\begin{ass}
\label{ass:at}
For any $t \ge 0$, \(A_t\) is row-stochastic associated with \(\mathcal{G}_t\), i.e., \(A_t \mathbf{1} = \mathbf{1}\), and for some constant \(a > 0\), it satisfies
\[
\min\nolimits^{+}(A_t) \geq a, \quad \forall t \geq 0,
\]
where \(\min\nolimits^{+}(A_t)\) denotes the smallest positive entry in \(A_t\).
\end{ass}

\begin{ass}
\label{ass:bt}
For any $t \ge 0$, \(B_t\) is column-stochastic associated with \(\mathcal{G}_t\), i.e., \(\mathbf{1}^\top B_t = \mathbf{1}^\top\), and for some constant \(b > 0\), it satisfies
\[
\min\nolimits^{+}(B_t) \geq b, \quad \forall t \geq 0,
\]
where \(\min\nolimits^{+}(B_t)\) denotes the smallest positive entry in \(B_t\).
\end{ass}

\section{CONVERGENCE ANALYSIS}\label{section-4}
This section presents a theoretical convergence analysis of the proposed TV-HSGT algorithm. We first provide several necessary preliminary lemmas in Subsection \ref{sec-4-1}, and then give the main theoretical results in Subsection \ref{sec-4-2}.

\subsection{Preliminary Lemmas}\label{sec-4-1}
Prior to conducting the convergence analysis, this subsection introduces several auxiliary lemmas that lay the theoretical foundation for the subsequent main results.

\begin{lemma}\cite[]{qu2017harnessing}
\label{lemma:mu}
Suppose that $f(x)$ is $\mu$-strongly convex and $L_g$-smooth. Then, for any $x \in \mathbb{R}^d$, if the step size satisfies $0 < \alpha < \frac{2}{ \mu+L_g}$, the following inequality holds
\begin{align}
\label{eq:qu-gradient-step-contraction}
\left\| x - \alpha \nabla f(x) - x^* \right\| \leq (1 - \mu \alpha) \left\| x - x^* \right\|,
\end{align}
where $x^*$ denotes the optimal solution to $f(x)$.
\end{lemma}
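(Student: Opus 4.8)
The plan is to establish the stronger squared inequality $\|x-\alpha\nabla f(x)-x^*\|^2 \le (1-\mu\alpha)^2\|x-x^*\|^2$ and then take square roots, which is legitimate because $1-\mu\alpha>0$ whenever $\alpha<\tfrac{2}{\mu+L_g}\le\tfrac1\mu$ (using $\mu\le L_g$, which is forced by strong convexity plus smoothness). Since $x^*$ minimizes $f$ we have $\nabla f(x^*)=0$, so I would first write
\[
x-\alpha\nabla f(x)-x^* = (x-x^*) - \alpha\big(\nabla f(x)-\nabla f(x^*)\big),
\]
and expand to get
\[
\|x-\alpha\nabla f(x)-x^*\|^2 = \|x-x^*\|^2 - 2\alpha\langle \nabla f(x)-\nabla f(x^*),\, x-x^*\rangle + \alpha^2\|\nabla f(x)-\nabla f(x^*)\|^2 .
\]

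The key ingredient is the strong co-coercivity estimate for a $\mu$-strongly convex and $L_g$-smooth function: for all $x,y\in\mathbb{R}^d$,
\[
\langle \nabla f(x)-\nabla f(y),\, x-y\rangle \ge \frac{\mu L_g}{\mu+L_g}\|x-y\|^2 + \frac{1}{\mu+L_g}\|\nabla f(x)-\nabla f(y)\|^2 .
\]
I would either cite this as classical (Nesterov) or recall its short proof: the map $h(x):=f(x)-\tfrac{\mu}{2}\|x\|^2$ is convex and $(L_g-\mu)$-smooth, so the Baillon--Haddad co-coercivity $\langle \nabla h(x)-\nabla h(y),x-y\rangle \ge \tfrac{1}{L_g-\mu}\|\nabla h(x)-\nabla h(y)\|^2$ applies, and substituting $\nabla h=\nabla f-\mu\,\mathrm{id}$ and rearranging gives the displayed bound (the degenerate case $L_g=\mu$ is checked directly, since then $\nabla f$ is affine with slope $\mu$). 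Applying this with $y=x^*$ and inserting it into the expansion yields
\[
\|x-\alpha\nabla f(x)-x^*\|^2 \le \Big(1-\tfrac{2\alpha\mu L_g}{\mu+L_g}\Big)\|x-x^*\|^2 + \Big(\alpha^2-\tfrac{2\alpha}{\mu+L_g}\Big)\|\nabla f(x)-\nabla f(x^*)\|^2 .
\]

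Next, since $0<\alpha<\tfrac{2}{\mu+L_g}$ the coefficient $\alpha^2-\tfrac{2\alpha}{\mu+L_g}$ is strictly negative, so I would bound the last term from above by substituting the lower estimate $\|\nabla f(x)-\nabla f(x^*)\|^2\ge\mu^2\|x-x^*\|^2$, which comes from strong convexity plus Cauchy--Schwarz ($\mu\|x-x^*\|^2\le\langle\nabla f(x)-\nabla f(x^*),x-x^*\rangle\le\|\nabla f(x)-\nabla f(x^*)\|\,\|x-x^*\|$). The cross terms telescope, $-\tfrac{2\alpha\mu L_g}{\mu+L_g}-\tfrac{2\alpha\mu^2}{\mu+L_g}=-2\alpha\mu$, so the right-hand side collapses to $(1-2\alpha\mu+\mu^2\alpha^2)\|x-x^*\|^2=(1-\mu\alpha)^2\|x-x^*\|^2$, and taking square roots finishes the proof. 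The only genuine obstacle is justifying the strong co-coercivity inequality; everything else is bookkeeping, with the one subtlety that one must exploit the \emph{negative} sign of the gradient-norm coefficient (which holds precisely because $\alpha<\tfrac{2}{\mu+L_g}$) rather than simply discarding that term, since discarding it would only give the weaker factor $\sqrt{1-\tfrac{2\alpha\mu L_g}{\mu+L_g}}$, which need not dominate $1-\mu\alpha$ for small $\alpha$.
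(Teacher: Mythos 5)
Your proof is correct. The paper does not actually prove this lemma---it is imported verbatim from the cited reference \cite{qu2017harnessing}---so there is no in-paper argument to compare against; your derivation is the standard one for this classical contraction bound. The two load-bearing steps are both handled properly: the strong co-coercivity inequality $\langle \nabla f(x)-\nabla f(y),x-y\rangle \ge \tfrac{\mu L_g}{\mu+L_g}\|x-y\|^2+\tfrac{1}{\mu+L_g}\|\nabla f(x)-\nabla f(y)\|^2$ (Nesterov's Theorem 2.1.12, correctly reduced to Baillon--Haddad for $h=f-\tfrac{\mu}{2}\|\cdot\|^2$), and the observation that the coefficient $\alpha^2-\tfrac{2\alpha}{\mu+L_g}$ is negative under the stated step-size condition, so that the lower bound $\|\nabla f(x)-\nabla f(x^*)\|\ge \mu\|x-x^*\|$ can be substituted to make the terms collapse to $(1-\mu\alpha)^2\|x-x^*\|^2$; your side remarks that $\mu\le L_g$ guarantees $1-\mu\alpha>0$, and that simply discarding the gradient-norm term would yield a strictly weaker factor, are both accurate.
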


\begin{lemma}\cite[]{liao2022compressed}
\label{lemmazeta}
For any integer $k \geq 1$ and any set of vectors $\mathbf{m}_i \in \mathbb{R}^{n \times d}$, it holds that
\begin{align}
\label{eq:sum-square-bound}
\left\| \sum_{i=1}^{k} \mathbf{m}_i \right\|^2 \leq k \sum_{i=1}^{k} \| \mathbf{m}_i \|^2.
\end{align}
Moreover, for any constant $\zeta > 1$, we have
\begin{align}
\label{eq:zeta-bound}
\left\| \sum_{i=1}^{k} \mathbf{m}_i \right\|^2 \leq \zeta \| \mathbf{m}_1 \|^2 + \frac{(k-1)\zeta}{\zeta -1} \sum_{i=2}^{k} \| \mathbf{m}_i \|^2.
\end{align}
\end{lemma}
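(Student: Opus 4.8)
The plan is to prove the two inequalities in turn, deriving the second from the first. For \eqref{eq:sum-square-bound}, I would invoke the convexity of the map $v \mapsto \|v\|^2$ (equivalently, the Cauchy--Schwarz inequality). Writing $\sum_{i=1}^{k}\mathbf{m}_i = k\cdot\tfrac{1}{k}\sum_{i=1}^{k}\mathbf{m}_i$ and applying Jensen's inequality yields $\bigl\|\tfrac{1}{k}\sum_{i=1}^{k}\mathbf{m}_i\bigr\|^2 \le \tfrac{1}{k}\sum_{i=1}^{k}\|\mathbf{m}_i\|^2$; multiplying through by $k^2$ gives the claim. Equivalently, one can expand the square and bound each cross term via $\langle \mathbf{m}_i,\mathbf{m}_j\rangle \le \tfrac12(\|\mathbf{m}_i\|^2+\|\mathbf{m}_j\|^2)$, which produces the factor $k$ directly.

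For \eqref{eq:zeta-bound}, I would first split off the leading term, $\sum_{i=1}^{k}\mathbf{m}_i = \mathbf{m}_1 + \sum_{i=2}^{k}\mathbf{m}_i$, and apply Young's inequality in the form $\|a+b\|^2 \le (1+\varepsilon)\|a\|^2 + (1+\varepsilon^{-1})\|b\|^2$ with the choice $\varepsilon = \zeta-1>0$, so that $1+\varepsilon=\zeta$ and $1+\varepsilon^{-1}=\tfrac{\zeta}{\zeta-1}$. This gives $\bigl\|\sum_{i=1}^{k}\mathbf{m}_i\bigr\|^2 \le \zeta\|\mathbf{m}_1\|^2 + \tfrac{\zeta}{\zeta-1}\bigl\|\sum_{i=2}^{k}\mathbf{m}_i\bigr\|^2$. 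I would then apply \eqref{eq:sum-square-bound} to the $(k-1)$-term sum $\sum_{i=2}^{k}\mathbf{m}_i$ to obtain $\bigl\|\sum_{i=2}^{k}\mathbf{m}_i\bigr\|^2 \le (k-1)\sum_{i=2}^{k}\|\mathbf{m}_i\|^2$, and substituting back produces exactly \eqref{eq:zeta-bound}. The degenerate case $k=1$ is checked separately: the residual sum is empty and the bound collapses to $\|\mathbf{m}_1\|^2 \le \zeta\|\mathbf{m}_1\|^2$, which holds since $\zeta>1$.

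I do not anticipate any genuine obstacle: both bounds are elementary consequences of convexity of the squared norm and of Young's inequality. The only points requiring care are selecting the Young parameter so that the resulting constants match the prescribed $\zeta$ and $\tfrac{(k-1)\zeta}{\zeta-1}$ exactly, and verifying the small-$k$ edge cases; the lemma serves purely as a bookkeeping device for splitting aggregated error terms in the subsequent dynamic-regret analysis.
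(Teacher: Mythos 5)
Your proposal is correct: Jensen's inequality (or expanding the square and bounding cross terms) gives \eqref{eq:sum-square-bound}, and Young's inequality with $\varepsilon=\zeta-1$ followed by an application of \eqref{eq:sum-square-bound} to the $(k-1)$-term tail yields exactly the constants $\zeta$ and $\tfrac{(k-1)\zeta}{\zeta-1}$ in \eqref{eq:zeta-bound}. The paper itself gives no proof of this lemma --- it is imported by citation --- so there is nothing to compare against, but your argument is the standard one and is complete, including the $k=1$ edge case.
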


\begin{lemma}\cite[]{nguyen2023distributed}
\label{lem:average-gradient-diff}
Suppose that $f_{i,t}$ is $L_g$-smooth. Then, the following inequality holds
\begin{align}
\label{eq:weighted-gradient-bound}
\left\| h_t(\mathbf{x}_t) - \nabla f_t(\hat{x}_t) \right\| \leq \frac{L_g}{\sqrt{n}} \left\| \mathbf{x}_t - \hat{\mathbf{x}}_t \right\|,
\end{align}
where $h_t(\mathbf{x}_t) := \frac{1}{n} \sum_{i=1}^n \nabla f_{i,t}(x_{i,t})$, $\hat{x}_t := \sum_{i=1}^n [\phi_t]_i x_{i,t}$, $\mathbf{x}_t = [x_{1,t},\ x_{2,t},\ \dots,\ x_{n,t}]^\top \in \mathbb{R}^{n\times d}$, $\mathbf{x}_t = \mathbf{1}_n \otimes \hat{x}_t^\top$ and $\phi_t$ is a stochastic vector.
\end{lemma}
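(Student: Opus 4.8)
The plan is to exploit the fact that the global objective decomposes as $f_t = \frac1n\sum_{i=1}^n f_{i,t}$, so that $\nabla f_t(\hat{x}_t) = \frac1n\sum_{i=1}^n \nabla f_{i,t}(\hat{x}_t)$, and hence the quantity to be bounded collapses to a single average of per-agent gradient differences:
\[
h_t(\mathbf{x}_t) - \nabla f_t(\hat{x}_t) = \frac1n\sum_{i=1}^n\bigl(\nabla f_{i,t}(x_{i,t}) - \nabla f_{i,t}(\hat{x}_t)\bigr).
\]
First I would apply the triangle inequality to pull the norm inside the sum, giving $\bigl\|h_t(\mathbf{x}_t) - \nabla f_t(\hat{x}_t)\bigr\| \le \frac1n\sum_{i=1}^n\bigl\|\nabla f_{i,t}(x_{i,t}) - \nabla f_{i,t}(\hat{x}_t)\bigr\|$.

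Next I would invoke the $L_g$-smoothness of each $f_{i,t}$ (which holds under Assumptions~\ref{ass:Lg}--\ref{ass:sigma}, as noted in the excerpt) to bound each term by $L_g\|x_{i,t}-\hat{x}_t\|$, yielding the estimate $\frac{L_g}{n}\sum_{i=1}^n\|x_{i,t}-\hat{x}_t\|$. Then I would apply the Cauchy--Schwarz inequality in the form $\sum_{i=1}^n\|x_{i,t}-\hat{x}_t\| \le \sqrt{n}\,\bigl(\sum_{i=1}^n\|x_{i,t}-\hat{x}_t\|^2\bigr)^{1/2}$, and observe that the right-hand factor is exactly the Frobenius norm $\|\mathbf{x}_t-\hat{\mathbf{x}}_t\|$, since $\hat{\mathbf{x}}_t = \mathbf{1}_n\otimes\hat{x}_t^\top$ stacks $n$ copies of $\hat{x}_t$. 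Collecting the factors $\frac{L_g}{n}\cdot\sqrt{n} = \frac{L_g}{\sqrt{n}}$ gives the claimed bound.

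There is essentially no serious obstacle here: the argument is a short chain of a decomposition step, the triangle inequality, Lipschitz continuity of the gradients, and Cauchy--Schwarz. The only point worth stating carefully is the bookkeeping of the normalization constant, namely that the $\frac1n$ from the averaging combines with the $\sqrt{n}$ loss from Cauchy--Schwarz to produce the $\frac{1}{\sqrt n}$ scaling, and that the stacked vector $\hat{\mathbf{x}}_t$ makes $\|\mathbf{x}_t-\hat{\mathbf{x}}_t\|^2 = \sum_{i=1}^n\|x_{i,t}-\hat{x}_t\|^2$; note that the weights $[\phi_t]_i$ never enter the bound explicitly, since $\hat{x}_t$ is treated simply as a fixed point of $\mathbb{R}^d$ at which all local gradients are evaluated.
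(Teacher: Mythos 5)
Your proof is correct: the decomposition $h_t(\mathbf{x}_t)-\nabla f_t(\hat{x}_t)=\frac{1}{n}\sum_{i=1}^n(\nabla f_{i,t}(x_{i,t})-\nabla f_{i,t}(\hat{x}_t))$, followed by the triangle inequality, $L_g$-smoothness, and Cauchy--Schwarz, is exactly the standard argument for this bound, and your bookkeeping of the $\frac{1}{n}\cdot\sqrt{n}=\frac{1}{\sqrt{n}}$ factor and the identification $\|\mathbf{x}_t-\hat{\mathbf{x}}_t\|^2=\sum_{i=1}^n\|x_{i,t}-\hat{x}_t\|^2$ are both right. The paper itself states this lemma as a cited result from the literature without reproducing a proof, so there is nothing to compare against beyond confirming that your route is the intended one; your closing remark that the weights $[\phi_t]_i$ play no role once $\hat{x}_t$ is fixed is also accurate.
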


\begin{lemma}\cite[]{nguyen2022distributed}
\label{lem:weighted-variance}
Give a set of vectors $\{u_i \}_{i \in \mathcal{V}}\subset \mathbb{R}^d$ and nonnegative weights $\{\gamma_i\}_{i \in \mathcal{V}}\subset\mathbb{R}$ satisfying $\sum_{i=1}^n \gamma_i = 1$. Then, for any $\nu \in \mathbb{R}^d$, the following identity holds
\begin{align*}
\| \sum_{i=1}^n \gamma_i u_i - \nu \|^2 
= \sum_{i=1}^n \gamma_i \|u_i - \nu\|^2 - \sum_{i=1}^n \gamma_i \| u_i - \sum_{j=1}^n \gamma_j u_j \|^2.
\end{align*}
\end{lemma}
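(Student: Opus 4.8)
The plan is to recognize this as the familiar bias–variance (analysis-of-variance) decomposition and to prove it by completing the square around the weighted mean. First I would introduce the shorthand $\bar u := \sum_{j=1}^n \gamma_j u_j$ for the weighted average, so that the claimed identity reads
\[
\|\bar u - \nu\|^2 = \sum_{i=1}^n \gamma_i \|u_i - \nu\|^2 - \sum_{i=1}^n \gamma_i \|u_i - \bar u\|^2 .
\]

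Next I would split, for each index $i$, the vector as $u_i - \nu = (u_i - \bar u) + (\bar u - \nu)$ and expand the squared norm:
\[
\|u_i - \nu\|^2 = \|u_i - \bar u\|^2 + 2\langle u_i - \bar u,\ \bar u - \nu\rangle + \|\bar u - \nu\|^2 .
\]
Then I would multiply both sides by $\gamma_i$ and sum over $i = 1,\dots,n$. The cross term contributes $2\langle \sum_{i=1}^n \gamma_i(u_i - \bar u),\ \bar u - \nu\rangle$; since $\sum_{i=1}^n \gamma_i(u_i - \bar u) = \bar u - \big(\sum_{i=1}^n \gamma_i\big)\bar u = 0$ by the normalization $\sum_{i=1}^n \gamma_i = 1$, this term vanishes. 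The constant term contributes $\big(\sum_{i=1}^n \gamma_i\big)\|\bar u - \nu\|^2 = \|\bar u - \nu\|^2$, again by the normalization. Rearranging the resulting identity $\sum_{i=1}^n \gamma_i\|u_i - \nu\|^2 = \sum_{i=1}^n \gamma_i\|u_i - \bar u\|^2 + \|\bar u - \nu\|^2$ gives the claim.

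I do not expect a genuine obstacle here: the only care needed is bookkeeping — keeping track of the two places where $\sum_i \gamma_i = 1$ is used (to annihilate the cross term and to collapse the constant term), and noting that nonnegativity of the $\gamma_i$ is not actually required for the algebraic identity, only the sum-to-one condition. As an alternative, one could instead expand all three squared-norm sums directly into inner products $\langle u_i, u_j\rangle$, $\langle u_i,\nu\rangle$ and $\|\nu\|^2$ and verify the term-by-term cancellation, but the centered-decomposition argument above is shorter and more transparent.
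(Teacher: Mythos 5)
Your proof is correct: the centered decomposition $u_i-\nu=(u_i-\bar u)+(\bar u-\nu)$, the vanishing of the cross term via $\sum_i\gamma_i(u_i-\bar u)=0$, and the collapse of the constant term both rest only on $\sum_i\gamma_i=1$, exactly as you note, and the rearrangement yields the stated identity. The paper itself gives no proof of this lemma --- it is imported by citation from the referenced work --- so there is nothing to compare against; your argument is the standard variance-decomposition derivation and would serve as a self-contained justification.
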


\begin{lemma}\cite[]{nguyen2022distributed}
\label{lem:left-eigenvector}
Under Assumptions~\ref{ass:strong-connectivity} and~\ref{ass:at}, there exists a corresponding sequence of stochastic vectors $\{\phi_t\}$ such that 
\begin{align}
\label{definephi}
\phi_{t+1}^\top A_t = \phi_t^\top, \forall t\geq 0.
\end{align}
Moreover, for all $i \in \mathcal{V}$ and $t \geq 0$, it holds that
\(
[\phi_t]_i \geq \frac{a^n}{n}.
\)
\end{lemma}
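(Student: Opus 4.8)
I would split the statement into two parts that need rather different tools: existence of the sequence $\{\phi_t\}$, and the uniform lower bound on its entries.

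\emph{Existence.} The plan is not to look for a closed form (there is none in general for row-stochastic, non-doubly-stochastic matrices) but to use a compactness argument. Since $A_t\mathbf 1=\mathbf 1$ and $A_t\ge 0$, the transpose $A_t^\top$ is column-stochastic and hence maps the probability simplex $\Delta:=\{v\in\mathbb{R}^n_{\ge 0}:\mathbf 1^\top v=1\}$ into itself. For $s>t$ put $M_{t,s}:=A_t^\top A_{t+1}^\top\cdots A_{s-1}^\top(\Delta)$ and $M_{t,t}:=\Delta$; each $M_{t,s}$ is nonempty and compact (continuous image of $\Delta$), and for fixed $t$ the family $\{M_{t,s}\}_{s\ge t}$ is nonincreasing in $s$, so $K_t:=\bigcap_{s\ge t}M_{t,s}$ is nonempty and compact. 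Using continuity of $A_t^\top$, the identity $M_{t,s}=A_t^\top M_{t+1,s}$ for $s\ge t+1$, and the fact that a continuous image commutes with a decreasing intersection of compacts, I get $K_t=A_t^\top(K_{t+1})$. Then I build the sequence by picking any $\phi_0\in K_0$ and, inductively, choosing $\phi_{t+1}\in K_{t+1}$ with $A_t^\top\phi_{t+1}=\phi_t$ (possible precisely because $K_t=A_t^\top(K_{t+1})$); by construction $\phi_{t+1}^\top A_t=\phi_t^\top$ and $\phi_t\in K_t\subseteq\Delta$, so each $\phi_t$ is a stochastic vector. This is essentially Kolmogorov's absolute-probability-sequence construction; an equivalent route is to take a limit point of the backward products $A_t^\top\cdots A_{N-1}^\top\tfrac1n\mathbf 1$ as $N\to\infty$ via a diagonal argument over $t$.

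\emph{Lower bound.} From $K_t\subseteq M_{t,t+n}=\Pi_t^\top(\Delta)$ with $\Pi_t:=A_{t+n-1}A_{t+n-2}\cdots A_t$, it suffices to show that every entry of the $n$-fold product $\Pi_t$ is at least $a^n$: then, writing $\phi_t=\Pi_t^\top w$ for some $w\in\Delta$, we get $[\phi_t]_i=\sum_j[\Pi_t]_{ji}w_j\ge a^n\sum_j w_j=a^n\ge a^n/n$. To bound $[\Pi_t]_{ji}$ I would expand it as a sum over length-$n$ walks threading through the graphs $\mathcal{G}_t,\dots,\mathcal{G}_{t+n-1}$ and exhibit one admissible walk. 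By Assumption~\ref{ass:strong-connectivity} each $\mathcal{G}_s$ is strongly connected on $n$ vertices and carries all self-loops, so a reachable-set argument applies: with $R_0=\{i\}$ and $R_\ell$ the closed out-neighborhood of $R_{\ell-1}$ in the corresponding graph, self-loops give $R_{\ell-1}\subseteq R_\ell$, and strong connectivity forces $|R_\ell|\ge|R_{\ell-1}|+1$ whenever $R_{\ell-1}\ne\mathcal{V}$; hence $|R_\ell|\ge\min(\ell+1,n)$ and $R_{n-1}=\mathcal{V}$, so any target is reached in $n-1$ steps and one self-loop pads the walk to length $n$. Each traversed edge has weight at least $a$ by Assumption~\ref{ass:at}, so this single walk contributes at least $a^n$, whence $[\Pi_t]_{ji}\ge a^n$.

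\emph{Main obstacle.} The nontrivial part is the existence claim: unlike the doubly stochastic case, where $\phi_t\equiv\tfrac1n\mathbf 1$ works verbatim, there is no explicit formula here, and the real work is producing a \emph{single} sequence that is consistent across all $t$ — which is exactly what the nested-compact-set (or diagonal) construction delivers, with the verification of $K_t=A_t^\top(K_{t+1})$ being the step that uses compactness essentially. The lower bound, by contrast, is a routine reachability estimate for products of nonnegative matrices with uniformly positive entries, and one even gets the slightly stronger $[\phi_t]_i\ge a^n$.
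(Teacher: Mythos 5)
The paper does not prove this lemma at all — it is imported verbatim from \cite{nguyen2022distributed} as a preliminary — so there is no in-paper argument to compare against. Judged on its own, your proof is correct and is essentially the standard construction from the literature: the existence part is Kolmogorov's absolute-probability-sequence argument via nested nonempty compact subsets of the simplex, and the key step $K_t=A_t^\top(K_{t+1})$ (continuous image commuting with a decreasing intersection of compacts, via the finite-intersection property applied to the fibers $\{x\in M_{t+1,s}:A_t^\top x=y\}$) is handled properly. The reachability estimate is also sound: with self-loops the reachable sets are nondecreasing, strong connectivity of each $\mathcal{G}_s$ forces strict growth until saturation, so every entry of the $n$-fold backward product $\Pi_t=A_{t+n-1}\cdots A_t$ is at least $a^n$, and writing $\phi_t=\Pi_t^\top w$ with $w$ in the simplex gives $[\phi_t]_i\ge a^n$, which indeed implies the stated (weaker) bound $[\phi_t]_i\ge a^n/n$. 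One small point worth making explicit if you write this up: the positivity pattern of $[A_s]_{k\ell}$ corresponds to edges $(\ell,k)\in\mathcal{E}_s$ under the paper's in-neighbor convention, so your walk from $i$ to $j$ correctly reads the graphs in the order $\mathcal{G}_t,\dots,\mathcal{G}_{t+n-1}$ and the self-loop entries are themselves bounded below by $a$ by Assumption~\ref{ass:at}.
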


\begin{lemma}\cite[]{nedic2023ab}
\label{lem:right-eigenvector}
Let Assumptions~\ref{ass:strong-connectivity} and \ref{ass:bt} hold. Define the vector sequence $\pi_t$ by
\begin{align}
\label{definepi}
\pi_{t+1} = B_t \pi_t,  \quad \text{with initial value } \pi_0 = \mathbf{1}/n.
\end{align}
Then, for any $t \geq 0$, $\pi_t$ is a stochastic vector satisfying
\(
[\pi_t]_i \geq \frac{b^n}{n}, \forall i\in \mathcal{V}.
\)
\end{lemma}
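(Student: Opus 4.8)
The plan is to prove the two claims — that each $\pi_t$ is stochastic and that its entries are bounded below by $b^n/n$ — by induction on $t$, using only the column–stochasticity of $B_t$ (Assumption~\ref{ass:bt}) and the per-step strong connectivity with self-loops (Assumption~\ref{ass:strong-connectivity}). For stochasticity I would induct on $t$: $\pi_0=\mathbf{1}/n$ is stochastic, and if $\pi_t\ge 0$ with $\mathbf{1}^\top\pi_t=1$, then $\pi_{t+1}=B_t\pi_t\ge 0$ (since $B_t\ge 0$), while $\mathbf{1}^\top\pi_{t+1}=(\mathbf{1}^\top B_t)\pi_t=\mathbf{1}^\top\pi_t=1$. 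I would also record the elementary fact $b\le 1$ — any column of $B_t$ sums to one and contains a positive, hence $\ge b$, entry — so that $[\pi_0]_i=1/n\ge b^n/n$; and, from the self-loop bound $[B_t]_{ii}\ge b$ together with $[\pi_{t+1}]_i=\sum_j[B_t]_{ij}[\pi_t]_j\ge[B_t]_{ii}[\pi_t]_i$, one gets the crude estimate $\min_i[\pi_{t+1}]_i\ge b\min_i[\pi_t]_i$, hence $\min_i[\pi_t]_i\ge b^t/n$, which already settles the bound for $0\le t\le n$.

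For larger $t$ the crude estimate decays too fast, so I would run a flooding argument starting from an arbitrary time $s\ge 0$. Since $\mathbf{1}^\top\pi_s=1$, pick $i^\star$ with $[\pi_s]_{i^\star}\ge 1/n$, and define reachable sets $R_0=\{i^\star\}$ and $R_{k+1}=R_k\cup\{\,i:\exists\,j\in R_k,\ (j,i)\in\mathcal{E}_{s+k}\,\}$. The self-loops give $R_k\subseteq R_{k+1}$, and strong connectivity of $\mathcal{G}_{s+k}$ forces an edge to leave any nonempty proper $R_k$, so $|R_{k+1}|>|R_k|$ and therefore $R_{n-1}=\mathcal{V}$. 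In parallel, one shows by induction on $k$ that $[\pi_{s+k}]_i\ge b^k/n$ for all $i\in R_k$: a node in $R_{k+1}$ is either already in $R_k$ and inherits the bound through its self-loop, or is reached by an edge $(j,i)\in\mathcal{E}_{s+k}$ from some $j\in R_k$; in both cases the corresponding entry of $B_{s+k}$ is at least $b$, so $[\pi_{s+k+1}]_i=\sum_\ell[B_{s+k}]_{i\ell}[\pi_{s+k}]_\ell$ multiplies the bound by at least $b$. Taking $k=n-1$ gives $[\pi_{s+n-1}]_i\ge b^{n-1}/n$ for every $i$ and every $s\ge 0$, i.e.\ $[\pi_t]_i\ge b^{n-1}/n\ge b^n/n$ for all $t\ge n-1$; together with the crude estimate for $t\le n-1$ this yields the claim (indeed with the slightly sharper constant $b^{n-1}/n$).

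The only non-routine step is the flooding argument: the key point is that the reachable set strictly grows at each step, which is exactly where strong connectivity — rather than mere connectivity — is needed, via the fact that a strongly connected digraph admits no nonempty proper vertex subset with no outgoing edge; and the argument must be started from an arbitrary time $s$, since the explicit forward recursion by itself controls only the first $n-1$ iterates. The remaining work — aligning the two inductions and checking the indexing of $B_t$ against the edge set of $\mathcal{G}_t$ (so that $[B_t]_{ij}\ge b$ whenever $(j,i)\in\mathcal{E}_t$) — is routine bookkeeping.
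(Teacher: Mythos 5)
Your proof is correct. Note that the paper itself does not prove this lemma --- it is imported verbatim from \cite{nedic2023ab} --- so there is no in-paper argument to compare against; your write-up is a self-contained substitute. The two-phase structure is sound: the induction for stochasticity is immediate from $\mathbf{1}^\top B_t=\mathbf{1}^\top$ and $B_t\ge 0$; the crude bound $\min_i[\pi_{t+1}]_i\ge[B_t]_{ii}\min_i[\pi_t]_i\ge b\min_i[\pi_t]_i$ correctly exploits the self-loops of Assumption~\ref{ass:strong-connectivity} to handle $t\le n-1$; and the flooding argument from an arbitrary start time $s$ is exactly the right device for $t\ge n-1$, since per-step strong connectivity forces an outgoing edge from every nonempty proper reachable set, so $|R_k|\ge k+1$ and $R_{n-1}=\mathcal{V}$. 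Your indexing convention also checks out against the paper's: $[B_t]_{ji}>0$ for $j\in\mathcal{N}^{\mathrm{out}}_{i,t}\cup\{i\}$ means precisely that $[B_t]_{ij}\ge b$ whenever $(j,i)\in\mathcal{E}_t$ or $i=j$, which is what both the crude and the flooding steps use. The bound you obtain, $b^{n-1}/n$, is in fact slightly sharper than the stated $b^{n}/n$ (using $b\le 1$), so the lemma follows.
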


If the graph sequence $\{\mathcal{G}_t\}$ satisfies the strong connectivity condition over a period of length $C > 1$, then the results of Lemmas~\ref{lem:left-eigenvector} and \ref{lem:right-eigenvector} can be extended. Specifically, for all $t \geq 0$, there exist stochastic vector sequences $\{\phi_t\}$ and $\{\pi_t\}$ such that the following equalities hold~\cite[]{nguyen2022distributed,nedic2023ab,10337617}
\begin{align*}
\phi_{t+C}^\top \left(A_{t+C-1} \cdots A_{t+1} A_t\right) = \phi_t^\top, \\
\pi_{t+C} = \left(B_{t+C-1} \cdots B_{t+1} B_t\right) \pi_t.
\end{align*}
Moreover, for all $i \in \mathcal{V}$, these vector sequences satisfy the following lower bounds
\(
[\phi_t]_i \geq \frac{a^{nC}}{n}, \quad [\pi_t]_i \geq \frac{b^{nC}}{n}.
\)

Let $\mathcal{G} = (\mathcal{V}, \mathcal{E})$ be a strongly connected directed graph, and let the weight matrices $A$ and $B$ be consistent with the structure of $\mathcal{G}$. Denote by $\mathrm{D}(\mathcal{G})$ the diameter of the graph and by $\mathrm{K}(\mathcal{G})$ its maximal edge utility \cite[]{nedic2023ab}. The following lemmas describe the contraction properties satisfied by the matrices $A$ and $B$.

\begin{lemma}\cite[]{nguyen2022distributed}
\label{le:phi}
Let $A$ be a row-stochastic matrix, $\phi$ be a stochastic vector, and $\pi$ be a nonnegative vector such that $\pi^\top A = \phi^\top$. For a set of vectors $\{ x_i \in \mathbb{R}^d\}_{i=1}^n$, define $\hat{x}_\phi = \sum_{i=1}^n \phi_i x_i$. Then, it holds that
\begin{align*}
    \sqrt{\sum_{i=1}^n \pi_i \left\| \sum_{j=1}^n A_{ij} x_j - \hat{x}_\phi \right\|^2} \leq c \sqrt{\sum_{j=1}^n \phi_j \left\| x_j - \hat{x}_\phi \right\|^2},
\end{align*}
where the scalar $c \in (0,1)$ is defined by
\begin{align*}
c = \sqrt{1 - \frac{\min(\pi) \cdot (\min^+(A))^2}{\max^2(\phi) \cdot \mathrm{D}(\mathcal{G}) \cdot \mathrm{K}(\mathcal{G})}}.
\end{align*}
\end{lemma}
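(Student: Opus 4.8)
The plan is to establish the contraction bound for the row-stochastic matrix $A$ by exploiting the self-loop structure (which guarantees positive diagonal entries, hence $\min^+(A)$ controls diagonal contributions) together with the strong connectivity of $\mathcal{G}$, which ensures that from any node one can reach any other along a directed path of length at most $\mathrm{D}(\mathcal{G})$. The quantity $\hat{x}_\phi = \sum_i \phi_i x_i$ is precisely the fixed point that the averaging operator $x \mapsto Ax$ preserves in the weighted sense, since $\pi^\top A = \phi^\top$ implies $\sum_i \pi_i (\sum_j A_{ij} x_j) = \sum_j \phi_j x_j = \hat{x}_\phi$. So the left-hand side is a weighted variance of the averaged vectors around their own weighted mean, and the goal is to show this is a strict fraction of the weighted variance of the original $\{x_j\}$.

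\textbf{First} I would reduce to a scalar/one-dimensional statement: since both sides decompose coordinatewise and the claimed inequality is homogeneous, it suffices to treat $x_j \in \mathbb{R}$, and by translation invariance (both sides are unchanged under $x_j \mapsto x_j - \hat{x}_\phi$) one may assume $\hat{x}_\phi = 0$, i.e. $\sum_j \phi_j x_j = 0$. \textbf{Then} the task is to prove
\[
\sum_{i=1}^n \pi_i \Bigl(\sum_{j=1}^n A_{ij} x_j\Bigr)^2 \leq c^2 \sum_{j=1}^n \phi_j x_j^2,
\]
with $c^2 = 1 - \dfrac{\min(\pi)\,(\min^+(A))^2}{\max^2(\phi)\,\mathrm{D}(\mathcal{G})\,\mathrm{K}(\mathcal{G})}$. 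The standard route is an ``energy decrease'' argument: using Lemma~\ref{lem:weighted-variance} with weights $A_{i\cdot}$ (which sum to $1$ by row-stochasticity) and $\nu = 0$, one writes $(\sum_j A_{ij} x_j)^2 = \sum_j A_{ij} x_j^2 - \sum_j A_{ij}(x_j - \sum_k A_{ik} x_k)^2$; summing against $\pi_i$ and using $\sum_i \pi_i A_{ij} = \phi_j$ yields
\[
\sum_i \pi_i \Bigl(\sum_j A_{ij} x_j\Bigr)^2 = \sum_j \phi_j x_j^2 - \sum_i \pi_i \sum_j A_{ij}\Bigl(x_j - \sum_k A_{ik} x_k\Bigr)^2,
\]
so everything hinges on lower-bounding the ``dissipation'' term $\sum_i \pi_i \sum_j A_{ij}(x_j - \sum_k A_{ik} x_k)^2$ by a constant multiple of $\sum_j \phi_j x_j^2$.

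\textbf{The hard part} will be this lower bound on the dissipation term, and it is where the graph-theoretic constants $\mathrm{D}(\mathcal{G})$ and $\mathrm{K}(\mathcal{G})$ enter. The idea: pick an index $\ell$ with $x_\ell$ extremal (say achieving $\max_j |x_j|$, or more carefully the weighted spread); because $\sum_j \phi_j x_j = 0$ there must be indices of both signs, so $\sum_j \phi_j x_j^2 \le \max^2(\phi) \cdot n \cdot (\text{something}) $ — more precisely one bounds $\sum_j \phi_j x_j^2 \le \max(\phi) \sum_j x_j^2$ and then relates $\sum_j x_j^2$ to pairwise differences $|x_p - x_q|^2$ over a spanning structure. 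For a strongly connected graph with self-loops, any two nodes $p,q$ are joined by a directed path of length $\le \mathrm{D}(\mathcal{G})$, and along each edge $(j,k)$ of that path the term $x_j - \sum_m A_{km} x_m$ ``sees'' the increment $x_j - x_k$ weighted by at least $A_{kj} \ge \min^+(A)$ (using the self-loop to retain an $x_k$ component); chaining these via Lemma~\ref{lemmazeta} (the $k\sum \|m_i\|^2$ inequality) over the $\le \mathrm{D}(\mathcal{G})$ path edges and summing over the $\le \mathrm{K}(\mathcal{G})$ edges that carry the ``worst'' flow produces the factor $1/(\mathrm{D}(\mathcal{G})\mathrm{K}(\mathcal{G}))$, while $\min(\pi)$ and $(\min^+(A))^2$ come from lower-bounding the outer $\pi_i$ and inner $A_{ij}$ weights. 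I would organize this as: (i) identify the maximizing coordinate and use $\sum_j \phi_j x_j = 0$ to find a node with the opposite-sign or zero-crossing behavior; (ii) connect them by a short directed path; (iii) telescope the per-edge contractions with the Cauchy–Schwarz/$k\sum$ trick; (iv) collect constants. Since the lemma is quoted verbatim from~\cite{nguyen2022distributed}, I expect the actual proof to cite or reproduce that reference's argument, and the main verification burden is checking that the self-loop assumption and the $\min^+(A)\ge a$ bound supply exactly the quantities appearing in the formula for $c$.
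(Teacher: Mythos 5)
The paper does not actually prove this lemma: it is imported verbatim from \cite{nguyen2022distributed} as a preliminary result, so there is no in-paper argument to compare against. Judged on its own terms, the first two stages of your sketch are correct and standard: translation invariance holds because $A$ is row-stochastic (so $\sum_j A_{ij}\hat{x}_\phi = \hat{x}_\phi$), the coordinatewise reduction is legitimate, and the exact identity
\begin{align*}
\sum_{i=1}^n \pi_i\Bigl(\sum_{j=1}^n A_{ij}x_j\Bigr)^2 = \sum_{j=1}^n \phi_j x_j^2 - \sum_{i=1}^n \pi_i \sum_{j=1}^n A_{ij}\Bigl(x_j - \sum_{k=1}^n A_{ik}x_k\Bigr)^2
\end{align*}
follows from Lemma~\ref{lem:weighted-variance} applied row by row together with $\pi^\top A = \phi^\top$ (which also forces $\pi$ to be stochastic, since $\pi^\top\mathbf{1} = \pi^\top A\mathbf{1} = \phi^\top\mathbf{1} = 1$). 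Up to that point the argument is complete.

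The gap is that the entire content of the lemma is the quantitative lower bound on the dissipation term, and your steps (iii)--(iv) are a plan rather than a proof. You must show that the dissipation is at least $\frac{\min(\pi)\,(\min^+(A))^2}{\max^2(\phi)\,\mathrm{D}(\mathcal{G})\,\mathrm{K}(\mathcal{G})} \sum_j \phi_j x_j^2$, and the sketch does not account for where each factor comes from. Specifically: (a) the power $(\min^+(A))^2$ arises because converting a per-edge dissipation term $(x_j - \sum_k A_{ik}x_k)^2$ into a pairwise difference $(x_j-x_i)^2$ costs one factor of $\min^+(A)$ for the edge weight $A_{ij}$ and a second for the self-loop weight $A_{ii}$; your parenthetical ``using the self-loop'' gestures at this, but the two-term split and the resulting constant are never written down; (b) the factor $\max^2(\phi)$ (rather than $\max(\phi)$) requires a specific normalization when passing from $\sum_j\phi_j x_j^2$ to the squared spread of the $x_j$ via $\sum_j\phi_j x_j=0$, which your bound $\sum_j\phi_j x_j^2\le\max(\phi)\sum_j x_j^2$ does not produce; and (c) $\mathrm{K}(\mathcal{G})$ enters only after fixing a family of shortest paths and counting how many times a single edge is reused across that family --- ``the edges that carry the worst flow'' is not a well-defined step. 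None of this makes the strategy wrong, but as written the proposal establishes the easy identity and defers exactly the inequality the lemma asserts; since the paper simply cites the source, the burden of that combinatorial estimate is not discharged anywhere in your argument.
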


\begin{lemma}\cite[]{nedic2023ab}
\label{le:pi}
Let $B$ be a column-stochastic matrix, and let $\nu$ be a stochastic vector with strictly positive elements, i.e., $\nu_i > 0$ for all $i \in \mathcal{V}$. Let $\pi = B \nu$. Then, for any set of vectors $\{ y_i \in \mathbb{R}^d\}_{i=1}^n$, it holds that
\begin{align*}
    \sqrt{\sum_{i=1}^n \pi_i \left\| \frac{1}{\pi_i} \sum_{j=1}^n B_{ij} y_j -  \sum_{j=1}^n y_j \right\|^2} \leq \tau \sqrt{\sum_{i=1}^n \nu_i \left\| \frac{y_i}{\nu_i} - \sum_{j=1}^n y_j \right\|^2},
\end{align*}
where the scalar $\tau \in (0,1)$ is given by
\begin{align*}
\tau = \sqrt{1 - \frac{\min^2(\nu) \cdot (\min^+(B))^2}{\max^2(\nu) \cdot \max(\pi) \cdot \mathrm{D}(\mathcal{G}) \cdot \mathrm{K}(\mathcal{G})}}.
\end{align*}
\end{lemma}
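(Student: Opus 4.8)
The plan is to reduce this column--stochastic estimate to a weighted--variance contraction for an auxiliary \emph{row}--stochastic matrix, and then to extract the precise coefficient $\tau$ by the graph--traversal argument that already underlies Lemma~\ref{le:phi}. First I would pass to the variables $w_i := y_i/\nu_i$ (legitimate since $\nu_i>0$), so that $\bar w := \sum_{j=1}^n y_j = \sum_{j=1}^n \nu_j w_j$ is exactly the $\nu$--weighted average of the $w_j$, and the right--hand side of the claim becomes the $\nu$--weighted variance $\sum_{i=1}^n \nu_i\|w_i-\bar w\|^2$. Setting $P := \mathrm{diag}(\pi)^{-1} B\,\mathrm{diag}(\nu)$, i.e. $P_{ij} = B_{ij}\nu_j/\pi_i$, one checks that $P$ is row--stochastic, since $\sum_j P_{ij} = \pi_i^{-1}\sum_j B_{ij}\nu_j = 1$ because $\pi = B\nu$, that $\pi^\top P = \mathbf 1^\top B\,\mathrm{diag}(\nu) = \nu^\top$ by column--stochasticity of $B$, and that $P$ has the same support as $B$ (hence is consistent with $\mathcal G$, so the same $\mathrm D(\mathcal G)$, $\mathrm K(\mathcal G)$ apply); moreover $\tfrac1{\pi_i}\sum_j B_{ij}y_j = \sum_j P_{ij}w_j =: v_i$ and $\sum_i \pi_i v_i = \bar w$. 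Hence the asserted inequality is equivalent to the contraction
\[
\sum_{i=1}^n \pi_i\,\|v_i-\bar w\|^2 \;\le\; \tau^2 \sum_{i=1}^n \nu_i\,\|w_i-\bar w\|^2 ,
\]
i.e. to the column--stochastic counterpart of Lemma~\ref{le:phi} for the triple $(P,\nu,\pi)$.

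Next I would prove this contraction directly. After the shift $w_i\mapsto w_i-\bar w$ --- which changes neither side, since $\sum_j P_{ij}=1$ --- we may assume $\sum_j\nu_j w_j=0$, hence $\sum_i\pi_i v_i=0$. Applying the weighted--variance identity of Lemma~\ref{lem:weighted-variance} with weights $\{P_{ij}\}_j$, vectors $\{w_j\}_j$ and reference point $0$, then multiplying by $\pi_i$, summing over $i$, and using $\sum_i\pi_i P_{ij}=\nu_j$, yields
\[
\sum_{i=1}^n \pi_i\|v_i\|^2 \;=\; \sum_{j=1}^n \nu_j\|w_j\|^2 \;-\; \sum_{i=1}^n \pi_i \sum_{j=1}^n P_{ij}\,\|w_j - v_i\|^2 ,
\]
so it remains to show $\sum_i \pi_i\sum_j P_{ij}\|w_j-v_i\|^2 \ge (1-\tau^2)\sum_j\nu_j\|w_j\|^2$. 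This is the heart of the matter: working along a fixed Euclidean direction and locating an index whose weighted coordinate is extremal, one connects that node to a counterbalancing one through a directed path of length at most $\mathrm D(\mathcal G)$ --- available by strong connectivity (Assumption~\ref{ass:strong-connectivity}, self--loops included) --- and uses that every positive entry of $P$ is at least $\min^{+}(B)\min(\nu)/\max(\pi)$ to transport a definite amount of mass along that path; accumulating the per--edge decrements over the at most $\mathrm K(\mathcal G)$ relevant edges produces precisely the coefficient $1-\tau^2$ with $\tau$ as stated. This last step is the standard push--pull / $AB$ machinery and follows \cite{nedic2023ab,nguyen2022distributed}.

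I expect the bookkeeping of the smallest and largest positive entries in passing from $(B,\nu,\pi)$ to $P$, together with the combinatorial path estimate, to be the main obstacle. Concretely, one must verify that the bound $\min^{+}(P)\ge \min^{+}(B)\min(\nu)/\max(\pi)$ is exactly what is needed to land on the stated $\tau$ rather than a looser coefficient, and that no spurious $\min(\pi)$ or $\max(\pi)$ factors are lost along the path argument; since here $\pi = B\nu$ is \emph{time--varying}, one cannot precondition $B$ by a single fixed similarity transform as in the static case and must re--run the estimate at each step. Note that a black--box application of Lemma~\ref{le:phi} to $(P,\nu,\pi)$ gives a valid inequality but only with a coefficient of the form $\sqrt{1-\tfrac{\min(\pi)}{\max(\pi)}(1-\tau^2)}$, which is in general $\ge\tau$ and hence strictly weaker; recovering the sharp $\tau$ is precisely why the direct graph--traversal argument is required rather than the reduction alone.
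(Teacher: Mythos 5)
The paper does not prove this lemma; it is imported verbatim from \cite{nedic2023ab} as a preliminary, so there is no in-paper argument to compare yours against. On its own terms, your reduction is correct and is essentially the route taken in the source: the substitution $w_i=y_i/\nu_i$, the auxiliary matrix $P=\mathrm{diag}(\pi)^{-1}B\,\mathrm{diag}(\nu)$ with $P\mathbf{1}=\mathbf{1}$, $\pi^\top P=\nu^\top$ and $\min^{+}(P)\ge \min^{+}(B)\min(\nu)/\max(\pi)$, the identification $v_i=\frac{1}{\pi_i}\sum_j B_{ij}y_j=\sum_j P_{ij}w_j$ with $\sum_i\pi_i v_i=\sum_j y_j$, and the application of Lemma~\ref{lem:weighted-variance} row by row to obtain
\begin{align*}
\sum_{i=1}^n \pi_i\|v_i\|^2 = \sum_{j=1}^n \nu_j\|w_j\|^2 - \sum_{i=1}^n \pi_i \sum_{j=1}^n P_{ij}\|w_j - v_i\|^2
\end{align*}
are all verifiable and correctly bookkept. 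Your observation that a black-box invocation of Lemma~\ref{le:phi} on the triple $(P,\nu,\pi)$ only yields the weaker coefficient $\sqrt{1-\tfrac{\min(\pi)}{\max(\pi)}(1-\tau^2)}$ is also accurate and is a genuinely useful remark.

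The gap is that the decisive quantitative step is never carried out: you assert, but do not prove, that
\begin{align*}
\sum_{i=1}^n \pi_i\sum_{j=1}^n P_{ij}\|w_j-v_i\|^2 \;\ge\; \bigl(1-\tau^2\bigr)\sum_{j=1}^n\nu_j\|w_j\|^2
\end{align*}
with exactly the stated $\tau$, deferring the extremal-coordinate and path-traversal argument to ``standard push--pull machinery.'' Since everything preceding this is elementary algebra, this inequality \emph{is} the lemma; the specific placement of $\min^2(\nu)$, $\max^2(\nu)$ and the single factor $\max(\pi)$ in $\tau$ depends on precisely how the per-edge mass bound $P_{ij}\ge \min^{+}(B)\min(\nu)/\max(\pi)$ interacts with the $\pi_i$-weighting of the dissipation sum and the $\mathrm{D}(\mathcal{G})\cdot\mathrm{K}(\mathcal{G})$ path accounting, and you explicitly flag that you have not verified this lands on $\tau$ rather than a looser constant. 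As written, the proposal is a correct reduction plus a citation of the result it was meant to establish, so it does not constitute an independent proof. (A minor additional point: the remark about $\pi=B\nu$ being time-varying is irrelevant here, since the lemma concerns a single fixed triple $(B,\nu,\pi)$; time variation only enters when the lemma is applied along the iteration in Lemma~\ref{le:yt1pait1}.)
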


\subsection{Main Results}\label{sec-4-2}

This subsection establishes the key theoretical results on the convergence of the proposed algorithm. To simplify the mathematical exposition, we uniformly use the notation $\mathbb{E}[\cdot]$ to denote the expectation operator throughout the subsequent proofs and derivations. Unless otherwise specified, all expectations are interpreted as conditional expectations with respect to the filtration $\mathcal{F}_t$, that is, we adopt the convention $\mathbb{E}[\cdot] := \mathbb{E}[\cdot \mid \mathcal{F}_t]$. The analysis focuses on bounding four critical error terms in terms of conditional expectations, which are the optimality error $\mathbb{E}[\|\hat{x}_t - x^*_t\|^2]$, the consensus error $\mathbb{E}[\|\mathbf{x}_t - \hat{\mathbf{x}}_t \|_{\phi_t}^2]$, the gradient tracking error $\mathbb{E}[S^2(\mathbf{y}_t, \pi_t)]$, and the hybrid stochastic gradient estimation error $\mathbb{E} \left[ \| \mathbf{z}_{t+1} - \nabla F_{t+1}(\mathbf{x}_{t+1}) \|^2 \right]$. Here, the consensus error is measured by the weighted norm $\|\mathbf{x}_t - \hat{\mathbf{x}}_t \|_{\phi_t}$, and the gradient tracking deviation is quantified by $S(\mathbf{y}_t, \pi_t)$, which are defined as follows 
\begin{align}
\|\mathbf{x}_t - \hat{\mathbf{x}}_t \|_{\phi_t}
&= \sqrt{ \sum_{i=1}^n [\phi_t]_i \|x_{i,t} - \hat{x}_t\|^2 }, \label{eq:gongshiwucha} \\
S(\mathbf{y}_t, \pi_t) 
&= \sqrt{ \sum_{i=1}^n [\pi_t]_i \left\| \frac{y_{i,t}}{[\pi_t]_i} -  \sum_{j=1}^n y_{j,t} \right\|^2 }, \label{eq:grad-track-error}
\end{align}
where $\hat{x}_t := \sum_{i=1}^n [\phi_t]_i x_{i,t}$ represents the weighted average of local decision variables. The stochastic weight sequences $\{\phi_t\}$ and $\{\pi_t\}$ are defined by equations~\eqref{definephi} and~\eqref{definepi}, respectively. Moreover, $x^*_t$ denotes the optimal solution to problem~\eqref{problem} at time $t$. In the later analysis, we denote $\mathbf{x}_t = [x_{1,t},\ x_{2,t},\ \dots,\ x_{n,t}]^\top \in \mathbb{R}^{n\times d}$ (same to $\mathbf{y}_t$ and $\mathbf{z}_t$), $\mathbf{\hat{x}}_t = \mathbf{1}_n \otimes \hat{x}_t^\top$, $\mathbf{x}_t^* = \mathbf{1}_n \otimes (x_t^*)^\top$, $\nabla F_{t}(\mathbf{x}_{t})=[\nabla f_{1,t}(x_{i,t}),\ \nabla f_{2,t}(x_{2,t}),\ \dots,\ \nabla f_{n,t}(x_{n,t})]^\top$, and $h_t(\mathbf{x}_t) := \frac{1}{n} \sum_{i=1}^n \nabla f_{i,t}(x_{i,t})$. 

To facilitate the convergence analysis of the proposed algorithm under time-varying directed topologies, we introduce a set of auxiliary parameters: $\kappa_t \geq 1$, $\varphi_t \geq 1$, $\gamma_t \in (0,1]$, $\psi_t > 0$, $\tau_t \in (0,1)$, $c_t \in (0,1)$, $\nu_t > 0$, and $\zeta_t > 0$. These quantities are defined as follows
\begin{align}
\label{definecon}
\varphi_t &= \sqrt{\frac{1}{\min(\phi_t)}}, \, 
\kappa_t = \sqrt{\frac{1}{\min(\pi_t)}}, \, 
\gamma_t = \sqrt{ \max_{i \in \mathcal{V}} \left( [\phi_{t}]_i [\pi_t]_i \right)},\notag \\
\psi_t &= \kappa_t^2, \,
c_t = \sqrt{1 - \frac{\min(\phi_{t+1}) \, a^2}{\max^2(\phi_t) \, \mathrm{D}(\mathcal{G}_t)\mathrm{K}(\mathcal{G}_t)}}, \notag \\
\nu_t &= \frac{6L_g^2 (c \varphi_{t+1} + 1)^2 \gamma_t^2\tau^2 \psi_t}{1 - \tau} , \,
\zeta_t = \frac{6 L_g^2 (c \varphi_{t+1} + \varphi_t)^2 \tau^2 \psi_t}{1 - \tau},\notag \\
\tau_t &= \sqrt{1 - \frac{\min^2(\pi_t) \, b^2}{\max^2(\pi_t) \max(\pi_{t+1}) \mathrm{D}(\mathcal{G}_t)\mathrm{K}(\mathcal{G}_t)}}, 
\end{align} 
where $c \in (0,1)$ and $\tau \in (0,1)$ are constant upper bounds for the time-varying quantities $c_t$ and $\tau_t$, respectively. Additionally, let $\eta$ denote a uniform lower bound of the inner product $\phi_t^\top \pi_t$. Since $\phi_t$ and $\pi_t$ are stochastic vectors, it follows that $\phi_t^\top \pi_t \leq 1$, and hence $\eta \leq 1$. For notational conciseness and in order to establish uniform bounds on the algorithm’s performance, we also introduce constant upper bounds $\psi > 0$, $\kappa > 1$, and $\varphi > 1$ for $\psi_t$, $\kappa_t$, and $\varphi_t$, respectively. The bounding conditions are then given by
\begin{align}
\label{definec}
\max_{t \geq 0} c_t \leq c, \quad \max_{t \geq 0} \tau_t \leq \tau, \quad \min_{t \geq 0} \phi_{t}^\top \pi_t \geq \eta, \notag \\
\max_{t \geq 0} \psi_t \leq \psi, \quad 
\max_{t \geq 0} \kappa_t \leq \kappa, \quad 
\max_{t \geq 0} \varphi_t \leq \varphi.
\end{align}

In the following, we present Lemmas \ref{lemmayit} to \ref{zt1xt1}, which establish bounds on several key terms used in the subsequent convergence analysis. Detailed proofs can be found in the appendix.
\begin{lemma}
\label{lemmayit}
Under Assumptions~\ref{ass:Lg} and \ref{ass:bt}, the following inequality holds for all \( t \geq 0\)
\vspace{-0.8em}  
\begin{small} 
\begin{align}
 \mathbb{E} \left[ \left \| \sum_{i=1}^n y_{i,t} \right\|^2 \right] &\leq 2n \mathbb{E} \left[\left \|  \mathbf{z}_t  -  \nabla F_t(\mathbf{x}_{t})\right\|^2\right] + 2 L_g^2 n \varphi_t^2 \mathbb{E} \left[ \| \hat{x}_t - x_t^* \|^2  \right] \notag \\
 & \quad+ 2 L_g^2 n \varphi_t^2 \mathbb{E} \left[\| \mathbf{x}_t - \hat{\mathbf{x}}_t \|_{\phi_t}^2\right].
\end{align}
\end{small}
\end{lemma}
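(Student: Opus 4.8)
The plan is to exploit the conservation law built into the gradient-tracking recursion~\eqref{ychange1}. \textbf{Step 1 (tracking identity).} First I would show by induction on $t$ that $\sum_{i=1}^n y_{i,t}=\sum_{i=1}^n z_{i,t}$ for every $t\ge 0$. The base case holds since the algorithm sets $y_{i,0}=z_{i,0}$; for the inductive step, summing~\eqref{ychange1} over $i$ and using the column-stochasticity $\mathbf 1^\top B_t=\mathbf 1^\top$ of Assumption~\ref{ass:bt} cancels the mixing weights, giving $\sum_i y_{i,t+1}=\sum_i y_{i,t}+\sum_i z_{i,t+1}-\sum_i z_{i,t}$, which telescopes to the claim. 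Hence it suffices to bound $\mathbb E\big[\|\sum_i z_{i,t}\|^2\big]$.

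\textbf{Step 2 (splitting).} I would write $\sum_i z_{i,t}=\sum_i\big(z_{i,t}-\nabla f_{i,t}(x_{i,t})\big)+\sum_i\nabla f_{i,t}(x_{i,t})$ and replace the second block by $\sum_i\big(\nabla f_{i,t}(x_{i,t})-\nabla f_{i,t}(x_t^*)\big)$, using $\sum_i\nabla f_{i,t}(x_t^*)=n\nabla f_t(x_t^*)=0$, which holds because $x_t^*$ is the unconstrained minimizer of the strongly convex $f_t$. Applying the elementary inequality~\eqref{eq:sum-square-bound} with $k=2$ gives the factor-$2$ outer split, and applying it again with $k=n$ inside each block yields $\|\sum_i(z_{i,t}-\nabla f_{i,t}(x_{i,t}))\|^2\le n\|\mathbf z_t-\nabla F_t(\mathbf x_t)\|^2$, producing the first term $2n\,\mathbb E[\|\mathbf z_t-\nabla F_t(\mathbf x_t)\|^2]$, together with $\|\sum_i(\nabla f_{i,t}(x_{i,t})-\nabla f_{i,t}(x_t^*))\|^2\le nL_g^2\sum_i\|x_{i,t}-x_t^*\|^2$, the last step using the $L_g$-smoothness of each $f_{i,t}$ implied by Assumption~\ref{ass:Lg}.

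\textbf{Step 3 (sharp conversion).} It remains to control $\sum_i\|x_{i,t}-x_t^*\|^2$ by the weighted optimality and consensus errors with the constants claimed. This is the step I expect to be the main obstacle: the naive split $\|x_{i,t}-x_t^*\|^2\le 2\|x_{i,t}-\hat x_t\|^2+2\|\hat x_t-x_t^*\|^2$ already closes the argument but inflates the constants, so instead I would invoke the exact weighted bias-variance identity of Lemma~\ref{lem:weighted-variance} with weights $\gamma_i=[\phi_t]_i$ and reference $\nu=x_t^*$; since $\sum_i[\phi_t]_i x_{i,t}=\hat x_t$, this gives $\sum_i[\phi_t]_i\|x_{i,t}-x_t^*\|^2=\|\hat x_t-x_t^*\|^2+\|\mathbf x_t-\hat{\mathbf x}_t\|_{\phi_t}^2$ with $\|\cdot\|_{\phi_t}$ as in~\eqref{eq:gongshiwucha}. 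Since $[\phi_t]_i\ge\min(\phi_t)=1/\varphi_t^2$ for every $i$, it follows that $\sum_i\|x_{i,t}-x_t^*\|^2\le\varphi_t^2\big(\|\hat x_t-x_t^*\|^2+\|\mathbf x_t-\hat{\mathbf x}_t\|_{\phi_t}^2\big)$. Substituting back, keeping the outer factor $2$, and taking expectations — valid because the inequality is pathwise and $\varphi_t$ depends only on the deterministic network data — yields precisely the stated bound. The remaining work is routine bookkeeping.
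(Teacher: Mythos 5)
Your proposal is correct and follows essentially the same route as the paper: the conservation law $\sum_i y_{i,t}=\sum_i z_{i,t}$ from column-stochasticity, the split into the gradient-estimation residual plus $\sum_i(\nabla f_{i,t}(x_{i,t})-\nabla f_{i,t}(x_t^*))$, and the weighted bias–variance identity of Lemma~\ref{lem:weighted-variance} combined with $[\phi_t]_i\ge 1/\varphi_t^2$ to reach the stated constants. The only cosmetic difference is that you square first and apply Lemma~\ref{lemmazeta} with $k=2$, whereas the paper applies the triangle inequality on norms before squaring; both yield the identical bound, and your explicit induction for the tracking identity is a welcome addition the paper omits.
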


\begin{lemma}
\label{le:ypai}
Under Assumptions~\ref{ass:strong-connectivity} and \ref{ass:bt}, the following inequality holds for all \( t \geq 0\)
\vspace{-0.8em}  
\begin{small}
\begin{equation}
\label{ypai}
\begin{aligned}
\mathbb{E} \left[ \| \mathbf{y}_t \|_{\pi_t^{-1}}^2 \right] & \leq 2n \mathbb{E} \left[\left \|  \mathbf{z}_t  -  \nabla F_t(\mathbf{x}_{t})\right\|^2\right] + 2 L_g^2 n \varphi_t^2 \mathbb{E} \left[ \| \hat{x}_t - x_t^* \|^2  \right]\\
& \quad + 2 L_g^2 n \varphi_t^2 \mathbb{E} \left[\| \mathbf{x}_t - \hat{\mathbf{x}}_t \|_{\phi_t}^2\right]+\mathbb{E} \left[ S^2(\mathbf{y}_t, \pi_t) \right].
\end{aligned}
\end{equation}
\end{small}
\end{lemma}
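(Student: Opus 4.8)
The plan is to establish an exact identity: the weighted norm $\|\mathbf{y}_t\|_{\pi_t^{-1}}^2$ splits precisely into the squared norm of the aggregate $\sum_{i=1}^n y_{i,t}$ plus the gradient tracking deviation $S^2(\mathbf{y}_t,\pi_t)$, after which the claimed bound is an immediate consequence of Lemma~\ref{lemmayit}.

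First I would recall that under Assumptions~\ref{ass:strong-connectivity} and \ref{ass:bt}, Lemma~\ref{lem:right-eigenvector} ensures that $\pi_t$ is a stochastic vector with strictly positive entries, so $\sum_{i=1}^n [\pi_t]_i = 1$ and each quotient $y_{i,t}/[\pi_t]_i$ is well defined. Then I would apply the weighted variance identity of Lemma~\ref{lem:weighted-variance} with weights $\gamma_i = [\pi_t]_i$, vectors $u_i = y_{i,t}/[\pi_t]_i$, and center $\nu = 0$. Since $\sum_{i=1}^n \gamma_i u_i = \sum_{i=1}^n y_{i,t}$, the identity becomes
\[
\left\|\sum_{i=1}^n y_{i,t}\right\|^2 = \sum_{i=1}^n [\pi_t]_i \left\|\frac{y_{i,t}}{[\pi_t]_i}\right\|^2 - \sum_{i=1}^n [\pi_t]_i \left\|\frac{y_{i,t}}{[\pi_t]_i} - \sum_{j=1}^n y_{j,t}\right\|^2,
\]
where I have again used $\sum_{j=1}^n \gamma_j u_j = \sum_{j=1}^n y_{j,t}$ inside the last term. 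Recognizing the first sum on the right as $\|\mathbf{y}_t\|_{\pi_t^{-1}}^2$ (by the definition of the $\pi_t^{-1}$-weighted norm) and the second as $S^2(\mathbf{y}_t,\pi_t)$ via \eqref{eq:grad-track-error}, I obtain the exact identity
\[
\|\mathbf{y}_t\|_{\pi_t^{-1}}^2 = \left\|\sum_{i=1}^n y_{i,t}\right\|^2 + S^2(\mathbf{y}_t,\pi_t).
\]

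Taking conditional expectations on both sides, invoking Lemma~\ref{lemmayit} to bound $\mathbb{E}\big[\|\sum_{i=1}^n y_{i,t}\|^2\big]$ by $2n\,\mathbb{E}[\|\mathbf{z}_t - \nabla F_t(\mathbf{x}_t)\|^2] + 2L_g^2 n \varphi_t^2\,\mathbb{E}[\|\hat{x}_t - x_t^*\|^2] + 2L_g^2 n \varphi_t^2\,\mathbb{E}[\|\mathbf{x}_t - \hat{\mathbf{x}}_t\|_{\phi_t}^2]$, and carrying along the term $\mathbb{E}[S^2(\mathbf{y}_t,\pi_t)]$, yields exactly \eqref{ypai}. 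The only point requiring care — and the place I expect the main (mild) obstacle — is the bookkeeping in applying Lemma~\ref{lem:weighted-variance}: one must check that the vector subtracted inside the variance term is precisely $\sum_j y_{j,t}$ and not a rescaled version, which hinges on the cancellation $\sum_j [\pi_t]_j (y_{j,t}/[\pi_t]_j) = \sum_j y_{j,t}$ together with $\pi_t$ being a genuine probability vector. Everything else is direct substitution, so no further difficulty is anticipated.
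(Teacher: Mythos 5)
Your proposal is correct and follows essentially the same route as the paper: both decompose $\|\mathbf{y}_t\|_{\pi_t^{-1}}^2$ via Lemma~\ref{lem:weighted-variance} with $\gamma_i = [\pi_t]_i$, $u_i = y_{i,t}/[\pi_t]_i$, $\nu = 0$ into $\bigl\|\sum_{i} y_{i,t}\bigr\|^2 + S^2(\mathbf{y}_t,\pi_t)$, then take conditional expectations and invoke Lemma~\ref{lemmayit}. The bookkeeping point you flag (that the centering vector is exactly $\sum_j y_{j,t}$) is handled identically in the paper.
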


\begin{lemma}
\label{le:hattxing}
Under Assumptions~\ref{ass:strongtu}, \ref{ass:Lg}, \ref{ass:sigma}, and \ref{ass:strong-connectivity}, if \(0 < \alpha < \frac{2}{n(\mu +L_g)\phi_{t}^\top \pi_t}\), it holds that for all \( t \geq 0\)
\begin{align}
   & \mathbb{E} \left[ \| \hat{x}_{t+1} - x_{t+1}^* \|^2  \right]  \notag \\ 
   & \leq (1 - \mu \alpha n \phi_{t}^\top \pi_t) \mathbb{E} \left[ \| \hat{x}_t - x_t^* \|^2  \right]  + \frac{4 \alpha}{\mu n \phi_{t}^\top \pi_t} \mathbb{E} \left[ S^2(\mathbf{y}_t, \pi_t) \right] \notag \\
   & \quad+ \frac{4 \alpha (\phi_{t}^\top \pi_t)}{\mu} \mathbb{E} \left[\left \|  \mathbf{z}_t  -  \nabla F_t(\mathbf{x}_{t})\right\|^2\right]  + \frac{4}{\mu \alpha n \phi_{t}^\top \pi_t} \| x_t^* - x_{t+1}^* \|^2 \notag \\
   & \quad + \frac{4 \alpha (\phi_{t}^\top \pi_t) L_g^2 \varphi_t^2}{\mu} \mathbb{E} \left[\| \mathbf{x}_t - \hat{\mathbf{x}}_t \|_{\phi_t}^2\right].
\end{align}
\end{lemma}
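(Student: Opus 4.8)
The plan is to derive an exact one-step recursion for the weighted average $\hat{x}_t=\sum_{i=1}^n[\phi_t]_i x_{i,t}$, rewrite it as a perturbed centralized gradient step on the strongly convex $f_t$ plus the optimizer drift $x_t^*-x_{t+1}^*$, apply the contraction of Lemma~\ref{lemma:mu} to the clean part, and control each perturbation term separately.

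First I would multiply the decision update \eqref{xchange1} by $[\phi_{t+1}]_i$, sum over $i\in\mathcal{V}$, and use $\phi_{t+1}^\top A_t=\phi_t^\top$ (Lemma~\ref{lem:left-eigenvector}) to obtain the exact recursion $\hat{x}_{t+1}=\hat{x}_t-\alpha\sum_{j=1}^n[\phi_t]_j y_{j,t}$. Setting $\tilde{\alpha}:=\alpha n(\phi_t^\top\pi_t)$, the hypothesis $0<\alpha<\frac{2}{n(\mu+L_g)\phi_t^\top\pi_t}$ is exactly $0<\tilde{\alpha}<\frac{2}{\mu+L_g}$, which, since $\mu\le L_g$, also gives $\mu\tilde{\alpha}<1$. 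Next I would rewrite $\sum_j[\phi_t]_j y_{j,t}$ along a short chain: $\sum_j[\phi_t]_j y_{j,t}=(\phi_t^\top\pi_t)\sum_k y_{k,t}+T_1$ with $T_1:=\sum_j[\phi_t]_j[\pi_t]_j\big(\frac{y_{j,t}}{[\pi_t]_j}-\sum_k y_{k,t}\big)$; then $\sum_k y_{k,t}=\sum_k z_{k,t}$, the conservation law obtained by induction from column-stochasticity $\mathbf{1}^\top B_t=\mathbf{1}^\top$ (Assumption~\ref{ass:bt}), the tracking update \eqref{ychange1}, and the initialization $y_{i,0}=z_{i,0}$ in Algorithm~\ref{alg:reactmomentum}; and finally $\sum_k z_{k,t}=n\nabla f_t(\hat{x}_t)+T_3+T_2$ with $T_2:=\sum_k(z_{k,t}-\nabla f_{k,t}(x_{k,t}))$ and $T_3:=n(h_t(\mathbf{x}_t)-\nabla f_t(\hat{x}_t))$. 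This yields the decomposition
\[
\begin{aligned}
\hat{x}_{t+1}-x_{t+1}^* &= \left(\hat{x}_t-x_t^*-\tilde{\alpha}\nabla f_t(\hat{x}_t)\right)-\alpha T_1-\alpha(\phi_t^\top\pi_t)T_2 \\
&\quad-\alpha(\phi_t^\top\pi_t)T_3+\left(x_t^*-x_{t+1}^*\right).
\end{aligned}
\]

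Since $f_t$ is $\mu$-strongly convex (Assumption~\ref{ass:strongtu}) and $L_g$-smooth (from Assumptions~\ref{ass:Lg}--\ref{ass:sigma}), Lemma~\ref{lemma:mu} bounds the first bracket by $(1-\mu\tilde{\alpha})\|\hat{x}_t-x_t^*\|$. I would then apply Young's inequality $\|a+w\|^2\le(1+\theta)\|a\|^2+(1+\theta^{-1})\|w\|^2$ with $a$ the clean bracket, $w$ the sum of the other four terms, and $\theta=\frac{\mu\tilde{\alpha}}{1-\mu\tilde{\alpha}}$, so that $(1+\theta)(1-\mu\tilde{\alpha})^2=1-\mu\tilde{\alpha}$ and $1+\theta^{-1}=\frac{1}{\mu\tilde{\alpha}}$; then \eqref{eq:sum-square-bound} of Lemma~\ref{lemmazeta} with $k=4$ gives $\|w\|^2\le4\big(\alpha^2\|T_1\|^2+\alpha^2(\phi_t^\top\pi_t)^2\|T_2\|^2+\alpha^2(\phi_t^\top\pi_t)^2\|T_3\|^2+\|x_t^*-x_{t+1}^*\|^2\big)$. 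The three residuals are bounded by: $\|T_1\|\le S(\mathbf{y}_t,\pi_t)$ via Cauchy--Schwarz, $\sum_j[\phi_t]_j^2[\pi_t]_j\le\sum_j[\phi_t]_j[\pi_t]_j\le1$, and the definition \eqref{eq:grad-track-error}; $\|T_2\|^2\le n\|\mathbf{z}_t-\nabla F_t(\mathbf{x}_t)\|^2$ via \eqref{eq:sum-square-bound} with $k=n$; and $\|T_3\|^2\le nL_g^2\|\mathbf{x}_t-\hat{\mathbf{x}}_t\|^2\le nL_g^2\varphi_t^2\|\mathbf{x}_t-\hat{\mathbf{x}}_t\|_{\phi_t}^2$ via Lemma~\ref{lem:average-gradient-diff} and $\|\cdot\|^2\le\varphi_t^2\|\cdot\|_{\phi_t}^2$ (recall $\varphi_t=1/\sqrt{\min(\phi_t)}$ from \eqref{definecon}). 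Substituting $\tilde{\alpha}=\alpha n(\phi_t^\top\pi_t)$, the $n$ and $(\phi_t^\top\pi_t)^2$ factors cancel against $1/\tilde{\alpha}$, producing exactly the coefficients $\frac{4\alpha}{\mu n\phi_t^\top\pi_t}$, $\frac{4\alpha(\phi_t^\top\pi_t)}{\mu}$, $\frac{4\alpha(\phi_t^\top\pi_t)L_g^2\varphi_t^2}{\mu}$, and $\frac{4}{\mu\alpha n\phi_t^\top\pi_t}$ in the statement. Since every inequality above is almost sure, taking $\mathbb{E}[\cdot\mid\mathcal{F}_t]$ of both sides finishes the proof.

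The main obstacle is setting up the chain for $\sum_j[\phi_t]_j y_{j,t}$: one must convert a single $\phi_t$-weighted sum of the tracking variables into the scaled gradient $n(\phi_t^\top\pi_t)\nabla f_t(\hat{x}_t)$ through the $\pi_t$-rescaled tracking sum, the conservation $\sum y=\sum z$, and an $L_g$-smoothness detour, while (a) making the tracking residual collapse exactly onto $S(\mathbf{y}_t,\pi_t)$ despite the mismatch between the $\phi_t$- and $\pi_t$-weightings, (b) picking up the factor $\varphi_t$ when passing from the unweighted consensus norm to $\|\cdot\|_{\phi_t}$, and (c) grouping the five resulting terms so that the Young parameter and the $k=4$ instance of Lemma~\ref{lemmazeta} jointly reproduce the exact constant $4$ in front of each term.
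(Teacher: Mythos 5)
Your proposal is correct and follows essentially the same route as the paper's proof: the same recursion $\hat{x}_{t+1}=\hat{x}_t-\alpha\sum_j[\phi_t]_j y_{j,t}$, the same five-term decomposition (your $T_1$, $T_2$, $T_3$ are the paper's $r_4$, $r_3$, $r_2$ up to the $\alpha$-scaling), Lemma~\ref{lemma:mu} on the clean gradient step, and your Young parameter $\theta=\frac{\mu\tilde{\alpha}}{1-\mu\tilde{\alpha}}$ is exactly the paper's choice $\zeta=\frac{1}{1-\mu\alpha n\phi_t^\top\pi_t}$ in \eqref{eq:zeta-bound}, reproducing identical coefficients. The only cosmetic deviation is that you bound $T_1$ by Cauchy--Schwarz where the paper invokes Lemma~\ref{lem:weighted-variance}; both give $\|T_1\|^2\le S^2(\mathbf{y}_t,\pi_t)$.
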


\begin{lemma} 
\label{xtxhat}
Under Assumptions~\ref{ass:Lg}, \ref{ass:sigma}, and \ref{ass:strong-connectivity}, the following inequality holds for all \( t \geq 0\)
\begin{align} 
\label{1xalpha}
&\mathbb{E} \left[ \| \mathbf{x}_{t+1} - \hat{\mathbf{x}}_{t+1} \|_{\phi_{t+1}}^2  \right] \notag \\ 
& \leq \left( \frac{1 + c^2}{2} + \frac{2 \alpha^2 c^2 \gamma_t^2 (1 + c^2) L_g^2 n \varphi_t^2}{1 - c^2} \right) \mathbb{E} \left[ \|\mathbf{x}_t - \hat{\mathbf{x}}_t\|_{\phi_t}^2 \right]   \notag \\
& \quad + \frac{\alpha^2 c^2 \gamma_t^2 (1 + c^2)}{1 - c^2} \mathbb{E} \left[ S^2(\mathbf{y}_t, \pi_t) \right] \notag \\
& \quad + \frac{2 \alpha^2 c^2 \gamma_t^2 (1 + c^2) L_g^2 n \varphi_t^2}{1 - c^2} \mathbb{E} \left[ \| \hat{x}_t - x_t^* \|^2 \right]  \notag \\
& \quad+ \frac{2 \alpha^2 c^2 \gamma_t^2 (1 + c^2) n}{1 - c^2} \mathbb{E} \left[ \| \mathbf{z}_t - \nabla F_t(\mathbf{x}_t) \|^2 \right].
\end{align}
\end{lemma}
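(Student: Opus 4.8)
The plan is to start from the update rule \eqref{xchange1} in matrix form, $\mathbf{x}_{t+1} = A_t(\mathbf{x}_t - \alpha \mathbf{y}_t)$, and measure the consensus error in the weighted norm $\|\cdot\|_{\phi_{t+1}}$. The key structural fact is Lemma~\ref{le:phi}: since $\phi_{t+1}^\top A_t = \phi_t^\top$ (Lemma~\ref{lem:left-eigenvector}) and $\pi_{t+1}$ plays the role of the nonnegative vector with $\pi_{t+1}^\top A_t = \phi_{t+1}^\top A_t$? — here I would instead invoke Lemma~\ref{le:phi} with the pair $(\phi_{t+1},\phi_t)$ so that $A_t$ contracts the weighted dispersion by the factor $c$ (bounded above by the constant $c$ in \eqref{definec}). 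Concretely, writing $\mathbf{w}_t := \mathbf{x}_t - \alpha\mathbf{y}_t$ and $\hat{w}_t := \sum_i [\phi_t]_i w_{i,t} = \hat{x}_t - \alpha \sum_i[\phi_t]_i y_{i,t}$, Lemma~\ref{le:phi} gives $\|\mathbf{x}_{t+1}-\hat{\mathbf{x}}_{t+1}\|_{\phi_{t+1}}^2 \le c^2 \|\mathbf{w}_t - \hat{\mathbf{w}}_t\|_{\phi_t}^2$ (the new weighted average $\hat x_{t+1}$ equals $\hat w_t$ because $\phi_{t+1}^\top A_t = \phi_t^\top$).

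Next I would expand $\|\mathbf{w}_t - \hat{\mathbf{w}}_t\|_{\phi_t}^2 = \|(\mathbf{x}_t - \hat{\mathbf{x}}_t) - \alpha(\mathbf{y}_t - \hat{\mathbf{y}}_t^{\phi})\|_{\phi_t}^2$ using the Young-type inequality \eqref{eq:zeta-bound} of Lemma~\ref{lemmazeta} with parameter $\zeta = \tfrac{2}{1-c^2}$ (or equivalently the standard $\|a+b\|^2 \le (1+\epsilon)\|a\|^2 + (1+1/\epsilon)\|b\|^2$ with $\epsilon$ chosen so that $c^2(1+\epsilon) = \tfrac{1+c^2}{2}$, i.e. $1+\epsilon = \tfrac{1+c^2}{2c^2}$ and $1+1/\epsilon = \tfrac{1+c^2}{1-c^2}$). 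This yields $\|\mathbf{x}_{t+1}-\hat{\mathbf{x}}_{t+1}\|_{\phi_{t+1}}^2 \le \tfrac{1+c^2}{2}\|\mathbf{x}_t-\hat{\mathbf{x}}_t\|_{\phi_t}^2 + \tfrac{c^2(1+c^2)}{1-c^2}\alpha^2 \|\mathbf{y}_t - \hat{\mathbf{y}}_t^{\phi}\|_{\phi_t}^2$, which already exhibits the leading coefficient $\tfrac{1+c^2}{2}$ appearing in the statement. The remaining task is to bound the weighted dispersion of $\mathbf{y}_t$ about its $\phi_t$-average.

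For that cross term I would relate the $\phi_t$-weighted dispersion of $\mathbf{y}_t$ to the quantities controlled by Lemmas~\ref{lemmayit} and \ref{le:ypai}. The idea is $\|\mathbf{y}_t - \hat{\mathbf{y}}_t^{\phi}\|_{\phi_t}^2 \le \sum_i [\phi_t]_i \|y_{i,t} - \sum_j y_{j,t}\|^2 + (\text{terms})$, and using $[\phi_t]_i = [\phi_t]_i[\pi_t]_i / [\pi_t]_i \le \gamma_t^2 / [\pi_t]_i$ one converts this into $\gamma_t^2 S^2(\mathbf{y}_t,\pi_t)$ plus a term proportional to $\|\sum_i y_{i,t}\|^2$; the latter is then bounded by Lemma~\ref{lemmayit}. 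Substituting those bounds and collecting coefficients — the factor $2$ in front of the $n\varphi_t^2 L_g^2$ terms in Lemma~\ref{lemmayit}, the factor $\gamma_t^2$, the $\tfrac{c^2(1+c^2)}{1-c^2}\alpha^2$ prefactor — reproduces exactly the four right-hand-side terms of \eqref{1xalpha}: the $\|\mathbf{x}_t - \hat{\mathbf{x}}_t\|_{\phi_t}^2$ coefficient picks up the extra additive piece $\tfrac{2\alpha^2 c^2\gamma_t^2(1+c^2)L_g^2 n\varphi_t^2}{1-c^2}$, the $S^2(\mathbf{y}_t,\pi_t)$ term gets $\tfrac{\alpha^2 c^2\gamma_t^2(1+c^2)}{1-c^2}$, the optimality-gap term $\|\hat{x}_t - x_t^*\|^2$ gets $\tfrac{2\alpha^2 c^2\gamma_t^2(1+c^2)L_g^2 n\varphi_t^2}{1-c^2}$, and the hybrid-gradient error $\|\mathbf{z}_t - \nabla F_t(\mathbf{x}_t)\|^2$ gets $\tfrac{2\alpha^2 c^2\gamma_t^2(1+c^2)n}{1-c^2}$.

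The main obstacle I anticipate is the bookkeeping around the two different stochastic weight vectors: $S(\mathbf{y}_t,\pi_t)$ is a $\pi_t$-based dispersion of the \emph{scaled} vectors $y_{i,t}/[\pi_t]_i$, whereas the consensus recursion naturally produces a $\phi_t$-based dispersion of the \emph{unscaled} $y_{i,t}$ about $\hat{y}_t^{\phi}$. Reconciling these requires carefully using the elementary identities (Lemma~\ref{lem:weighted-variance}), the bound $[\pi_t]_i \ge b^n/n$ and $[\phi_t]_i[\pi_t]_i \le \gamma_t^2$, and being disciplined about which "average" of $\mathbf{y}_t$ is subtracted at each stage, since a wrong choice inflates constants and breaks the exact match with \eqref{1xalpha}. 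Choosing $\zeta$ (equivalently $\epsilon$) to be exactly $\tfrac{2}{1-c^2}$ so that $c^2\zeta = \tfrac{2c^2}{1-c^2}$ and the first coefficient becomes $\tfrac{1+c^2}{2}$ rather than something messier is the other delicate point; everything else is routine substitution and is deferred to the appendix.
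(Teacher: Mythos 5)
Your proposal is correct and follows essentially the same route as the paper: contraction of the $A_t$-mixed dispersion via Lemma~\ref{le:phi} with the pair $(\phi_{t+1},\phi_t)$, a Young-type split with the parameter tuned so the leading coefficient becomes $\tfrac{1+c^2}{2}$, the conversion $\|\mathbf{y}_t-\hat{\mathbf{y}}_t\|_{\phi_t}^2\le\gamma_t^2\|\mathbf{y}_t\|_{\pi_t^{-1}}^2$, and then Lemma~\ref{le:ypai} (equivalently $S^2+\|\sum_j y_{j,t}\|^2$ plus Lemma~\ref{lemmayit}) for the final four terms. The only difference is that you contract first and then split, whereas the paper splits $(A_t\mathbf{x}_t-\hat{\mathbf{x}}_t)-\alpha(A_t\mathbf{y}_t-\hat{\mathbf{y}}_t)$ first and contracts each piece; the two orders commute and yield identical constants.
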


\begin{lemma} 
\label{xt1xt}
Under Assumptions~\ref{ass:strong-connectivity} and \ref{ass:at}, the following inequality holds for all \( t \geq 0 \)
\vspace{-0.8em}  
\begin{small}
\begin{align*}
& \mathbb{E} \left[ \|\mathbf{x}_{t+1} - \mathbf{x}_t\|^2 \right]\notag \\
&\le \left( 2(c\varphi_{t+1}+\varphi_{t})^2 + 4 \alpha^2 \gamma_t^2 L_g^2 n \varphi_t^2 (c \varphi_{t+1} + 1)^2 \right)
\mathbb{E} \left[ \|\mathbf{x}_t - \hat{\mathbf{x}}_t\|_{\phi_t}^2 \right] \notag \\
&\quad + 4 \alpha^2 \gamma_t^2 (c \varphi_{t+1} + 1)^2 L_g^2 n \varphi_t^2 \mathbb{E} \left[ \| \hat{x}_t - x_t^* \|^2  \right] \notag \\
&\quad + 4 \alpha^2 \gamma_t^2 (c \varphi_{t+1} + 1)^2 n \mathbb{E} \left[\left \|  \mathbf{z}_t  -  \nabla F_t(\mathbf{x}_{t})\right\|^2\right] \notag \\
&\quad + 2 \alpha^2 \gamma_t^2 (c \varphi_{t+1} + 1)^2 \mathbb{E} \left[ S^2(\mathbf{y}_t, \pi_t) \right],
\end{align*}
\end{small}
where \(\varphi_t = \sqrt{\frac{1}{\min(\phi_t)}}\), and \(\gamma_t = \sqrt{\max_{i \in \mathcal{V}} \left( [\phi_{t}]_i [\pi_t]_i \right)}\).
\end{lemma}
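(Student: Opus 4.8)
The plan is to start from the matrix form of the decision update~\eqref{xchange1}, $\mathbf{x}_{t+1} = A_t(\mathbf{x}_t - \alpha \mathbf{y}_t)$, and to use row-stochasticity $A_t \mathbf{1}_n = \mathbf{1}_n$ (Assumption~\ref{ass:at}), which forces $A_t \hat{\mathbf{x}}_t = \hat{\mathbf{x}}_t$, so that
\[
\mathbf{x}_{t+1} - \mathbf{x}_t = A_t(\mathbf{x}_t - \hat{\mathbf{x}}_t) - (\mathbf{x}_t - \hat{\mathbf{x}}_t) - \alpha A_t \mathbf{y}_t .
\]
Applying Lemma~\ref{lemmazeta} (inequality~\eqref{eq:sum-square-bound} with $k = 2$) and then taking $\mathbb{E}[\cdot]$ splits the left-hand side into a consensus-drift part $2\,\mathbb{E}[\|A_t(\mathbf{x}_t - \hat{\mathbf{x}}_t) - (\mathbf{x}_t - \hat{\mathbf{x}}_t)\|^2]$, which will contribute the first displayed coefficient, and a gradient-tracking part $2\alpha^2\,\mathbb{E}[\|A_t \mathbf{y}_t\|^2]$, which will generate every $\alpha^2$-term. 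Since $A_t$, $\phi_t$, $\pi_t$, $\varphi_t$, $\gamma_t$, $c_t$ are deterministic, all norm manipulations below are pathwise and the expectation enters only at the end.

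For the drift part I would apply the triangle inequality together with Lemma~\ref{le:phi}, invoked with $A = A_t$, $\phi = \phi_t$ and $\pi = \phi_{t+1}$ --- admissible because $\phi_{t+1}^\top A_t = \phi_t^\top$ by Lemma~\ref{lem:left-eigenvector} --- on the centered vectors $x_{i,t} - \hat{x}_t$, whose $\phi_t$-weighted mean is $0$. Converting between the unweighted and the $\phi$-weighted norms via $\min(\phi_t) = 1/\varphi_t^2$, $\min(\phi_{t+1}) = 1/\varphi_{t+1}^2$, and using $c_t \le c$ from~\eqref{definec}, yields $\|A_t(\mathbf{x}_t - \hat{\mathbf{x}}_t)\| \le c\varphi_{t+1}\|\mathbf{x}_t - \hat{\mathbf{x}}_t\|_{\phi_t}$ and $\|\mathbf{x}_t - \hat{\mathbf{x}}_t\| \le \varphi_t\|\mathbf{x}_t - \hat{\mathbf{x}}_t\|_{\phi_t}$, hence the drift part is at most $2(c\varphi_{t+1}+\varphi_t)^2\,\mathbb{E}[\|\mathbf{x}_t - \hat{\mathbf{x}}_t\|_{\phi_t}^2]$.

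For $\|A_t \mathbf{y}_t\|^2$ I would write $A_t \mathbf{y}_t = \mathbf{1}_n m_t^\top + A_t(\mathbf{y}_t - \mathbf{1}_n m_t^\top)$ with $m_t := \sum_{j=1}^n [\phi_t]_j y_{j,t}$ (again using $A_t \mathbf{1}_n = \mathbf{1}_n$), bound the fluctuation term through Lemma~\ref{le:phi} (the $\phi_t$-weighted mean of $y_{j,t} - m_t$ is zero) with a norm conversion by $\varphi_{t+1}$, discard the resulting negative cross-term via Lemma~\ref{lem:weighted-variance} with $\nu = 0$, and compare the $\phi_t$-weighted norm with the $\pi_t^{-1}$-weighted norm using $[\phi_t]_j[\pi_t]_j \le \gamma_t^2$ from~\eqref{definecon}; this produces $\|A_t \mathbf{y}_t\|^2 \le \gamma_t^2(c\varphi_{t+1}+1)^2\|\mathbf{y}_t\|_{\pi_t^{-1}}^2$. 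Substituting Lemma~\ref{le:ypai} for $\mathbb{E}[\|\mathbf{y}_t\|_{\pi_t^{-1}}^2]$ (equivalently Lemma~\ref{lemmayit} together with the identity $\|\mathbf{y}_t\|_{\pi_t^{-1}}^2 = \|\sum_{i=1}^n y_{i,t}\|^2 + S^2(\mathbf{y}_t,\pi_t)$, which itself follows from Lemma~\ref{lem:weighted-variance}) and multiplying its four terms by $2\alpha^2\gamma_t^2(c\varphi_{t+1}+1)^2$ reproduces exactly the four $\alpha^2$-coefficients; adding the $\|\mathbf{x}_t - \hat{\mathbf{x}}_t\|_{\phi_t}^2$-contribution from the drift part gives the stated bound.

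The main obstacle is the estimate $\|A_t \mathbf{y}_t\|^2 \le \gamma_t^2(c\varphi_{t+1}+1)^2\|\mathbf{y}_t\|_{\pi_t^{-1}}^2$. Since $A_t$ is only row-stochastic it is not non-expansive in the Euclidean norm, and the weighting natural to $\mathbf{y}_t$ is $\pi_t$, the Perron-type sequence of the \emph{column}-stochastic matrices $B_t$ fixed by~\eqref{definepi}, which is unrelated to the invariant structure of $A_t$. The way around this is to carry out the entire estimate in the $\phi_{t+1}$-weighted norm, on which Lemma~\ref{le:phi} supplies the $c$-contraction toward the $\phi_t$-weighted mean, and to absorb all mismatches into the uniform constants $\varphi$, $\kappa$, $\gamma_t$; doing the bookkeeping tightly enough that exactly $(c\varphi_{t+1}+1)^2$ --- not a cruder factor such as $\varphi_{t+1}^2$ --- survives is the technical heart, and the remaining algebra is routine substitution.
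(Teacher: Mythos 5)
Your proposal is correct and follows essentially the same route as the paper's proof: the same decomposition of $\mathbf{x}_{t+1}-\mathbf{x}_t$ through $\hat{\mathbf{x}}_t$, the same use of Lemma~\ref{le:phi} (via the contraction $\|A_t\mathbf{x}_t-\hat{\mathbf{x}}_t\|_{\phi_{t+1}}\le c\|\mathbf{x}_t-\hat{\mathbf{x}}_t\|_{\phi_t}$) together with the $\varphi$-norm conversions, the same splitting of $A_t\mathbf{y}_t$ into $\hat{\mathbf{y}}_t$ plus a fluctuation controlled in the $\pi_t^{-1}$-weighted norm through $\gamma_t$, and the final substitution of Lemma~\ref{le:ypai}. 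The only cosmetic difference is that you group the two consensus terms before squaring rather than applying the triangle inequality to all three summands first; the resulting coefficients are identical.
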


\begin{lemma} 
\label{zt1zt}
Under Assumptions~\ref{ass:Lg} and \ref{ass:sigma}, the following inequality holds for all \( t \geq 0 \)
\vspace{-0.8em}  
\begin{scriptsize} 
\begin{align*}
& \mathbb{E} \left[ \| \mathbf{z}_{t+1} - \mathbf{z}_{t} \|^2  \right] \notag \\
&\leq \left[ 6 L_g^2 (c \varphi_{t+1} + \varphi_t)^2 + 12 \alpha^2 L_g^4 n \varphi_t^2 (c \varphi_{t+1} + 1)^2 \gamma_t^2 \right] \mathbb{E} \left[ \| \mathbf{x}_t - \hat{\mathbf{x}}_t \|_{\phi_t}^2 \right] \notag \\
&\quad + 12 \alpha^2 L_g^4 n \varphi_t^2 (c \varphi_{t+1} + 1)^2 \gamma_t^2 \mathbb{E} \left[ \| \hat{x}_t - x_t^* \|^2  \right] \notag \\
&\quad + \left[ 12 \alpha^2 L_g^2 n (c \varphi_{t+1} + 1)^2 \gamma_t^2 + 3 \beta^2 \right] \mathbb{E} \left[ \| \mathbf{z}_t - \nabla F_t(\mathbf{x}_t) \|^2 \right] \notag \\
&\quad + 6 \alpha^2 L_g^2 (c \varphi_{t+1} + 1)^2 \gamma_t^2 \mathbb{E} \left[ S^2(\mathbf{y}_t, \pi_t) \right] + 6 \beta^2 n q_t^2 + 6 \beta^2 n \sigma^2.
\end{align*}
\end{scriptsize}
\end{lemma}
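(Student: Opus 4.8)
The plan is to expand the increment $z_{i,t+1}-z_{i,t}$ from the hybrid update~\eqref{zchange1} into three terms whose squared norms are individually controllable, apply the crude bound~\eqref{eq:sum-square-bound} of Lemma~\ref{lemmazeta} with $k=3$, sum over the agents, and then invoke Lemma~\ref{xt1xt} for the only piece that is not already expressed through the four basic error quantities. First I would rewrite, using $z_{i,t+1}-z_{i,t}=-\beta\bigl(z_{i,t}-\nabla\hat f_{i,t+1}(x_{i,t},\xi_{i,t+1})\bigr)+\bigl(\nabla\hat f_{i,t+1}(x_{i,t+1},\xi_{i,t+1})-\nabla\hat f_{i,t+1}(x_{i,t},\xi_{i,t+1})\bigr)$ and splitting $z_{i,t}-\nabla\hat f_{i,t+1}(x_{i,t},\xi_{i,t+1})$ around $\nabla f_{i,t}(x_{i,t})$, the decomposition
\begin{align*}
z_{i,t+1}-z_{i,t}
&= \underbrace{-\beta\bigl(z_{i,t}-\nabla f_{i,t}(x_{i,t})\bigr)}_{T_1}
+\underbrace{(-\beta)\bigl(\nabla f_{i,t}(x_{i,t})-\nabla\hat f_{i,t+1}(x_{i,t},\xi_{i,t+1})\bigr)}_{T_2}\\
&\quad+\underbrace{\bigl(\nabla\hat f_{i,t+1}(x_{i,t+1},\xi_{i,t+1})-\nabla\hat f_{i,t+1}(x_{i,t},\xi_{i,t+1})\bigr)}_{T_3}.
\end{align*}

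Applying~\eqref{eq:sum-square-bound} with $k=3$, summing over $i\in\mathcal V$ and taking $\mathbb E[\cdot\mid\mathcal F_t]$ yields $\mathbb E[\|\mathbf z_{t+1}-\mathbf z_t\|^2]\le 3\sum_i\mathbb E[\|T_1\|^2]+3\sum_i\mathbb E[\|T_2\|^2]+3\sum_i\mathbb E[\|T_3\|^2]$. For the $T_1$ sum, stacking over agents gives exactly $3\beta^2\mathbb E[\|\mathbf z_t-\nabla F_t(\mathbf x_t)\|^2]$, the source of the $3\beta^2$ coefficient. For the $T_2$ sum, I would further split $\nabla f_{i,t}(x_{i,t})-\nabla\hat f_{i,t+1}(x_{i,t},\xi_{i,t+1})$ as $\bigl(\nabla f_{i,t}(x_{i,t})-\nabla f_{i,t+1}(x_{i,t})\bigr)+\bigl(\nabla f_{i,t+1}(x_{i,t})-\nabla\hat f_{i,t+1}(x_{i,t},\xi_{i,t+1})\bigr)$ and use $\|a+b\|^2\le 2\|a\|^2+2\|b\|^2$: the first piece is at most $q_t$ in norm for every $i$ by~\eqref{qt}, while the second piece has conditional second moment at most $\sigma^2$ by Assumption~\ref{ass:sigma}, so $3\sum_i\mathbb E[\|T_2\|^2]\le 6\beta^2 nq_t^2+6\beta^2 n\sigma^2$. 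For the $T_3$ sum, the mean-square Lipschitz property of Assumption~\ref{ass:Lg} gives $\mathbb E[\|T_3\|^2]\le L_g^2\mathbb E[\|x_{i,t+1}-x_{i,t}\|^2]$, hence $3\sum_i\mathbb E[\|T_3\|^2]\le 3L_g^2\mathbb E[\|\mathbf x_{t+1}-\mathbf x_t\|^2]$.

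The final step is to substitute the bound of Lemma~\ref{xt1xt} for $\mathbb E[\|\mathbf x_{t+1}-\mathbf x_t\|^2]$ into $3L_g^2\mathbb E[\|\mathbf x_{t+1}-\mathbf x_t\|^2]$ and collect like terms, adding the $3\beta^2$ contribution of $T_1$ to the coefficient of $\mathbb E[\|\mathbf z_t-\nabla F_t(\mathbf x_t)\|^2]$; multiplying Lemma~\ref{xt1xt}'s right-hand side by $3L_g^2$ reproduces precisely the $6L_g^2(c\varphi_{t+1}+\varphi_t)^2$, $12\alpha^2L_g^4 n\varphi_t^2(c\varphi_{t+1}+1)^2\gamma_t^2$, $12\alpha^2L_g^2 n(c\varphi_{t+1}+1)^2\gamma_t^2$, and $6\alpha^2L_g^2(c\varphi_{t+1}+1)^2\gamma_t^2$ factors in the statement. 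The point that needs a little care — and what I would regard as the main obstacle — is the measurability bookkeeping: $\xi_{i,t+1}$ is not $\mathcal F_t$-measurable, so Assumptions~\ref{ass:Lg} and~\ref{ass:sigma} must be invoked conditionally on $\mathcal F_{t+1}$ (under which $x_{i,t}$ and $x_{i,t+1}$ are measurable) and then reconciled with the convention $\mathbb E[\cdot]=\mathbb E[\cdot\mid\mathcal F_t]$ via the tower property. Once this is handled, the rest is routine algebra, and it is the $k=3$ split in~\eqref{eq:sum-square-bound} together with the extra two-way split inside $T_2$ that produces exactly the constants $3$, $6$, and $12$ in the claimed bound.
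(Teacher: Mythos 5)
Your proposal is correct and follows essentially the same route as the paper: the identical three-term decomposition of $z_{i,t+1}-z_{i,t}$ around $\nabla f_{i,t}(x_{i,t})$, the $k=3$ split from Lemma~\ref{lemmazeta}, the further two-way split of the noise term into a $q_t^2$ part and a $\sigma^2$ part, and the final substitution of Lemma~\ref{xt1xt}. The constants all check out, so no further comment is needed.
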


\begin{lemma} 
\label{le:yt1pait1}
Under Assumptions~\ref{ass:Lg}, \ref{ass:sigma}, and \ref{ass:strong-connectivity}, it holds that for all \( t \geq 0\)
\begin{align}\label{S-bound}
& \mathbb{E} \left[ S^2(\mathbf{y}_{t+1}, \pi_{t+1})  \right] \notag \\
& \leq \tau \mathbb{E} \left[ S^2(\mathbf{y}_t, \pi_t) \right] + \frac{\tau^2 \kappa_t^2}{1-\tau}\mathbb{E} \left[ \|\mathbf{z}_{t+1} - \mathbf{z}_t\|^2\right].
\end{align}
\end{lemma}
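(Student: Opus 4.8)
The plan is to derive the recursion \eqref{S-bound} directly from the tracking update \eqref{ychange1} combined with the contraction estimate of Lemma~\ref{le:pi}. Writing \eqref{ychange1} in stacked form gives $\mathbf{y}_{t+1} = B_t \mathbf{w}_t$ with $w_{i,t} := y_{i,t} + z_{i,t+1} - z_{i,t}$. Since $B_t$ is column-stochastic (Assumption~\ref{ass:bt}), summing over $i$ yields $\sum_{i=1}^n y_{i,t+1} = \sum_{j=1}^n w_{j,t}$; hence, with $\pi_{t+1} = B_t \pi_t$ and $\pi_t$ having strictly positive entries (Lemma~\ref{lem:right-eigenvector}), the quantity $S(\mathbf{y}_{t+1},\pi_{t+1})$ in \eqref{eq:grad-track-error} is exactly the left-hand side in Lemma~\ref{le:pi} applied with $B = B_t$, $\nu = \pi_t$, and vectors $y_j \mapsto w_{j,t}$. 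This gives
\[
S^2(\mathbf{y}_{t+1},\pi_{t+1}) \le \tau^2 \sum_{i=1}^n [\pi_t]_i \Big\| \tfrac{w_{i,t}}{[\pi_t]_i} - \sum_{j=1}^n w_{j,t} \Big\|^2,
\]
where the time-varying contraction factor $\tau_t$ from \eqref{definecon} has already been replaced by its uniform upper bound $\tau$ from \eqref{definec}.

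Next I would split $w_{i,t}$ into the tracking part $y_{i,t}$ and the increment part $z_{i,t+1}-z_{i,t}$. Setting $u_i := y_{i,t}/[\pi_t]_i - \sum_j y_{j,t}$ and $v_i := (z_{i,t+1}-z_{i,t})/[\pi_t]_i - \sum_j (z_{j,t+1}-z_{j,t})$, one has $w_{i,t}/[\pi_t]_i - \sum_j w_{j,t} = u_i + v_i$, so Young's inequality with the parameter $\varepsilon = (1-\tau)/\tau$ — chosen precisely so that $\tau^2(1+\varepsilon) = \tau$ and $\tau^2(1+\varepsilon^{-1}) = \tau^2/(1-\tau)$ — yields
\[
S^2(\mathbf{y}_{t+1},\pi_{t+1}) \le \tau \sum_{i=1}^n [\pi_t]_i \|u_i\|^2 + \frac{\tau^2}{1-\tau}\sum_{i=1}^n [\pi_t]_i \|v_i\|^2 .
\]
The first sum is precisely $S^2(\mathbf{y}_t,\pi_t)$ by definition \eqref{eq:grad-track-error}.

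For the second sum, with $g_i := z_{i,t+1}-z_{i,t}$ I would invoke the elementary identity $\sum_{i} [\pi_t]_i \| g_i/[\pi_t]_i - \sum_j g_j\|^2 = \sum_i \|g_i\|^2/[\pi_t]_i - \|\sum_j g_j\|^2$, which holds because $\pi_t$ is a stochastic vector; discarding the nonnegative term $\|\sum_j g_j\|^2$ and bounding $1/[\pi_t]_i \le 1/\min(\pi_t) = \kappa_t^2$ gives $\sum_i [\pi_t]_i \|v_i\|^2 \le \kappa_t^2 \|\mathbf{z}_{t+1}-\mathbf{z}_t\|^2$. Substituting and taking the conditional expectation $\mathbb{E}[\cdot]$ of the resulting (pointwise) inequality produces \eqref{S-bound}. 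The only points requiring care are the sum-preservation identity that makes Lemma~\ref{le:pi} applicable verbatim, the replacement $\tau_t \le \tau$, and the bookkeeping in the Young split so that $\tau$ rather than $\tau^2$ multiplies $S^2(\mathbf{y}_t,\pi_t)$; none of these is a genuine obstacle, so I expect the argument to be short.
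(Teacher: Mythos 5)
Your argument is correct and essentially reproduces the paper's proof: the paper first splits $\mathrm{diag}^{-1}(\pi_{t+1})\mathbf{y}_{t+1}-\mathbf{s}_{t+1}$ into the $y$-part and the $z$-increment part via Lemma~\ref{lemmazeta} with $\zeta=1/\tau$ and then applies the contraction Lemma~\ref{le:pi} to each piece, whereas you apply Lemma~\ref{le:pi} once to the combined vector $w_{i,t}$ and then perform the Young split with $\varepsilon=(1-\tau)/\tau$ (the same parameter choice, since $1+\varepsilon=1/\tau$), yielding the identical bound. The remaining steps — the sum-preservation identity from column-stochasticity, dropping the nonnegative $\|\sum_j g_j\|^2$ term, and the bound $1/[\pi_t]_i\le\kappa_t^2$ — match the paper's treatment of $r_2$ exactly.
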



\begin{lemma} 
\label{zt1xt1}
Under Assumptions~\ref{ass:Lg} and \ref{ass:sigma}, it holds that for all \( t \geq 0\) and $\zeta_0>0$
\vspace{-0.4cm}
\begin{small} 
\begin{align}
& \mathbb{E} \left[ \| \mathbf{z}_{t+1} - \nabla F_{t+1}(\mathbf{x}_{t+1})  \|^2 \right] \notag \\
& \leq (1 - \beta)^2(1+\zeta_0) \mathbb{E} \left[ \| \mathbf{z}_t - \nabla F_t(\mathbf{x}_t) \|^2 \right] 
+ (8+\zeta_0^{-1})n (1 - \beta)^2 q_t^2 \notag \\
& \quad + n \beta^2 \sigma^2 + 12 (1 - \beta)^2 L_g^2 \mathbb{E} \left[ \| \mathbf{x}_{t+1} - \mathbf{x}_t \|^2 \right],\label{z_bound}
\end{align}
\end{small} 
where \(q_t\) is defined in \eqref{qt}.
\end{lemma}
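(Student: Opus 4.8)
\emph{Proof outline.} The plan is to derive a one-step recursion for the stacked gradient estimation error $\mathbf{e}_t:=\mathbf{z}_t-\nabla F_t(\mathbf{x}_t)$, whose $i$-th block is $e_{i,t}:=z_{i,t}-\nabla f_{i,t}(x_{i,t})$. The crucial first step is to re-express the update \eqref{zchange1} in terms of $e_{i,t}$: substituting $z_{i,t}=e_{i,t}+\nabla f_{i,t}(x_{i,t})$ and inserting $\pm\,\nabla f_{i,t+1}(x_{i,t})$ gives, after regrouping,
\begin{align*}
e_{i,t+1}=(1-\beta)\bigl(e_{i,t}-c_{i,t}\bigr)+\beta\,\delta_{i,t+1}+(1-\beta)\,\Delta_{i,t+1},
\end{align*}
where $c_{i,t}:=\nabla f_{i,t+1}(x_{i,t})-\nabla f_{i,t}(x_{i,t})$ is a deterministic drift with $\|c_{i,t}\|\le q_t$ by \eqref{qt}, the term $\delta_{i,t+1}:=\nabla\hat{f}_{i,t+1}(x_{i,t+1},\xi_{i,t+1})-\nabla f_{i,t+1}(x_{i,t+1})$ is the fresh stochastic error at the new iterate, and $\Delta_{i,t+1}:=\bigl(\nabla\hat{f}_{i,t+1}(x_{i,t+1},\xi_{i,t+1})-\nabla\hat{f}_{i,t+1}(x_{i,t},\xi_{i,t+1})\bigr)-\bigl(\nabla f_{i,t+1}(x_{i,t+1})-\nabla f_{i,t+1}(x_{i,t})\bigr)$ is the centered stochastic-gradient difference; note that $\beta\delta_{i,t+1}+(1-\beta)\Delta_{i,t+1}$ collapses to $\delta_{i,t+1}-(1-\beta)\delta'_{i,t}$ with $\delta'_{i,t}:=\nabla\hat{f}_{i,t+1}(x_{i,t},\xi_{i,t+1})-\nabla f_{i,t+1}(x_{i,t})$.

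Next I would take the conditional expectation given $\mathcal{F}_{t+1}$. Because $x_{i,t}$, $x_{i,t+1}$, $e_{i,t}$ and $c_{i,t}$ are $\mathcal{F}_{t+1}$-measurable while $\mathbb{E}[\delta_{i,t+1}\mid\mathcal{F}_{t+1}]=\mathbb{E}[\Delta_{i,t+1}\mid\mathcal{F}_{t+1}]=0$ by unbiasedness (Assumption \ref{ass:sigma}), the cross terms between the persistent part and the stochastic part vanish, so
\begin{align*}
\mathbb{E}\bigl[\|e_{i,t+1}\|^2\mid\mathcal{F}_{t+1}\bigr]=(1-\beta)^2\|e_{i,t}-c_{i,t}\|^2+\mathbb{E}\bigl[\|\beta\delta_{i,t+1}+(1-\beta)\Delta_{i,t+1}\|^2\mid\mathcal{F}_{t+1}\bigr].
\end{align*}
For the persistent part I apply the $\zeta_0$-Young inequality (the $k=2$ case of Lemma \ref{lemmazeta}), $\|e_{i,t}-c_{i,t}\|^2\le(1+\zeta_0)\|e_{i,t}\|^2+(1+\zeta_0^{-1})\|c_{i,t}\|^2$, which after summing over $i$ and using $\sum_i\|c_{i,t}\|^2\le nq_t^2$ produces the leading $(1-\beta)^2(1+\zeta_0)\mathbb{E}\|\mathbf{e}_t\|^2$ term together with part of the $q_t^2$ term. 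For the stochastic part I use the bounded-variance bound $\mathbb{E}[\|\delta_{i,t+1}\|^2\mid\mathcal{F}_{t+1}]\le\sigma^2$ from Assumption \ref{ass:sigma} and the mean-square Lipschitz bound $\mathbb{E}[\|\Delta_{i,t+1}\|^2\mid\mathcal{F}_{t+1}]\le\mathbb{E}[\|\nabla\hat{f}_{i,t+1}(x_{i,t+1},\xi_{i,t+1})-\nabla\hat{f}_{i,t+1}(x_{i,t},\xi_{i,t+1})\|^2\mid\mathcal{F}_{t+1}]\le L_g^2\|x_{i,t+1}-x_{i,t}\|^2$ from Assumption \ref{ass:Lg} (the variance of a random vector never exceeds its second moment); combining these through the elementary estimates of Lemma \ref{lemmazeta} and summing over $i$ yields the $n\beta^2\sigma^2$ term, the $12(1-\beta)^2L_g^2\,\mathbb{E}\|\mathbf{x}_{t+1}-\mathbf{x}_t\|^2$ term, and the rest of the $q_t^2$ contribution, the numerical constants $8$ and $12$ being artefacts of the chosen grouping. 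Finally I take $\mathbb{E}[\cdot\mid\mathcal{F}_t]$ of the resulting inequality and use the tower property; since every quantity on the right-hand side is either $\mathcal{F}_{t+1}$-measurable or deterministic, this reproduces \eqref{z_bound} under the convention $\mathbb{E}[\cdot]=\mathbb{E}[\cdot\mid\mathcal{F}_t]$.

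The step I expect to be the main obstacle is the handling of the stochastic part $\beta\delta_{i,t+1}+(1-\beta)\Delta_{i,t+1}$: both $\delta_{i,t+1}$ and $\Delta_{i,t+1}$ are generated by the same sample $\xi_{i,t+1}$ and hence correlated, so their cross term does not vanish and one cannot merely add the two variance bounds. This is resolved either by expanding the square exactly using the telescoped form $\delta_{i,t+1}-(1-\beta)\delta'_{i,t}$, or by a fixed-parameter Young split whose excess factors are absorbed into the constants; the accompanying bookkeeping of which quantities lie in $\mathcal{F}_{t+1}$ versus $\mathcal{F}_t$ --- so that unbiasedness may be invoked at the right moment --- is the other point that needs care.
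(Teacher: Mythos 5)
Your proposal is correct and follows essentially the same route as the paper's proof: the identical one-step recursion $\mathbf{z}_{t+1}-\nabla F_{t+1}(\mathbf{x}_{t+1})=(1-\beta)\left(\mathbf{z}_{t}-\nabla F_{t}(\mathbf{x}_{t})\right)+\beta\,\mathbf{\delta}^1_{t+1}+(1-\beta)(\mathbf{\delta}^1_{t+1}-\mathbf{\delta}^2_{t})$, the same $\zeta_0$-Young step to absorb the temporal drift (the paper applies it to the surviving cross term $-2(1-\beta)^2\langle \nabla F_{t+1}(\mathbf{x}_{t})-\nabla F_{t}(\mathbf{x}_{t}),\,\mathbf{z}_{t}-\nabla F_{t}(\mathbf{x}_{t})\rangle$, which is exactly what your explicit re-centering $\delta^2_{i,t}=\delta'_{i,t}+c_{i,t}$ converts into the bound on $\|e_{i,t}-c_{i,t}\|^2$), and the same invocation of Assumptions~\ref{ass:Lg} and~\ref{ass:sigma} for the noise terms. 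Your grouping in fact produces smaller coefficients on $q_t^2$ and on $\mathbb{E}[\|\mathbf{x}_{t+1}-\mathbf{x}_t\|^2]$ than the stated $8+\zeta_0^{-1}$ and $12$, so \eqref{z_bound} follows a fortiori, and the $2n\beta^2\sigma^2$ you obtain is the same constant the paper's own displayed chain yields (the $n\beta^2\sigma^2$ in the lemma statement is a harmless factor-of-two slip shared by the paper's derivation).
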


To facilitate the analysis, we establish a coupled relationship among the expectations of the following four error terms by defining the vector \(V_t\) as
\begin{align}
V_t = \begin{bmatrix}
\mathbb{E} \left[ \|\mathbf{x}_t - \hat{\mathbf{x}}_t\|_{\phi_t}^2 \right]  \\
\mathbb{E} \left[ S^2(\mathbf{y}_t, \pi_t) \right] \\
\mathbb{E} \left[ \| \hat{x}_t - x_t^* \|^2 \right] \\
\mathbb{E} \left[ \| \mathbf{z}_{t} - \nabla F_{t}(\mathbf{x}_{t}) \|^2 \right]
\end{bmatrix}.
\end{align}

Based on the results of the previously established lemmas, the following linear inequality system can be established.

\begin{proposition}
\label{th0}
Let the collections of sequences~\mbox{$\{\{ x_{i,t} \}_{i=1}^n \}_{t=1}^T$}, 
\mbox{$\{\{ z_{i,t} \}_{i=1}^n \}_{t=1}^T$}, and 
\mbox{$\{\{ y_{i,t} \}_{i=1}^n \}_{t=1}^T$} be generated by  Algorithm \ref{alg:reactmomentum}. 
Under Assumptions~\ref{ass:strongtu}–\ref{ass:bt}, the following linear inequality system holds
\begin{align}
\label{vt2}
V_{t+1} \leq M(\alpha)V_t + b_{1,t} + b_{2},
\end{align}
where \(b_{1,t}\) and \(b_{2}\) are vectors given by
\begin{align} 
b_{1,t} &= \begin{bmatrix}
0, k_1 q_t^2,
k_2 p_t^2,
k_3 q_t^2
\end{bmatrix}^\top, \label{bt1}\\
b_{2} &= \begin{bmatrix}
0,\frac{6 n\tau^2 \psi}{1 - \tau}\beta^2  \sigma^2,
0,
2n \beta^2 \sigma^2
\end{bmatrix}^\top. \label{bt2}
\end{align}
The coefficient parameters are defined as \(k_1 = \frac{6 n \beta^2 \tau^2 \psi}{1 - \tau}\), \(k_2 = \frac{4}{\mu \alpha n \eta}\), and \(k_3 = (8+\zeta_0^{-1})n (1 - \beta)^2\) with $\zeta_0 \in(0, \frac{1}{(1-\beta)^2}-1)$.
\end{proposition}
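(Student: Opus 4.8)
The plan is to assemble the four scalar recursions established in Lemmas~\ref{lemmayit}--\ref{zt1xt1} into a single vector inequality of the form $V_{t+1}\le M(\alpha)V_t+b_{1,t}+b_{2}$, reading off each row of $M(\alpha)$ directly from the corresponding lemma. Concretely, row~1 comes from Lemma~\ref{xtxhat}, row~3 from Lemma~\ref{le:hattxing}, and for rows~2 and~4 the dependence on $\mathbf{z}_{t+1}-\mathbf{z}_t$ and $\mathbf{x}_{t+1}-\mathbf{x}_t$ must first be eliminated using the auxiliary Lemmas~\ref{xt1xt}, \ref{zt1zt}, and \ref{le:yt1pait1}. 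Throughout, I will replace the time-varying quantities $c_t,\tau_t,\varphi_t,\kappa_t,\psi_t,\phi_t^\top\pi_t$ by their uniform bounds $c,\tau,\varphi,\kappa,\psi,\eta$ from \eqref{definec}, which is what makes $M(\alpha)$ a constant matrix; the constants $\nu_t,\zeta_t$ in \eqref{definecon} are precisely the coefficients that arise after this substitution.

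The main work is rows~2 and~4. For row~2, start from Lemma~\ref{le:yt1pait1}, which gives $\mathbb{E}[S^2(\mathbf{y}_{t+1},\pi_{t+1})]\le \tau\,\mathbb{E}[S^2(\mathbf{y}_t,\pi_t)]+\frac{\tau^2\kappa_t^2}{1-\tau}\mathbb{E}[\|\mathbf{z}_{t+1}-\mathbf{z}_t\|^2]$, then substitute the bound on $\mathbb{E}[\|\mathbf{z}_{t+1}-\mathbf{z}_t\|^2]$ from Lemma~\ref{zt1zt}. After bounding $\kappa_t^2\le\psi$ and collecting terms in the four components of $V_t$, the $\frac{6\beta^2 n q_t^2}{}$ and $6\beta^2 n\sigma^2$ contributions from Lemma~\ref{zt1zt}, scaled by $\frac{\tau^2\psi}{1-\tau}$, produce exactly the second entry $k_1 q_t^2$ of $b_{1,t}$ and the second entry $\frac{6n\tau^2\psi}{1-\tau}\beta^2\sigma^2$ of $b_2$. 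For row~4, start from Lemma~\ref{zt1xt1} and substitute the bound on $\mathbb{E}[\|\mathbf{x}_{t+1}-\mathbf{x}_t\|^2]$ from Lemma~\ref{xt1xt}; the term $n\beta^2\sigma^2$ becomes the last entry of $b_2$ (noting that $2n\beta^2\sigma^2$ is the value claimed, so I will need to check whether an extra factor enters when combining, or whether the $(1-\beta)^2$-free $n\beta^2\sigma^2$ is simply bounded by $2n\beta^2\sigma^2$), and the $(8+\zeta_0^{-1})n(1-\beta)^2 q_t^2$ term is the last entry $k_3 q_t^2$ of $b_{1,t}$. The coefficient $k_2=\frac{4}{\mu\alpha n\eta}$ in row~3 comes from the term $\frac{4}{\mu\alpha n\phi_t^\top\pi_t}\|x_t^*-x_{t+1}^*\|^2$ in Lemma~\ref{le:hattxing} together with $p_t=\|x_{t+1}^*-x_t^*\|$ and $\phi_t^\top\pi_t\ge\eta$.

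A few points require care. First, the condition $0<\alpha<\frac{2}{n(\mu+L_g)\phi_t^\top\pi_t}$ from Lemma~\ref{le:hattxing} must be stated (or absorbed into a uniform step-size condition via $\phi_t^\top\pi_t\le 1$) so that row~3 is valid; I will remark that it suffices to take $\alpha<\frac{2}{n(\mu+L_g)}$. Second, the constraint $\zeta_0\in(0,\frac{1}{(1-\beta)^2}-1)$ in Lemma~\ref{zt1xt1} ensures $(1-\beta)^2(1+\zeta_0)<1$, which is what will later make the spectral radius of $M(\alpha)$ less than one for small $\alpha$; at the level of this proposition I only need $\zeta_0$ to be a fixed admissible constant so that $k_3$ is finite. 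Third, because all the lemmas are already stated with conditional expectations $\mathbb{E}[\cdot\mid\mathcal F_t]$, I will apply the tower property once at the end to pass to unconditional expectations, so that $V_t$ as defined is consistent. The only genuine obstacle is bookkeeping: each of rows~2 and~4 is a composition of two lemmas, so I must carefully track which coefficient each of the four $V_t$-components and each of the $q_t^2,p_t^2,\sigma^2$ terms picks up, and verify that the monotone (entrywise) inequality $V_{t+1}\le M(\alpha)V_t+b_{1,t}+b_2$ indeed holds with the matrix entries and the stated $k_1,k_2,k_3$. I expect this to reduce to matching coefficients, with no new analytic input beyond the eight preceding lemmas.
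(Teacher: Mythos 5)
Your proposal is correct and follows essentially the same route as the paper: the paper likewise forms row 2 by substituting Lemma~\ref{zt1zt} into Lemma~\ref{le:yt1pait1}, row 4 by substituting Lemma~\ref{xt1xt} into Lemma~\ref{zt1xt1}, takes rows 1 and 3 directly from Lemmas~\ref{xtxhat} and~\ref{le:hattxing}, and then passes from the time-varying matrix $M_t(\alpha)$ to the constant $M(\alpha)$ via the uniform bounds in \eqref{definec}. The two points you flagged — the step-size condition being absorbed via $\phi_t^\top\pi_t\le 1$ and the fourth entry $n\beta^2\sigma^2$ being dominated by $2n\beta^2\sigma^2$ in $b_2$ — are exactly how the paper resolves them.
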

\begin{proof}
By applying Lemma \ref{zt1zt} to \eqref{S-bound}, we get the following inequality
\begin{align}
& \mathbb{E} \left[ S^2(\mathbf{y}_{t+1}, \pi_{t+1})  \right] \notag \\
& \leq \left[ \tau + \frac{\tau^2}{1 - \tau}  \kappa_t^2 \cdot 6 \alpha^2 L_g^2 
(c \varphi_{t+1} + 1)^2 \gamma_t^2 \right] 
\mathbb{E} \left[ S^2(\mathbf{y}_t, \pi_t) \right] \notag \\
& \quad + \frac{\tau^2}{1 - \tau}  \kappa_t^2 \Big[ 
6 L_g^2 (c \varphi_{t+1} + \varphi_t)^2 \notag \\
& \qquad + 12 \alpha^2 L_g^4 n \varphi_t^2 (c \varphi_{t+1} + 1)^2 \gamma_t^2 
\Big] \mathbb{E} \left[ \| \mathbf{x}_t - \hat{\mathbf{x}}_t \|_{\phi_t}^2 \right] \notag \\
& \quad + \frac{\tau^2}{1 - \tau}  \kappa_t^2 \cdot 
12 \alpha^2 L_g^4 n \varphi_t^2 (c \varphi_{t+1} + 1)^2 \gamma_t^2 
\mathbb{E} \left[ \| \hat{x}_t - x_t^* \|^2  \right] \notag \\
& \quad + \frac{\tau^2}{1 - \tau}  \kappa_t^2 \Big[  12 \alpha^2 L_g^2 n (c \varphi_{t+1} + 1)^2 \gamma_t^2 \notag \\
& \qquad + 3 \beta^2 \Big] 
\mathbb{E} \left[ \| \mathbf{z}_t - \nabla F_t(\mathbf{x}_t) \|^2 \right] \notag \\
& \quad + \frac{\tau^2}{1 - \tau}  \kappa_t^2 
\left[ 6 \beta^2 n q_t^2 + 6 \beta^2 n \sigma^2 \right]. 
\label{yt1pait1}
\end{align}
By substituting the result of Lemma~\ref{xt1xt}, which bounds \(\mathbb{E} [ \| \mathbf{x}_{t+1} - \mathbf{x}_t \|^2 ]\), into \eqref{z_bound} gives
\begin{align}
& \mathbb{E} \left[ \| \mathbf{z}_{t+1} - \nabla F_{t+1}(\mathbf{x}_{t+1}) \|^2 \right]  \notag \\
& \leq 
(1 - \beta)^2 \left[ (1 + \zeta_0) + 48 \alpha^2 \gamma_t^2 L_g^2 n (c \varphi_{t+1} + 1)^2 \right]\cdot 
      \notag \\
& \quad \mathbb{E} \left[ \| \mathbf{z}_t - \nabla F_t(\mathbf{x}_t) \|^2 \right]+ 12 (1 - \beta)^2 L_g^2 \Big[ 
        2 (c \varphi_{t+1} + \varphi_t)^2 \notag \\
& \quad + 4 \alpha^2 \gamma_t^2 L_g^2 n \varphi_t^2 (c \varphi_{t+1} + 1)^2 
     \Big] \mathbb{E} \left[ \| \mathbf{x}_t - \hat{\mathbf{x}}_t \|_{\phi_t}^2 \right] \notag \\
& \quad + 48 (1 - \beta)^2 \alpha^2 \gamma_t^2 L_g^4 n \varphi_t^2 (c \varphi_{t+1} + 1)^2 
     \mathbb{E} \left[ \| \hat{x}_t - x_t^* \|^2 \right] \notag \\
& \quad + 24 (1 - \beta)^2 \alpha^2 \gamma_t^2 L_g^2 (c \varphi_{t+1} + 1)^2 
     \mathbb{E} \left[ S^2(\mathbf{y}_t, \pi_t) \right] \notag \\
& \quad + (8+\zeta_0^{-1})n (1 - \beta)^2 q_t^2  + n \beta^2 \sigma^2
\label{zt1ft1}.
\end{align}
Then, combined with Lemmas~\ref{le:hattxing} and \ref{xtxhat}, it follows that under the step size condition \(0 < \alpha < \frac{2}{n (\mu + \eta) L_g}\), the vector \(V_t\) satisfies the following dynamical system
\begin{align}
V_{t+1} \leq M_t(\alpha)V_t + B_t^{'},
\end{align}
where \(M_t(\alpha)\) can be expressed as
\[
\begin{bmatrix}
\frac{1 + c^2}{2} + \alpha^2 m_t^1 & \alpha^2 m_t^2 & \alpha^2 m_t^1 & \alpha^2 m_t^3 \\
m_t^4 + \alpha^2 m_t^6 & \tau + \alpha^2 m_t^5 & \alpha^2 m_t^6 & m_t^7 + \alpha^2 m_t^8 \\
\alpha m_t^9 & \alpha m_t^{10} & 1 - \alpha m_t^{11} & \alpha m_t^{12} \\
m_t^{14} + \alpha^2 m_t^{15} & \alpha^2 m_t^{13} & \alpha^2 m_t^{15} & m_0 + \alpha^2 m_t^{16}
\end{bmatrix},
\]
and \(B_t^{'} = b'_{1,t} + b'_{2,t}\) with
\[
b'_{1,t} = \begin{bmatrix}
0,\frac{\tau^2 \psi_t}{1 - \tau} 6 \beta^2 n  q_t^2,
\frac{4}{\mu \alpha n \phi_{t}^\top \pi_t} p_t^2,
(8+\zeta_0^{-1})n (1 - \beta)^2 q_t^2
\end{bmatrix}^\top,\]
\[
b'_{2,t} = \begin{bmatrix}
0,\frac{\tau^2 \psi_t}{1 - \tau}6 \beta^2 n \sigma^2,
0,
n \beta^2 \sigma^2
\end{bmatrix}^\top.
\]

\noindent By introducing the parameter definitions in~\eqref{definecon}, the entries in \(M_t(\alpha)\) are defined as follows
\begin{align}
m_t^1 &= \frac{2nL_g^2 \varphi_t^2 c^2 (1 + c^2) \gamma_t^2}{1 - c^2}, \quad 
m_t^2 = \frac{(1 + c^2) c^2 \gamma_t^2}{1 - c^2}, \notag \\
m_t^3 &= \frac{2(1 + c^2) c^2 \gamma_t^2 n}{1 - c^2}, \quad
m_t^4 = \zeta_t, \quad 
m_t^5 = \nu_t, \notag \\ 
m_t^6 &= 2 n L_g^2 \nu_t, \quad 
m_t^7 = \frac{3 \psi_t \beta^2\tau^2}{1 - \tau}, \quad 
m_t^8 = 2 n \nu_t, \notag \\
m_t^9 &= \frac{4 (\phi_{t}^\top \pi_t) L_g^2 \varphi_t^2}{\mu}, \quad
m_t^{10} = \frac{4}{\mu n \phi_{t}^\top \pi_t}, \quad
m_t^{11} = \mu n \phi_{t}^\top \pi_t, \notag \\
m_t^{12} & = \frac{4 (\phi_{t}^\top \pi_t)}{\mu}, \quad
m_t^{13}= 24 (1 - \beta)^2 \gamma_t^2 L_g^2 (c \varphi_{t+1} + 1)^2, \notag \\ 
m_t^{14} &= 24 (1 - \beta)^2 L_g^2 (c \varphi_{t+1} + \varphi_{t})^2, \quad
m_t^{15} = 2n L_g^2 \varphi_t^2 m_t^{13},  \notag \\
m_t^{16} &= 2n m_t^{13}, m_0=(1 - \beta)^2(1 + \zeta_0).
\end{align}

\noindent By substituting the upper and lower bounds of parameters defined in~\eqref{definec}, the upper bound of \(M_t(\alpha)\) can be given by
\begin{align}
\label{ma}
M(\alpha)= \begin{bmatrix}
\frac{1 + c^2}{2} + \alpha^2 m_1 & \alpha^2 m_2 & \alpha^2 m_1 & \alpha^2 m_3 \\
m_4 + \alpha^2 m_6 & \tau + \alpha^2 m_5 & \alpha^2 m_6 & m_7 + \alpha^2 m_8 \\
\alpha m_9 & \alpha m_{10} & 1 - \alpha m_{11} & \alpha m_{12} \\
m_{14} + \alpha^2 m_{15} & \alpha^2 m_{13} & \alpha^2 m_{15} & m_0 + \alpha^2 m_{16}
\end{bmatrix},
\end{align}
satisfying \(M_t(\alpha) \leq M(\alpha)\), where the time-varying coefficients can be upper bounded by the following constants
\begin{align}
m_1 &= \frac{2nL_g^2 \varphi^2 c^2 (1 + c^2)}{1 - c^2}, \quad 
m_2 = \frac{(1 + c^2) c^2}{1 - c^2}, \notag \\
m_3 &= \frac{2(1 + c^2) c^2 n}{1 - c^2}, \quad 
m_4 = \zeta, \quad 
m_5 = \nu, \quad 
m_6 = 2 n L_g^2 \nu,  \notag \\
m_7 &= \frac{3 \psi \beta^2 \tau^2}{1 - \tau}, \quad 
m_8 = 2 n \nu, \quad 
m_9 = \frac{4 L_g^2 \varphi^2}{\mu},  \label{contant-m} \\
m_{10} &= \frac{4}{\mu n \eta }, \quad
m_{11} = \mu n, \quad
m_{12} = \frac{4}{\mu}, \notag \\
m_{13} &= 24 (1 - \beta)^2 L_g^2 (c \varphi + 1)^2,
m_{14} = 24 (1 - \beta)^2 L_g^2 \varphi^2(1+c)^2, \notag \\
m_{15} &= 2n L_g^2 \varphi^2 m_{13}, \quad
m_{16} = 2n m_{13},m_0=(1 - \beta)^2(1 + \zeta_0).\notag
\end{align}
Here \(\zeta = \frac{24 L_g^2  \varphi^2 \tau^2 \psi}{1 - \tau}\), \(\nu = \frac{6 L_g^2 (c \varphi + 1)^2 \tau^2 \psi}{1 - \tau}\). Consequently, \(B_t^{'}\) can be bounded by $B'=b_{1,t}+b_2$ defined in \eqref{bt1} and \eqref{bt2}
Thus, the proof is completed.
\end{proof}

To obtain the main theoretical result, we establish a regret bound for the proposed TV-HSGT algorithm under time-varying directed networks. The result demonstrates that the algorithm effectively reduces the variance caused by stochastic gradients.

\begin{thm} 
\label{th3}
Let the collections of sequences~\mbox{$\{\{ x_{i,t} \}_{i=1}^n \}_{t=1}^T$}, 
\mbox{$\{\{ z_{i,t} \}_{i=1}^n \}_{t=1}^T$}, and 
\mbox{$\{\{ y_{i,t} \}_{i=1}^n \}_{t=1}^T$} be generated by Algorithm \ref{alg:reactmomentum}. 
Let Assumptions \ref{ass:strongtu}–\ref{ass:bt} hold and the step size $\alpha$ satisfy the condition \eqref{cond-alpha}. Then, there exists a constant \(\tilde{\rho}\in (0,1)\) such that the dynamic regret satisfies
\begin{align*}
R_T^d \leq \mathcal{O} \left( \| V_0 \| + \sum_{t=0}^{T-1} \|  b_{1,t}  \| + \beta^2 \sigma^2T \| b'_2 \| \right),
\end{align*}
where $b_{1,t}$ is defined in \eqref{bt1} and  
\(
b'_{2} = \begin{bmatrix}
0,\frac{6 n\tau^2 \psi}{1 - \tau},
0,
n 
\end{bmatrix}^\top
\).
\end{thm}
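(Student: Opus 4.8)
The plan is to analyze the linear inequality system \eqref{vt2} from Proposition~\ref{th0} and show that the spectral radius of $M(\alpha)$ can be made strictly less than one by an appropriate choice of step size $\alpha$ satisfying \eqref{cond-alpha}, then unroll the recursion to bound the accumulated optimality error, and finally convert that bound into a dynamic regret bound. First I would establish that $\rho(M(\alpha)) \le \tilde\rho < 1$: the matrix $M(\alpha)$ in \eqref{ma} has the structure of a perturbation of a block-diagonal matrix whose diagonal entries are $\frac{1+c^2}{2} < 1$, $\tau + m_7 < 1$ (for $\beta$ small enough relative to $1-\tau$), $1 - \alpha m_{11}$, and $m_0 = (1-\beta)^2(1+\zeta_0) < 1$ (guaranteed by the constraint $\zeta_0 \in (0, (1-\beta)^{-2}-1)$). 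The off-diagonal couplings are either $O(\alpha)$, $O(\alpha^2)$, or the two constant entries $m_4 = \zeta$, $m_{14}$, $m_7$ in the lower-triangular-ish positions; a standard approach is to exhibit a positive vector $w$ (or use the theory of $M$-matrices / a weighted max-norm) such that $M(\alpha) w \le \tilde\rho\, w$ componentwise, which gives $\rho(M(\alpha)) \le \tilde\rho$. The condition \eqref{cond-alpha} is precisely the set of inequalities on $\alpha$ (together with smallness of $\beta$) making such a $w$ exist.

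Next, once $\rho(M(\alpha)) = \tilde\rho < 1$, I would iterate \eqref{vt2} to obtain, for each $t$,
\[
V_t \le M(\alpha)^t V_0 + \sum_{s=0}^{t-1} M(\alpha)^{t-1-s}(b_{1,s} + b_2),
\]
and therefore, using submultiplicativity of the chosen norm and $\|M(\alpha)^k\| \le C \tilde\rho^{\,k}$,
\[
\sum_{t=1}^{T}\|V_t\| \le C\Big(\tfrac{1}{1-\tilde\rho}\|V_0\| + \tfrac{1}{1-\tilde\rho}\sum_{s=0}^{T-1}\|b_{1,s}\| + \tfrac{T}{1-\tilde\rho}\|b_2\|\Big).
\]
Since the third component of $V_t$ is $\mathbb{E}[\|\hat x_t - x_t^*\|^2]$, this bounds $\sum_t \mathbb{E}[\|\hat x_t - x_t^*\|^2]$ by the same right-hand side, and noting $b_2 = \beta^2\sigma^2 b'_2$ gives the claimed $\beta^2\sigma^2 T\|b'_2\|$ term. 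Then I would use $L_g$-smoothness of $f_t$ together with $\nabla f_t(x_t^*) = 0$ (optimality, from Assumption~\ref{ass:strongtu}) to get $f_t(\hat x_t) - f_t(x_t^*) \le \frac{L_g}{2}\|\hat x_t - x_t^*\|^2$, so that $R_T^d = \mathbb{E}[\sum_t (f_t(\hat x_t) - f_t(x_t^*))] \le \frac{L_g}{2}\sum_t \mathbb{E}[\|\hat x_t - x_t^*\|^2]$, and substitute the accumulated bound above to conclude.

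The main obstacle I expect is the first step: verifying that $\rho(M(\alpha)) < 1$ is not automatic because $M(\alpha)$ contains the \emph{constant} (step-size-independent) off-diagonal entries $m_4 = \zeta$, $m_7$, and $m_{14}$, so one cannot simply take $\alpha \to 0$ to kill the coupling --- these constants must themselves be controlled, which is why $\zeta = \frac{24 L_g^2\varphi^2\tau^2\psi}{1-\tau}$ and $m_7 = \frac{3\psi\beta^2\tau^2}{1-\tau}$ carry factors of $\tau^2$ and $\beta^2$: one needs $\tau$ bounded away from $1$ (network connectivity) and $\beta$ small (low residual stochastic-gradient mixing) so that these entries are small enough to preserve the contraction. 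Constructing the explicit Perron-type test vector $w$ and reading off the resulting joint conditions on $(\alpha,\beta)$ — which is what \eqref{cond-alpha} encodes — is the technical heart of the argument; the rest is the routine geometric-series unrolling and the smoothness-to-regret conversion sketched above.
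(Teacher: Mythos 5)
Your proposal is correct and follows essentially the same route as the paper: the paper also constructs an explicit positive test vector $\boldsymbol{\delta}$ with $M(\alpha)\boldsymbol{\delta}<\boldsymbol{\delta}$ (built sequentially as $\delta_1\to\delta_4\to\delta_2\to\delta_3$, yielding the bounds $B_1,B_2,B_3$ in \eqref{cond-alpha}), then uses $L_g$-smoothness with $\nabla f_t(x_t^*)=0$ to get $R_T^d\le\frac{L_g}{2}\sum_t\mathbb{E}[\|\hat x_t-x_t^*\|^2]$, and unrolls the recursion via a $\gamma$-norm with $\|M(\alpha)\|_\gamma\le\tilde\rho<1$. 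The only small refinement: the constant off-diagonal entries $m_4$, $m_7$, $m_{14}$ you worry about are absorbed not by making $\beta$ small but by the triangular dependency of the constraints \eqref{eq:05}--\eqref{eq:07}, which lets the components of $\boldsymbol{\delta}$ be chosen large enough in sequence for any $\beta$ with $m_0=(1-\beta)^2(1+\zeta_0)<1$.
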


\begin{proof}
Recall the linear inequality system \eqref{vt2}, given by \(V_{t+1} \leq M(\alpha)V_t+b_{1,t}+b_2\) for all \(t \geq 0\). The goal is to determine a feasible range for the step size $\alpha$ such that the spectral radius $\rho(\alpha)$ of \(M(\alpha)\) satisfies \(\rho(\alpha) < 1\).
It is sufficient to find a positive vector $\boldsymbol{\delta} = [\delta_1, \delta_2, \delta_3, \delta_4]^\top$ and a range for $\alpha > 0$ such that $M(\alpha) \boldsymbol{\delta} < \boldsymbol{\delta}$ \cite[]{horn2012matrix}. Expanding and  rearranging this inequality element-wisely, we obtain 
\begin{align}
\alpha^2 \left( m_1 \delta_1 + m_2 \delta_2 + m_1 \delta_3 + m_3 \delta_4 \right) &< \frac{1-c^2}{2} \delta_1, \label{eq:01} \\
 \alpha^2 ( m_6 \delta_1 + m_5 \delta_2 + m_6 \delta_3+ m_8 \delta_4 )&< \nonumber\\
 (1-\tau)\delta_2 -& m_4 \delta_1   - m_7 \delta_4, \label{eq:02} \\
 \alpha \left( m_9 \delta_1 + m_{10} \delta_2 + m_{12} \delta_4 \right) &< \alpha m_{11} \delta_3, \label{eq:03}\\
\alpha^2 \left( m_{15} \delta_1 + m_{13} \delta_2 + m_{15} \delta_3 + m_{16} \delta_4 \right)&< \nonumber\\
 (1-m_0)\delta_4 &- m_{14} \delta_1.\label{eq:04}
\end{align}

To ensure these inequalities hold for some $\alpha > 0$, the right-hand sides must be positive, which gives a set of constraints on the components of the vector $\boldsymbol{\delta}$, i.e.,
\begin{align}
    \delta_3 &> \frac{m_9 \delta_1 + m_{10} \delta_2 + m_{12} \delta_4}{m_{11}}, \label{eq:05}\\
    \delta_4 &> \frac{m_{14}}{1-m_0} \delta_1,\label{eq:06}\\
    \delta_2 &> \frac{m_4 \delta_1 + m_7 \delta_4}{1-\tau}. \label{eq:07}
\end{align}

We now construct a feasible positive vector $\boldsymbol{\delta}$ that satisfies the conditions (\ref{eq:05}), (\ref{eq:06}), and (\ref{eq:07}). Let us fix $\delta_1 = 1$.
Based on (\ref{eq:06}), we can set $\delta_4 = \frac{2 m_{14}}{1-m_0}$.
Plugging this into (\ref{eq:07}), we select $\delta_2$ to satisfy
$$
\delta_2 = \frac{2}{1-\tau} \left( m_4 + \frac{2 m_7 m_{14}}{1-m_0} \right).
$$
Finally, based on (\ref{eq:05}), we set $\delta_3$ as
$$
\delta_3 = \frac{2}{m_{11}} \left( m_9 + m_{10}\delta_2 + m_{12}\delta_4 \right).
$$
With this choice, $\boldsymbol{\delta} = [\delta_1, \delta_2, \delta_3, \delta_4]^{\top}$ is a positive vector satisfying the necessary constraints.
Now, substituting these values back into inequalities (\ref{eq:01}), (\ref{eq:02}), and (\ref{eq:04}) to derive upper bounds on $\alpha$ yields
\begin{align*}
\alpha &< \sqrt{ \frac{(1-c^2)\delta_1}{2(m_1 \delta_1 + m_2 \delta_2 + m_1 \delta_3 + m_3 \delta_4)}} := B_1,\\
    \alpha &< \sqrt{\frac{m_4 \delta_1 + m_7 \delta_4}{m_6 \delta_1 + m_5 \delta_2 + m_6 \delta_3 + m_8 \delta_4}} := B_2,\\
\alpha &< \sqrt{\frac{m_{14} \delta_1}{m_{15} \delta_1 + m_{13} \delta_2 + m_{15} \delta_3 + m_{16} \delta_4}} := B_3.
\end{align*}

To summarize, with the constructed positive vector $\mathbf{\delta}$ and the defined constants \eqref{contant-m}, together with Lemma \ref{le:hattxing}, a sufficient condition on the step size $\alpha$ that guarantees $\rho(M(\alpha)) < 1$ is given by
\begin{align}\label{cond-alpha}
    0 < \alpha < \min\{B_1, B_2, B_3,\frac{2}{n(\mu +L_g)\eta}\}.
\end{align}


Recalling that the local function $f_{i,t}$ is $L_g$-smooth and by the definition $f_t (x) := \frac{1}{n} \sum_{i=1}^n  f_{i,t}(x)$, it implies the global function \(f_t(x)\) is also \(L_g\)-smooth, which satisfies
\begin{align}
   f_t(y) \leq f_t(x) + \langle \nabla f_t(x), y - x \rangle + \frac{L_g}{2} \|y - x\|^2. \label{eq:desc}
\end{align}

Let \(y = \hat{x}_t\) and \(x = x_t^*\). Since \(x_t^*\) is the minimizer of \(f_t(x)\), the first-order optimality condition under Assumption \ref{ass:strongtu} implies \(\nabla f_t(x_t^*) = 0\). Substituting these into (\ref{eq:desc}) yields
\[
f_t(\hat{x}_t) \leq f_t(x_t^*) + \langle 0, \hat{x}_t - x_t^* \rangle + \frac{L_g}{2} \|\hat{x}_t - x_t^*\|^2,
\]
which simplifies to
\[
f_t(\hat{x}_t) - f_t(x_t^*) \leq \frac{L_g}{2} \|\hat{x}_t - x_t^*\|^2.
\]
Taking the expectation and summing over \(t\) from 1 to \(T\), we get
\begin{align}
\label{rtchan}
R_T^d \leq \sum_{t=1}^T \left[ \frac{L_g}{2} \mathbb{E} \left[ \| \hat{x}_t - x_t^* \|^2 \right] \right] \leq  \frac{L_g}{2}  \sum_{t=1}^T \| V_t \|.
\end{align}
In any finite-dimensional vector space, all norms are equivalent, so there exist constants \(\lambda_1\) and \(\lambda_2\) satisfying
\begin{align}
\label{lamda12}
\| v \| \leq \lambda_1 \| v \|_\gamma, \quad \| v \|_\gamma \leq \lambda_2 \| v \|.
\end{align}
Substituting \eqref{lamda12} into \eqref{rtchan} gives \(R_T^d \leq \frac{  L_g \lambda_1}{2}\| V_t \|_\gamma\). According to matrix analysis theory~\cite[]{horn2012matrix}, for any \(\gamma > 0\), a matrix norm \(\| \cdot \|_\gamma\) exists such that
\begin{align*}
\| M(\alpha) \|_\gamma \leq \rho(M(\alpha)) + \gamma.
\end{align*}
Letting \(\gamma \in (0, 1 - \rho(M(\alpha)))\) and defining \(
\tilde{\rho} = \rho(M(\alpha)) + \gamma
\), we have \(
\| M(\alpha) \|_\gamma \leq \tilde{\rho} < 1
\). Matrix norm submultiplicativity further implies \(\| N v \|_\gamma \leq \| N\|_\gamma \| v \|_\gamma\) for any matrix \(N\) and vector \(v\). Applying this to the recursion \eqref{vt2}, we obtain
\begin{align*}
    R_T^d \leq \frac{L_g \lambda_1}{2} \sum_{t=1}^T \left[ \tilde{\rho}^t \| V_0 \|_\gamma + \sum_{k=0}^{t-1} \tilde{\rho}^k ( \| b_{1,t} \|_\gamma + \| b_2 \|_\gamma ) \right],
\end{align*}
and applying \eqref{lamda12} again yields
\begin{align*}
    R_T^d &\leq \frac{L_g \lambda_1 \lambda_2}{2} \sum_{t=1}^T \tilde{\rho}^t \| V_0 \| + \frac{L_g \lambda_1 \lambda_2}{2} \sum_{t=1}^T \sum_{k=0}^{t-1} \tilde{\rho}^k \| b_{1,t}  \|  \notag \\
    & \quad + \frac{L_g \lambda_1 \lambda_2}{2} \sum_{t=1}^T \sum_{k=0}^{t-1} \tilde{\rho}^k \| b_2 \|.
\end{align*}
As the geometric sum satisfies \(
\sum_{k=0}^{t-1} \tilde{\rho}^{k} \leq \frac{1}{1 - \tilde{\rho}}
\), then we get 
\begin{align*}
R_T^d \leq \frac{L_g \lambda_1 \lambda_2}{2} \left[ \frac{\tilde{\rho}}{1 - \tilde{\rho}} \| V_0 \| + \frac{1}{1 - \tilde{\rho}} \sum_{t=0}^{T-1} \|  b_{1,t} \| + \| b_2\|  \frac{T}{1 - \tilde{\rho}}  \right],
\end{align*}
which further simplifies to
\begin{align*}
R_T^d \leq \mathcal{O} \left( \| V_0 \| + \sum_{t=0}^{T-1} \|  b_{1,t}  \| + T \| b_2 \| \right).
\end{align*}
This completes the proof with $b_2 = \beta^2\sigma^2b_2'$.
\end{proof}

\begin{remark}
Existing studies have shown that, in general settings, the dynamic regret bound cannot achieve sublinear convergence in time \(T\)~\cite[]{li2022survey, eshraghi2022improving,Shahrampour2018,Notarnicola2023TAC,Li2021TAC,Dall2020,Mokhtari2016CDC}, which may explicitly depend on $P_T=\sum_{t=1}^{T-1}p_t$, the path
length related to the changes in the sequence of
minimizers. Moreover, some works depend on strong assumptions about objective functions. 
For example, ~\cite{eshraghi2022improving} establishes a bound of the form
\(
\mathcal{O}(1 + P_T),
\)
under the assumptions of strongly convex loss functions and bounded gradients. \cite{Shahrampour2018} gives a dynamic regret bound by $\mathcal{O}(\sqrt{(1 + C_T)T})$ with $C_T=\sum_{t=1}^{T}\|x_{t+1}^*-Ax_t^*\|$, requiring that the local time-varying functions have uniformly bounded gradients and the graph is undirected and connected.

In contrast, Theorem~\ref{th3} derives an upper bound on dynamic regret without the bounded gradient assumption under a stochastic setting and general time-varying digraphs. Due to the temporal variability of the gradients, the resulting bound incorporates additional error terms. Specifically, Theorem~\ref{th3} shows that the dynamic regret \(R^d_T\) consists of three components: a term dependent on initial conditions, a noise variance term induced by stochastic gradients, and an error that captures the time-varying nature of the problem, namely \(p_t\) and \(q_t\). In particular, the parameter $\beta$ can be properly tuned to reduce variance introduced by stochastic gradients. Moreover, if the temporal variations of both the optimal solution and the objective function's gradient decay sublinearly, and both the step size and the mixing parameter decrease over time, then the resulting dynamic regret can achieve sublinear convergence.
\end{remark}


Specifically, for the static distributed optimization with time-invariant functions ($f_t= f$), we can obtain a gradient-tracking based algorithm with variance reduction, as shown in the following corollary. 
\begin{corollary}\label{cor1}
For the static case with $f_t=f, t \geq 0$, when Assumptions~\ref{ass:strongtu}, \ref{ass:Lg}, \ref{ass:strong-connectivity}, \ref{ass:at}, \ref{ass:bt} hold and $\alpha$ satisfies \eqref{cond-alpha} with $m_0 = (1-\beta)^2$, it satisfies  
\begin{align*}
&  \limsup_{t \to \infty} V_t  \leq (\mathbb{I} -M(\alpha))^{-1}b, \\
&   \limsup_{t \to \infty} \mathbb{E}\left[\|\mathbf{x}_t - \hat{\mathbf{x}}_t\|_{\phi_t}^2\right] \leq   [(\mathbb{I} - M(\alpha))^{-1}b]_1,\\
&   \limsup_{t \to \infty} \mathbb{E}\left[\|\hat{x}_t - x^*\|^2\right] \leq  [(\mathbb{I} - M(\alpha))^{-1}b]_3,
\end{align*}
with a linear decay rate of $ \rho^t(M(\alpha))$, where $[u]_i$ denotes the $i$th entry of $u$ and $b= [0,\frac{2 n\tau^2 \psi}{1 - \tau}\beta^2  \sigma^2,0,2n \beta^2 \sigma^2]^{\top}$.
\end{corollary}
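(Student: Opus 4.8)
The plan is to specialize the apparatus of Proposition~\ref{th0} and Theorem~\ref{th3} to the time-invariant setting and then extract a steady-state bound from the Neumann series of $M(\alpha)$. The point is that for $f_t \equiv f$ the two non-stationarity measures collapse, so the inhomogeneous driving term of the linear inequality system loses everything except the stochastic-gradient noise.

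First I would note that $f_t \equiv f$ gives $q_t = \sup_{i,x}\|\nabla f_{i,t+1}(x)-\nabla f_{i,t}(x)\| = 0$ and $x_t^\star \equiv x^\star$, hence $p_t = \|x_{t+1}^\star - x_t^\star\| = 0$, for every $t$; consequently $b_{1,t} \equiv 0$ in Proposition~\ref{th0}. I would then trace the dependence on $q_t$ through Lemmas~\ref{zt1zt} and~\ref{zt1xt1}: with $q_t = 0$ the term $(8+\zeta_0^{-1})n(1-\beta)^2 q_t^2$ in~\eqref{z_bound} vanishes for every $\zeta_0$, so one may let $\zeta_0 \downarrow 0$ and replace the $(4,4)$-contraction factor $(1-\beta)^2(1+\zeta_0)$ by $m_0 = (1-\beta)^2$; similarly the $q_t^2$-parts of the second and fourth coordinates of $b_2$ drop out and the accompanying constants sharpen, leaving $b = [\,0,\ \tfrac{2n\tau^2\psi}{1-\tau}\beta^2\sigma^2,\ 0,\ 2n\beta^2\sigma^2\,]^\top$. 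The system of Proposition~\ref{th0} thus reduces to the autonomous componentwise recursion $V_{t+1} \leq M(\alpha)V_t + b$ with this $M(\alpha)$ (entries as in~\eqref{contant-m}, but with $m_0 = (1-\beta)^2$) and this $b$.

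Next I would re-use the spectral argument from the proof of Theorem~\ref{th3}. Since $m_0 = (1-\beta)^2 \le (1-\beta)^2(1+\zeta_0)$ while no other entry of $M(\alpha)$ changes, the same construction of a strictly positive $\boldsymbol\delta$ with $M(\alpha)\boldsymbol\delta < \boldsymbol\delta$ still works — in fact constraint~\eqref{eq:06} on $\delta_4$ is only relaxed — so $\rho(M(\alpha)) < 1$ under~\eqref{cond-alpha} with $m_0 = (1-\beta)^2$. Because $M(\alpha)$ is entrywise nonnegative, the recursion can be iterated: $V_t \leq M(\alpha)^t V_0 + \sum_{k=0}^{t-1} M(\alpha)^k b \leq M(\alpha)^t V_0 + (\mathbb{I}-M(\alpha))^{-1}b$, using that the Neumann series $\sum_{k\ge 0}M(\alpha)^k = (\mathbb{I}-M(\alpha))^{-1}$ converges and is entrywise nonnegative. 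Since $M(\alpha)^t \to 0$, taking $\limsup_{t\to\infty}$ yields $\limsup_{t\to\infty} V_t \leq (\mathbb{I}-M(\alpha))^{-1}b$ componentwise; reading off coordinates $1$ and $3$ of $V_t$ — namely $\mathbb{E}[\|\mathbf{x}_t - \hat{\mathbf{x}}_t\|_{\phi_t}^2]$ and $\mathbb{E}[\|\hat x_t - x^\star\|^2]$ — gives the last two displayed bounds. For the rate, subtracting the fixed point $V^\star := (\mathbb{I}-M(\alpha))^{-1}b$ gives $V_t - V^\star \leq M(\alpha)^t(V_0 - V^\star)$, and with an operator norm $\|\cdot\|_\gamma$ satisfying $\|M(\alpha)\|_\gamma \le \rho(M(\alpha)) + \gamma < 1$ (as in Theorem~\ref{th3}) the right side vanishes at the geometric rate $\rho^t(M(\alpha))$.

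The main obstacle I anticipate is bookkeeping rather than anything conceptual: one must check carefully that, relative to Proposition~\ref{th0}, the only changes are $b_{1,t}\equiv 0$, $m_0\to(1-\beta)^2$, and the sharpened constant in $b$, i.e.\ that no $q_t$- or $p_t$-dependent quantity is concealed inside the $m_i$'s — inspection of~\eqref{contant-m} confirms they depend only on $c,\tau,\psi,\varphi,\eta,L_g,\mu,n,\beta$. A secondary point is to confirm that~\eqref{cond-alpha}, re-evaluated with $m_0=(1-\beta)^2$, still admits a nonempty interval for $\alpha$ and still delivers $M(\alpha)\boldsymbol\delta < \boldsymbol\delta$; this is immediate since decreasing $m_0$ enlarges $1-m_0$, hence the bound $B_3$, and loosens~\eqref{eq:06}. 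Everything else is a routine application of the Neumann series and submultiplicativity of $\|\cdot\|_\gamma$.
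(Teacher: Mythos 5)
Your overall route --- specialize the linear inequality system of Proposition~\ref{th0} to $q_t=p_t=0$, check that $\rho(M(\alpha))<1$ survives the replacement $m_0=(1-\beta)^2$, and then sum the Neumann series $\sum_{k\ge 0}M(\alpha)^k b=(\mathbb{I}-M(\alpha))^{-1}b$ --- is the same as the paper's, and your handling of the rate (subtracting the fixed point and using a norm with $\|M(\alpha)\|_\gamma\le\rho(M(\alpha))+\gamma<1$) is if anything cleaner than the paper's, whose final display even carries a stray minus sign in front of $(\mathbb{I}-M(\alpha))^{-1}b$.

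The one place where your argument does not deliver the stated conclusion is the vector $b$. Setting $q_t=0$ and letting $\zeta_0\downarrow 0$ legitimately kills $b_{1,t}$ and turns the $(4,4)$ entry into $(1-\beta)^2$ (the offending $\zeta_0^{-1}$ term is multiplied by $q_t^2=0$, so the limit is harmless), but it leaves the second coordinate of \eqref{bt2} at $\frac{6n\tau^2\psi}{1-\tau}\beta^2\sigma^2$, not the $\frac{2n\tau^2\psi}{1-\tau}\beta^2\sigma^2$ asserted in the corollary. The factor $6\to 2$ is not a consequence of $q_t=0$: it comes from re-deriving Lemma~\ref{zt1zt} in the static case with a different splitting of $z_{i,t+1}-z_{i,t}$ (and no further decomposition of the variance term into a $\sigma^2$ and a $q_t^2$ piece), and the paper likewise re-derives Lemma~\ref{zt1xt1} using the fact that $\mathbb{E}[\delta^2_{i,t}]=0$ exactly in the static case, so the cross term vanishes by unbiasedness rather than by Young's inequality --- which is also what replaces $12(1-\beta)^2L_g^2$ by $2(1-\beta)^2L_g^2$. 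Your phrase ``the accompanying constants sharpen'' is therefore doing real work that you have not carried out; as written, your proof establishes the corollary with a strictly larger $b$, which yields the same qualitative conclusion (bounded $\limsup$, linear rate $\rho^t(M(\alpha))$, bounds shrinking with $\beta$) but not the stated constants. Everything else --- the observation that the $m_i$'s in \eqref{contant-m} hide no $q_t$- or $p_t$-dependence, the relaxation of constraint \eqref{eq:06} when $m_0$ decreases, and the componentwise reading of coordinates $1$ and $3$ of $V_t$ --- is sound.
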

\begin{remark}
Corollary \ref{cor1} extends \cite{Nguyen2023} by incorporating the hybrid variance-reduction mechanism \eqref{zchange1}. 
As seen from the definition of $b$, the resulting error bounds in Corollary \ref{cor1} can be made arbitrarily small by reducing the parameter $\beta$, which highlights the effectiveness of the variance-reduction strategy. 
Furthermore, in contrast to the CTA-based gradient tracking framework employed in \cite{Nguyen2023} for static distributed optimization, our algorithm adopts an ATC framework adapted for online distributed optimization settings, which has been shown superior to CTA framework \cite[]{cattivelli2009diffusion,li2024npga}, particularly in terms of stability and convergence under dynamic conditions.
\end{remark}

\section{Numerical Examples}\label{Numberical}

In this section, we evaluate the effectiveness of the proposed TV-HSGT algorithm on two multi-agent distributed learning problems. The first problem is a distributed logistic regression task based on structured data, using the A9A dataset. The second problem is a distributed logistic regression task based on image data, using the MNIST dataset.
We compare the performance of the TV-HSGT algorithm with three baseline methods: DSGD \cite[]{lian2017can}, DSGT \cite[]{pu2021distributed}, and DSGT-HB \cite[]{Gao2023}. All methods adopt a unified strategy for constructing the communication weight matrices. Specifically, in each iteration of TV-HSGT, agents communicate over a time-varying strongly connected directed graph. This graph is constructed by randomly sampling edges from a predefined base directed graph while ensuring strong connectivity is maintained at each round. The communication mechanism follows the AB framework, employing a pair of row-stochastic and column-stochastic matrices for updating the decision and gradient tracking variables, respectively. The weights are uniformly distributed over each node’s in-neighbors or out-neighbors, making the implementation suitable for local computation. In contrast, the baseline methods DSGD, DSGT, and DSGT-HB operate over a fixed complete graph and assign uniform weights across all neighbors, forming symmetric doubly stochastic matrices.

\subsection{Distributed Logistic Regression on Structured Data}\label{subsec5.1}

This subsection evaluates the performance of the proposed TV-HSGT algorithm on a classification task using the structured A9A dataset with a logistic regression model. The loss function \cite[]{10806815} is defined as:
\vspace{-0.8em}  
\begin{small}
\begin{align*}
f(\theta, \xi^i) = \frac{1}{M^i} \sum_{s=1}^{M^i} \left( (1 - b^i_{s})(a^i_{s})^\top \theta - \log\left(s((a^i_{s})^\top \theta) \right) \right) + \frac{r^i}{2} \|\theta\|^2,
\end{align*}
\end{small}
where \(M^i\) is the number of samples for agent \(i\), \(r^i\) is a regularization coefficient, and \(s(a)\) denotes the sigmoid function.
We conduct two groups of experiments: (1) algorithm comparison and (2) parameter sensitivity analysis.

\begin{figure}[t]
\centering
\includegraphics[width=6.615cm]{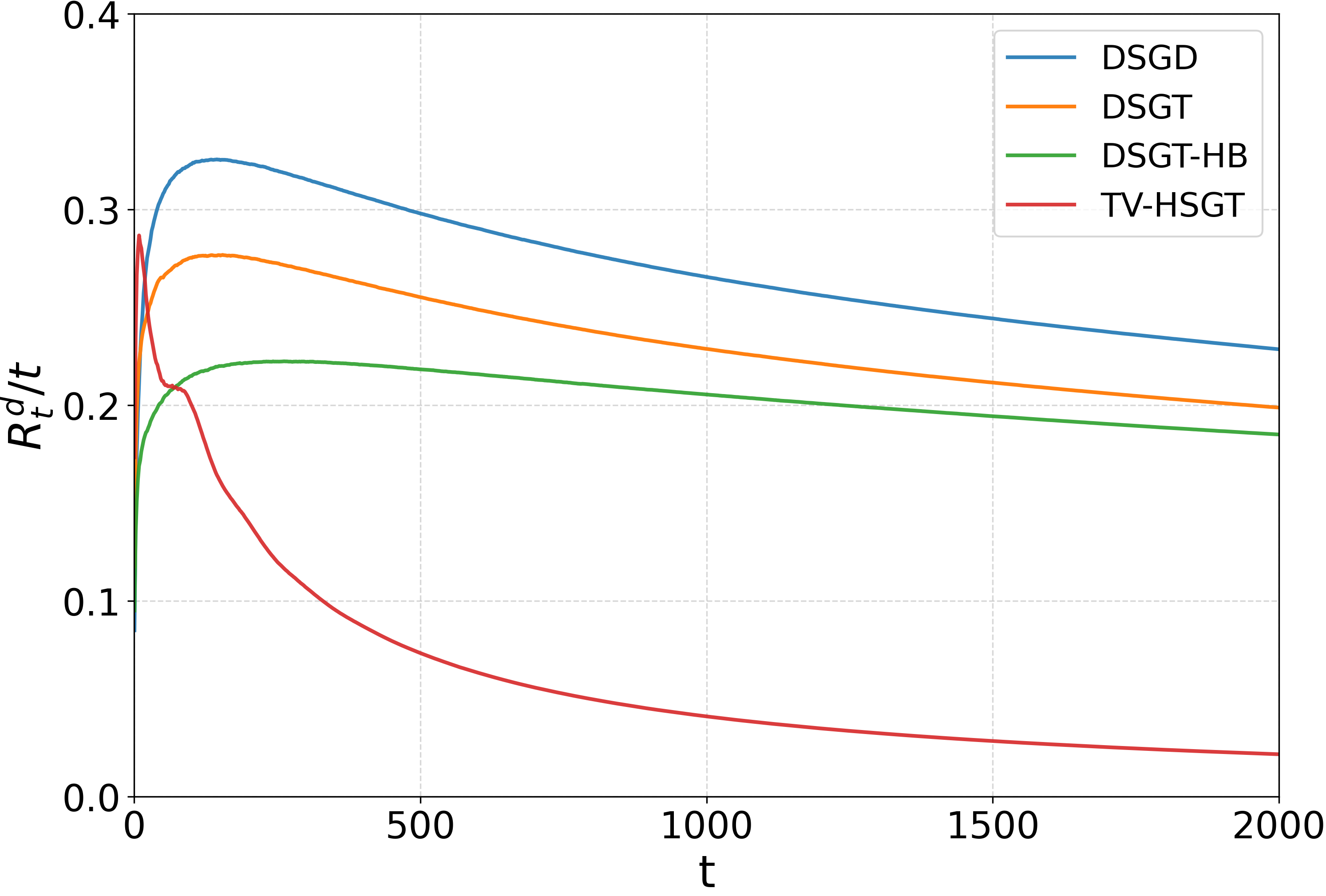}
\caption{Time-averaged regret on the A9A dataset for different algorithms}
\label{fig:3_a9a_compare_dy}
\end{figure}

\begin{figure}[t]
\centering
\includegraphics[width=6.615cm]{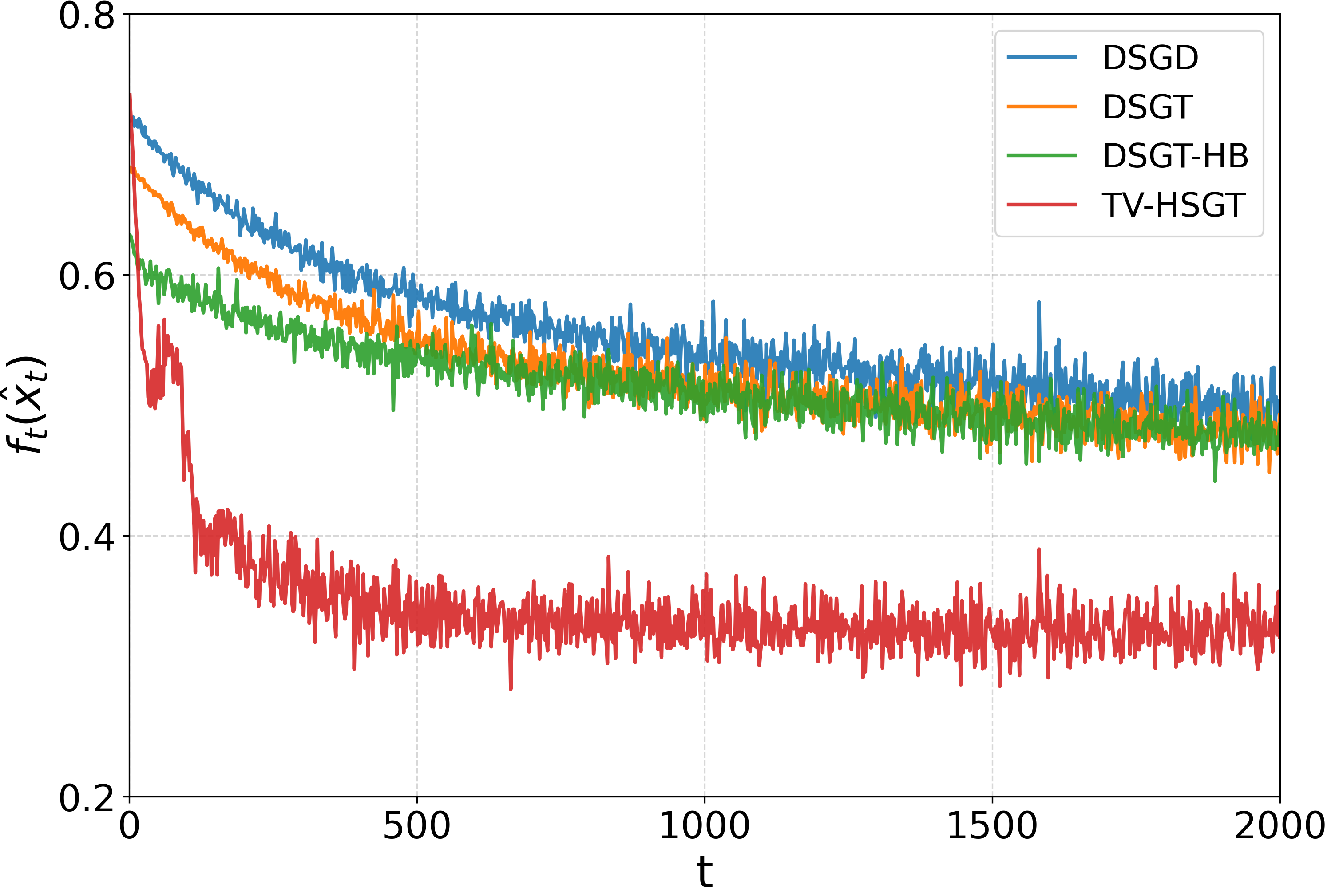}
\caption{Loss on the A9A dataset for different algorithms}
\label{fig:3_a9a_compare_loss}
\end{figure}
\begin{figure}[t]
\centering
\includegraphics[width=6.615cm]{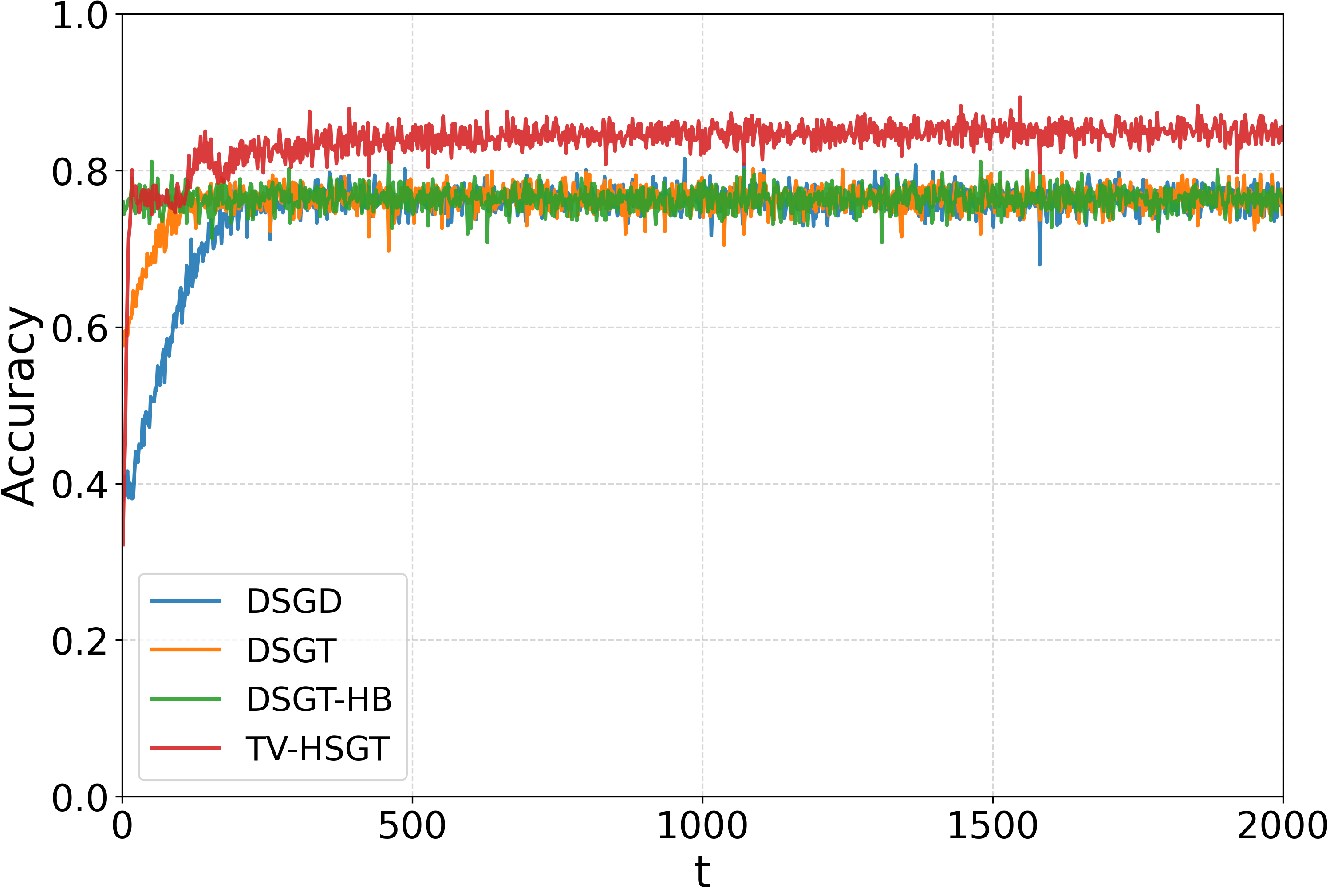}
\caption{Accuracy on the A9A dataset for different algorithms}
\label{fig:3_a9a_compare_acc}
\end{figure}

We compare TV-HSGT with the online versions of DSGD, DSGT, and DSGT-HB. Following the setup in \cite{10806815}, 10 agents independently receive mini-batches of 100 randomly drawn samples from the pre-shuffled A9A dataset at each round, simulating a dynamic online learning environment.
All methods use a fixed step size of 0.001. TV-HSGT adopts a mixing parameter \(\beta = 0.01\); DSGT-HB uses a momentum coefficient of 0.9; and regularization is set as \(r^i = 10^{-5}\) for all agents.
Figs.~\ref{fig:3_a9a_compare_dy}–\ref{fig:3_a9a_compare_acc} show that TV-HSGT consistently outperforms all baselines in terms of regret, loss, and accuracy. The hybrid variance reduction design effectively mitigates gradient noise and accelerates convergence, in line with the theoretical results in Theorem~\ref{th3}.

To examine the impact of the mixing parameter \(\beta\), we test values in \{0.01, 0.1, 0.2, 0.3, 0.4, 0.5\}. Figs.~\ref{fig:3_a9a_beta_beta}–\ref{fig:3_a9a_beta_acc} show that smaller \(\beta\) values lead to better performance, confirming the theoretical insights in Theorem~\ref{th3}. A larger \(\beta\) increases gradient noise, degrading performance.


\begin{figure}[t]
	\centering
	\includegraphics[width=6.615cm]{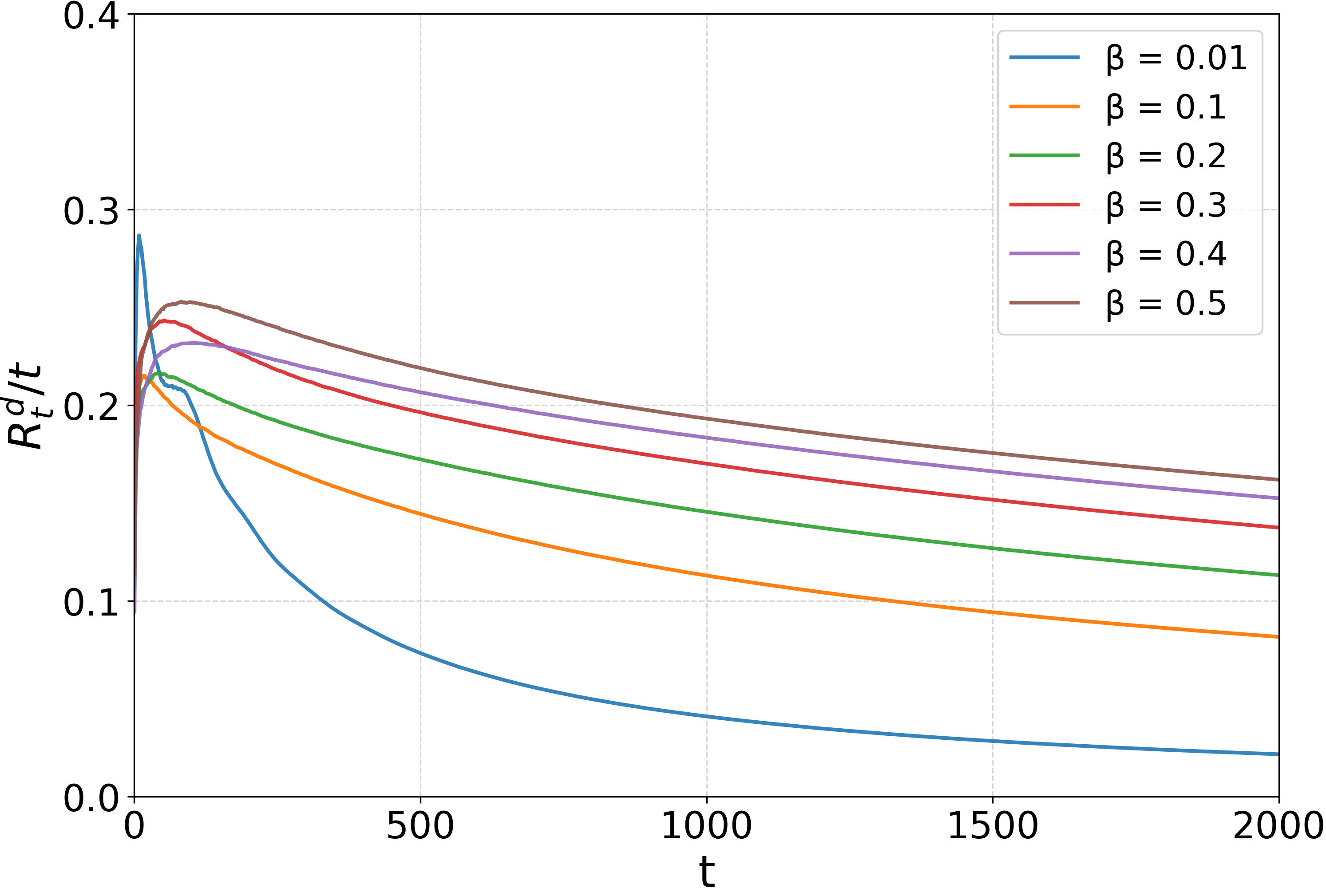}
	\caption{Time-averaged regret under different $\beta$ on the A9A dataset}
	\label{fig:3_a9a_beta_beta}
\end{figure}

\begin{figure}[t]
	\centering
	\includegraphics[width=6.615cm]{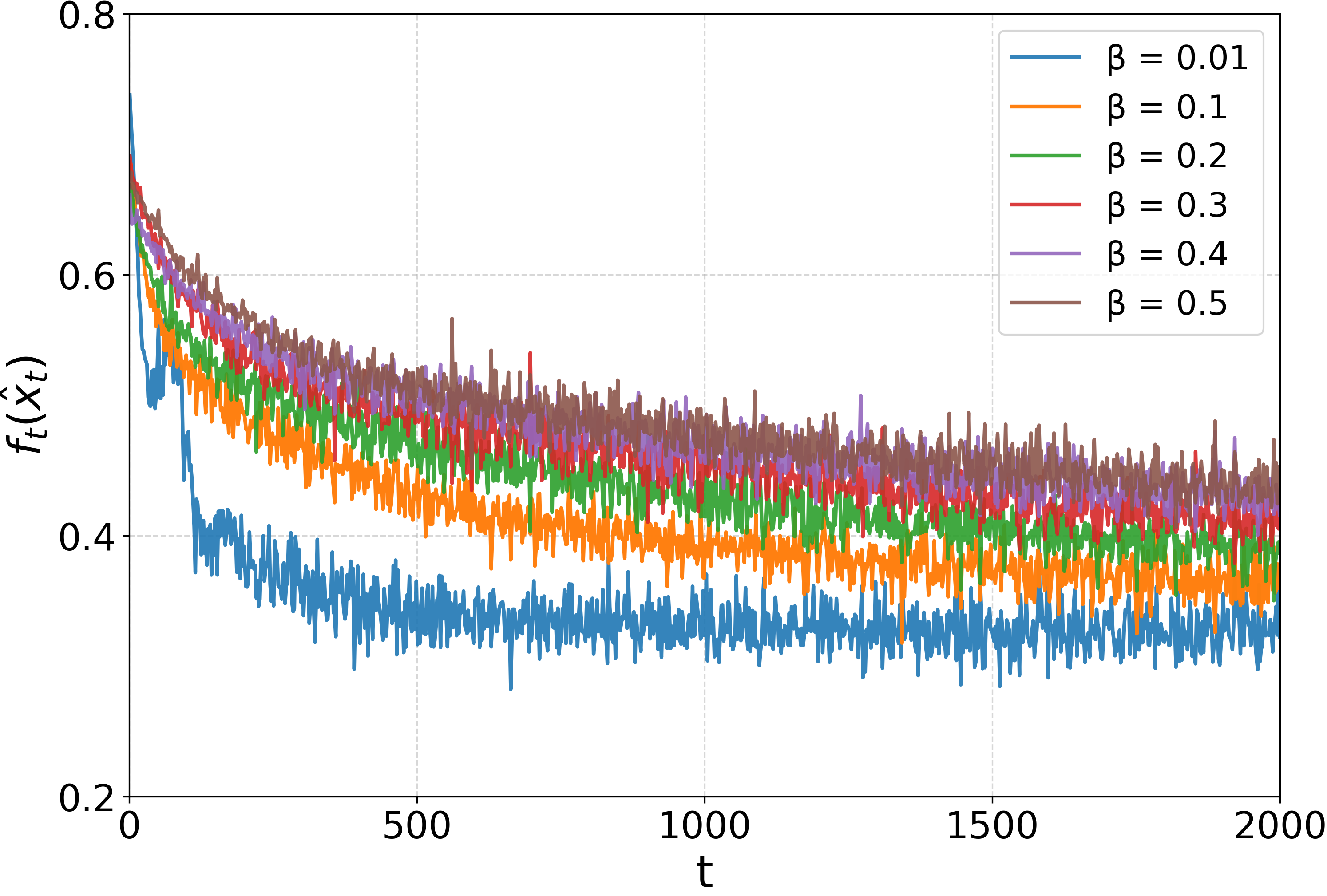}
	\caption{Loss under different $\beta$ values on the A9A dataset}
	\label{fig:3_a9a_beta_loss}
\end{figure}

\begin{figure}[t]
	\centering
	\includegraphics[width=6.615cm]{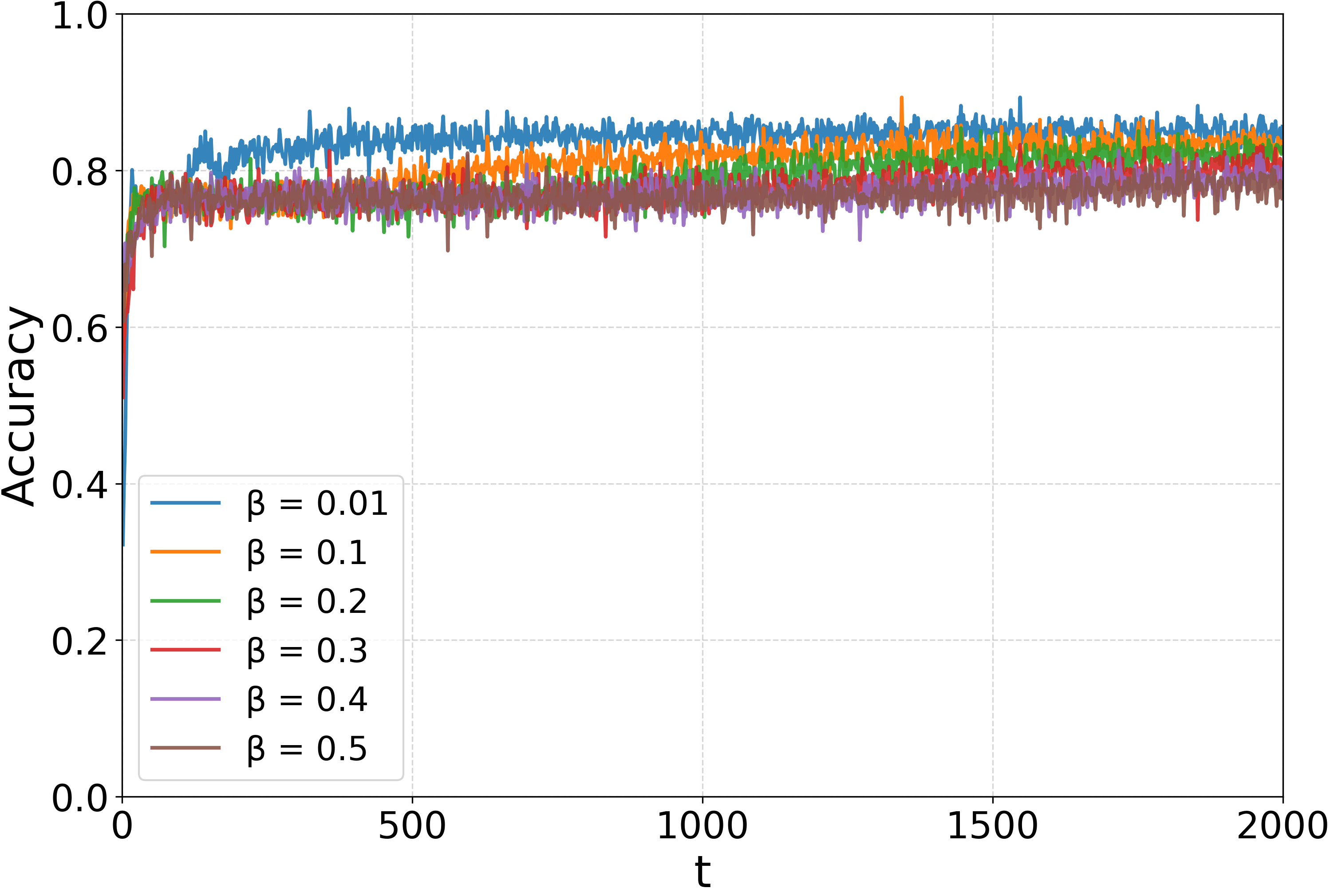}
	\caption{Accuracy under different $\beta$ values on the A9A dataset}
	\label{fig:3_a9a_beta_acc}
\end{figure}

\subsection{Distributed Logistic Regression on Image Data}
To further evaluate the effectiveness of TV-HSGT in visual settings, we conduct experiments on the MNIST dataset using a multi-class logistic regression model with $L_2$ regularization. The loss function is given by
\vspace{-0.8em}  
\begin{small}
\begin{align*}
f(\Theta, \xi^i) = \frac{1}{M^i} \sum_{s=1}^{M^i} \left( - \log \left( \frac{\exp\left(\theta_{b_s^i}^\top a_s^i\right)}{\sum_{k=0}^9 \exp\left(\theta_k^\top a_s^i\right)} \right) \right) + \frac{r^i}{2} \sum_{k=0}^{9} \|\theta_k\|^2,
\end{align*}
\end{small}
where \(\Theta = [\theta_0, \dots, \theta_9]\) is the parameter matrix, $a_s^i$ and $b_s^i$ represent the feature vector and label of sample $s$ at agent $i$, $M^i$ is the per-round batch size, and $r^i$ is the regularization coefficient.

All experimental settings match those of the structured-data experiments in Subsection \ref{subsec5.1}. 
Each agent processes 100 random images per round. Figs.~\ref{fig:3_mnist_compare_dy}–\ref{fig:3_mnist_compare_acc} show comparisons of time-averaged regret, loss, and accuracy across algorithms. The results demonstrate that TV-HSGT converges fastest, significantly reduces stochastic gradient noise, and achieves the highest final accuracy, outperforming DSGT-HB, DSGT, and DSGD—particularly in image classification applications.


\begin{figure}[t]
\centering
\includegraphics[width=6.615cm]{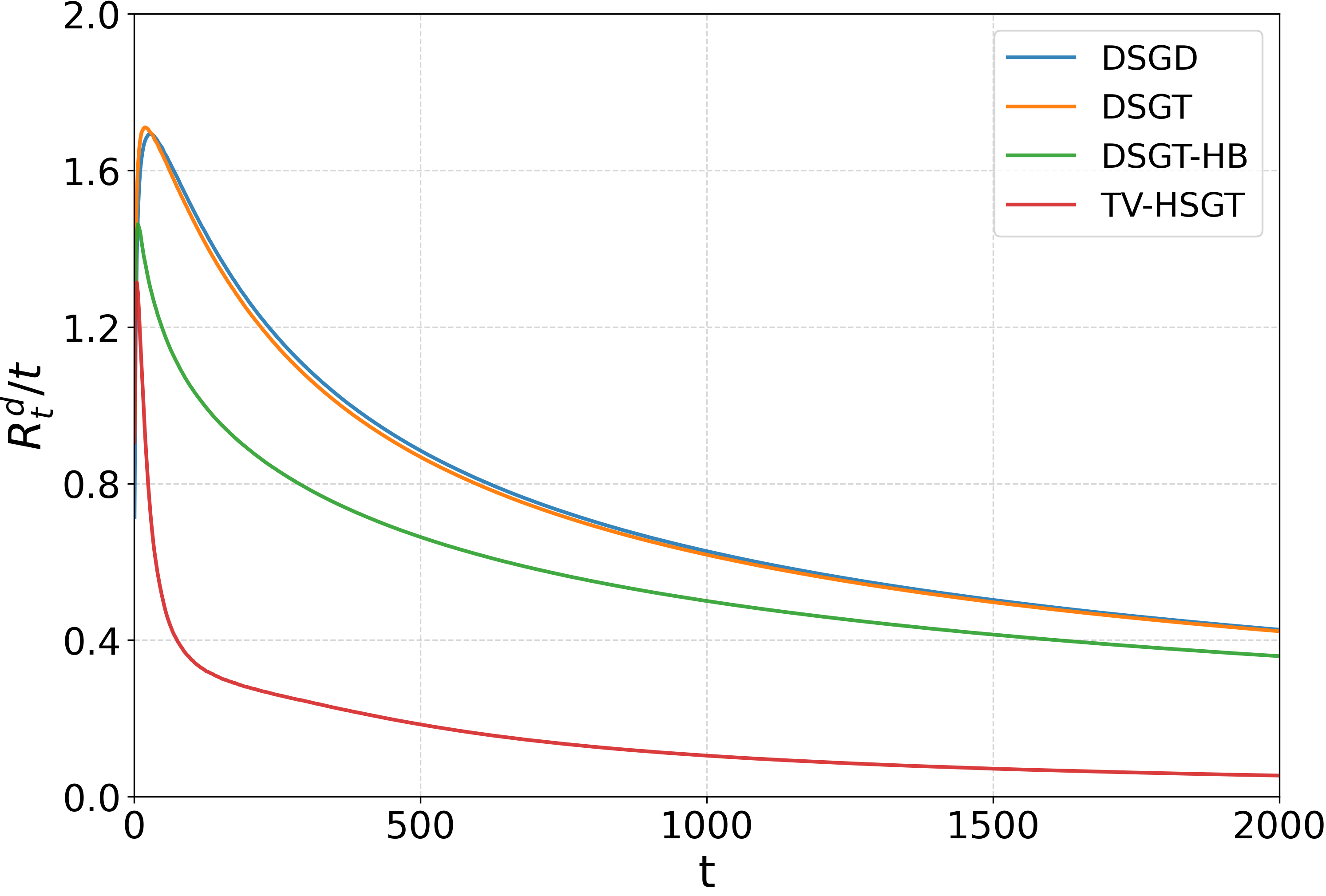}
\caption{Time-averaged regret on the MNIST dataset or different algorithms}
\label{fig:3_mnist_compare_dy}
\end{figure}
\begin{figure}[t]
\centering
\includegraphics[width=6.615cm]{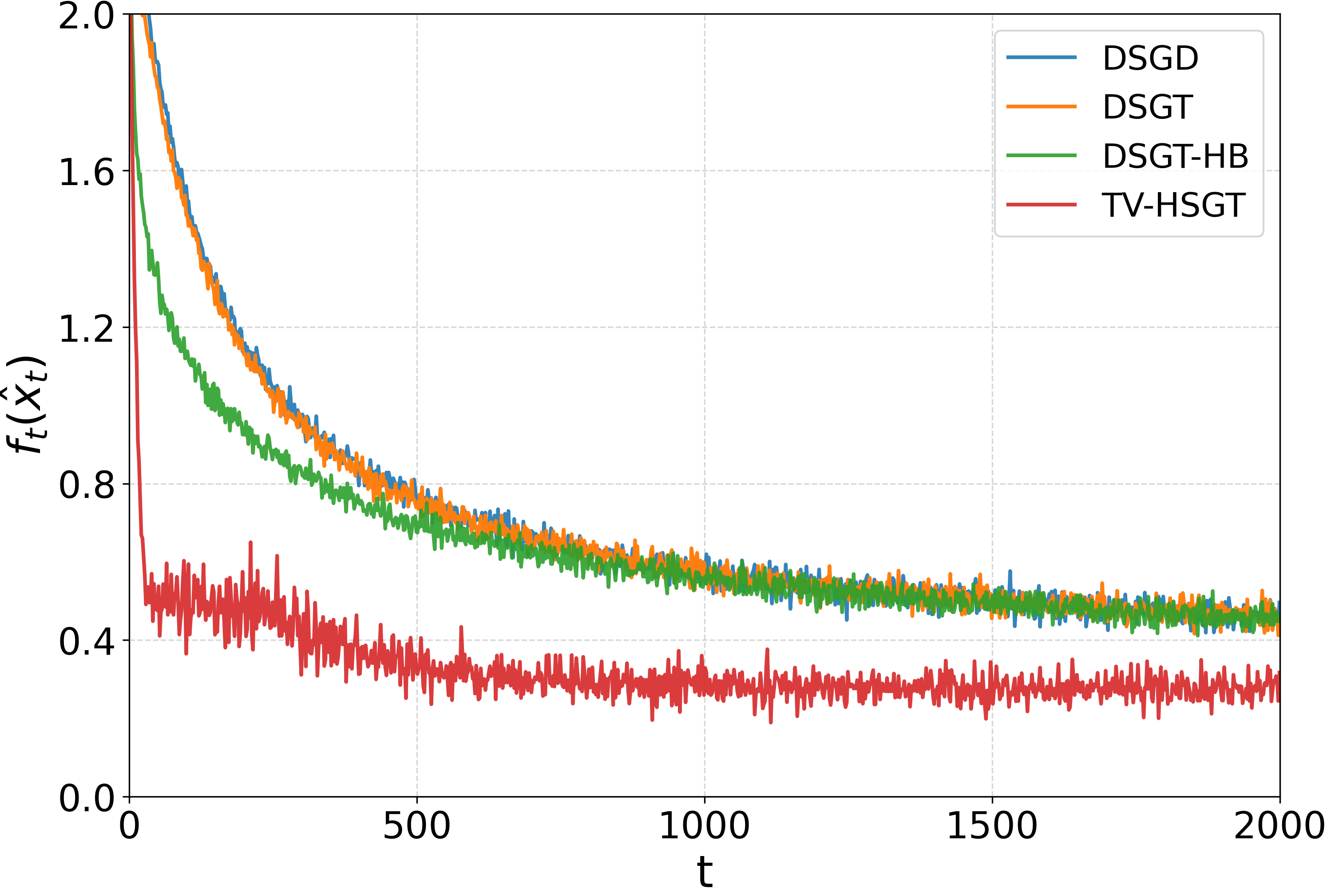}
\caption{Loss on the MNIST dataset for different algorithms}
\label{fig:3_mnist_compare_loss}
\end{figure}
\begin{figure}[t]
\centering
\includegraphics[width=6.615cm]{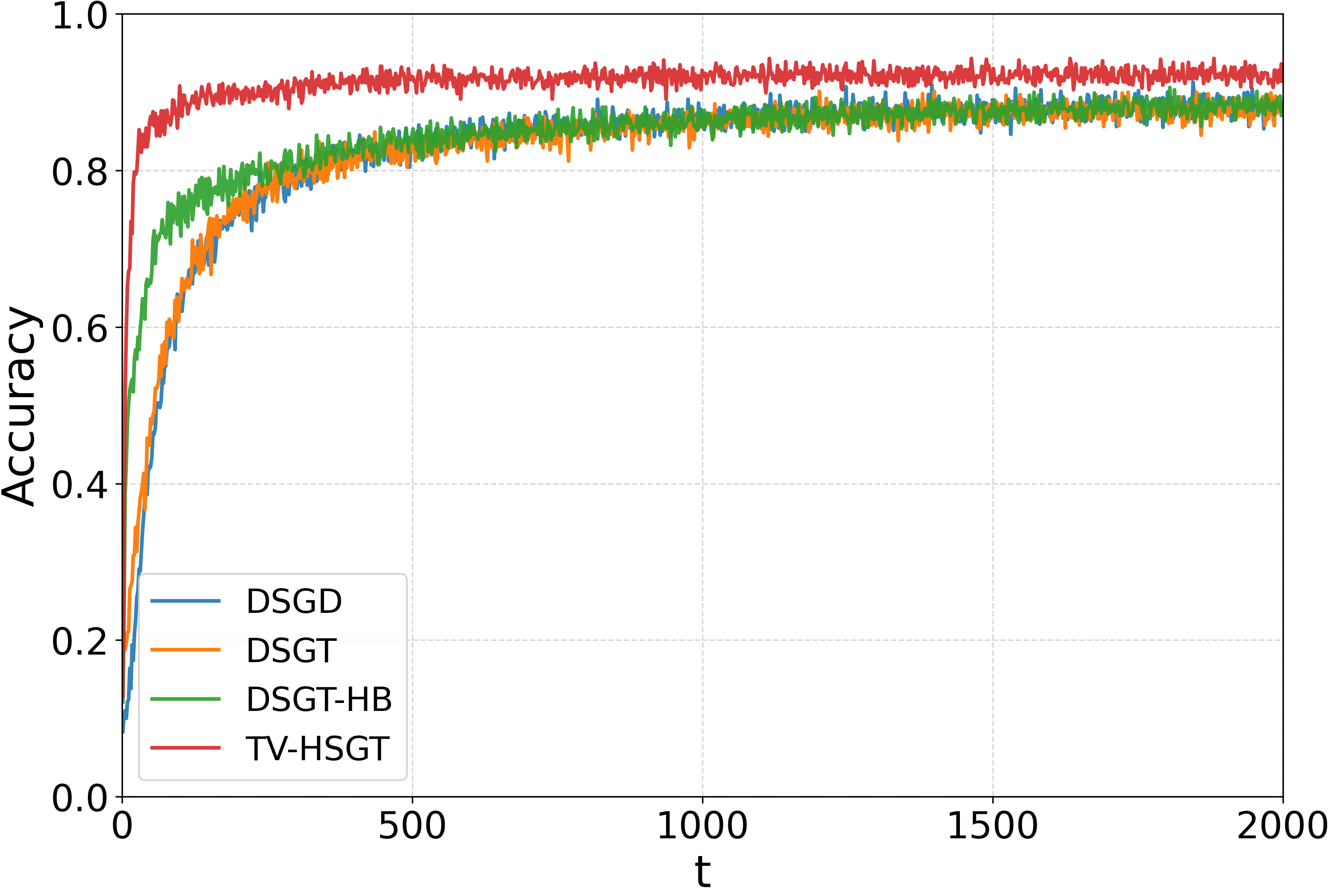}
\caption{Accuracy on the MNIST dataset for different algorithms}
\label{fig:3_mnist_compare_acc}
\end{figure}

We assess the effect of the mixing parameter $\beta \in \{0.01, 0.1, 0.2, 0.3, 0.4, 0.5\}$ on performance. Figs.~\ref{fig:3_mnist_beta_dy}–\ref{fig:3_mnist_beta_acc} illustrate that smaller $\beta$ values lead to better performance across regret, loss, and accuracy, consistent with our theoretical analysis in Theorem~\ref{th3}.

\section{Conclusion}\label{conclusion}
In this work, a novel decentralized online stochastic optimization algorithm named TV-HSGT has been proposed over time-varying directed networks with limited computation. By combining hybrid stochastic gradient estimation and gradient tracking strategies, an improved dynamic regret performance with variance reduction is achieved. An AB communication scheme is employed for a time-varying directed network to ensure consensus without eigenvector estimation. Theoretical analysis and experiments demonstrate the algorithm’s effectiveness in reducing variance and tracking the optimal solution. Future work will focus on improving the communication efficiency of TV-HSGT.

\begin{figure}[t]
\centering
\includegraphics[width=6.615cm]{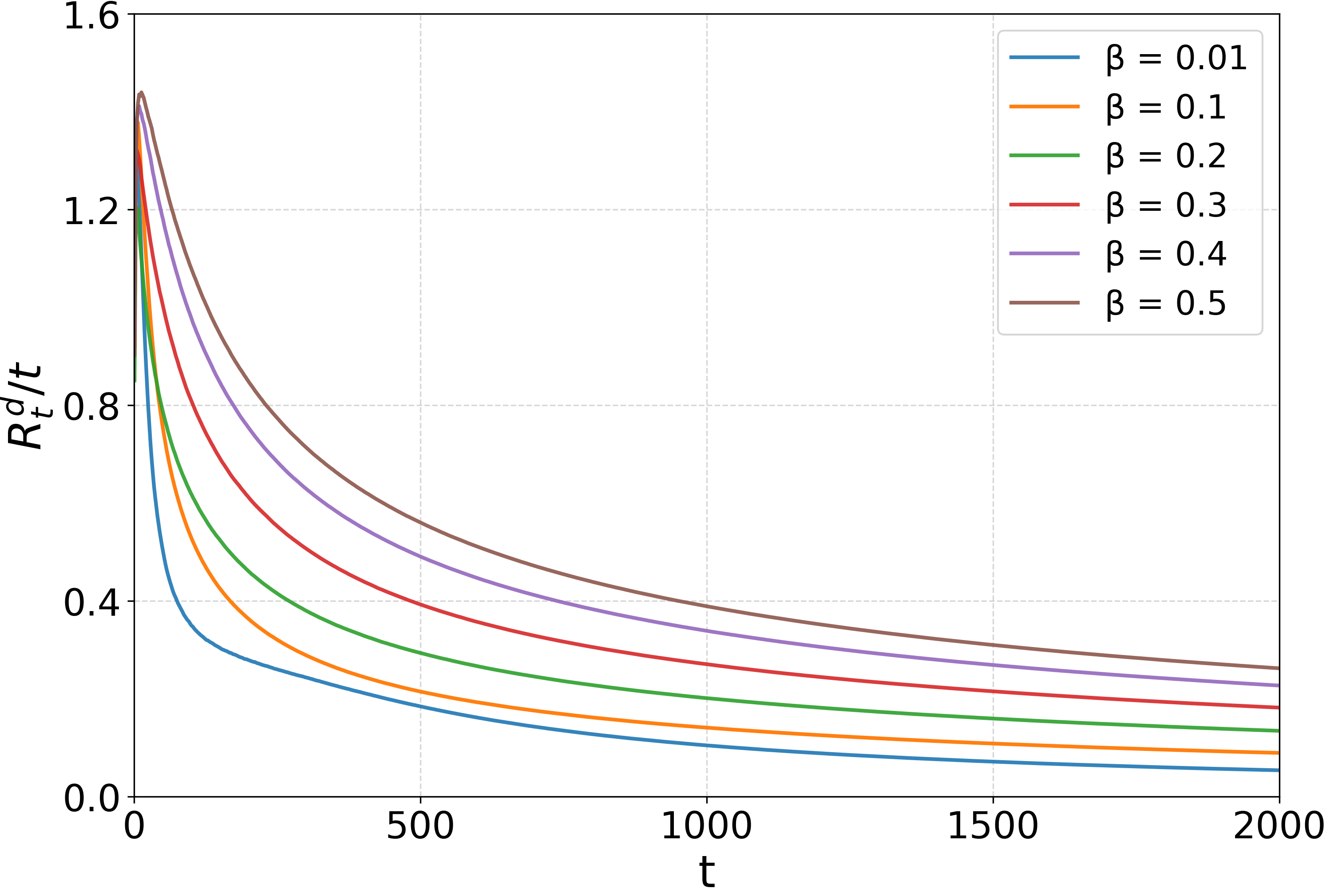}
\caption{Time-averaged regret under different $\beta$ values on the MNIST dataset}
\label{fig:3_mnist_beta_dy}
\end{figure}

\begin{figure}[t]
\centering
\includegraphics[width=6.615cm]{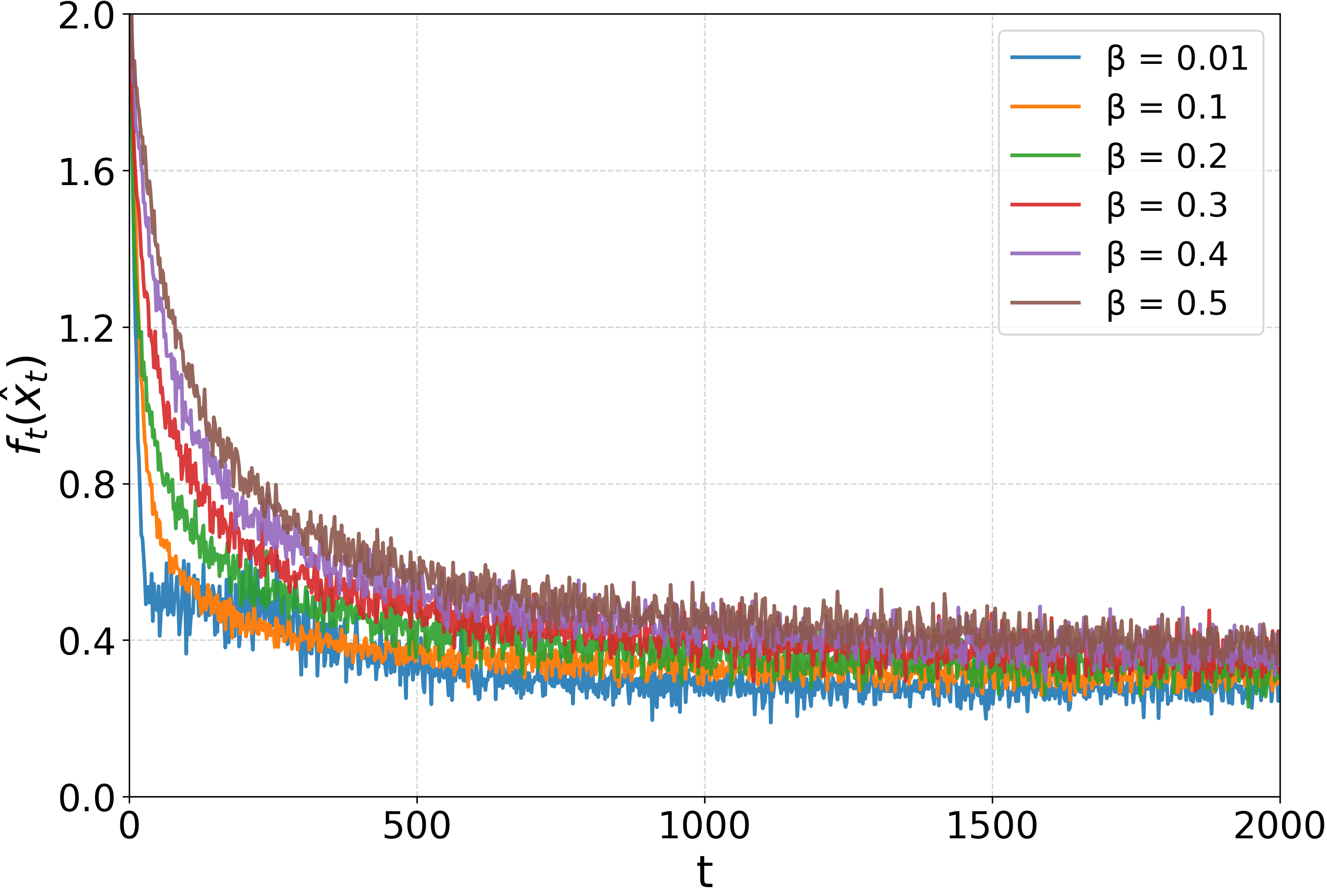}
\caption{Loss under different $\beta$ values on the MNIST dataset}
\label{fig:3_mnist_beta_loss}
\end{figure}

\begin{figure}[t]
\centering
\includegraphics[width=6.615cm]{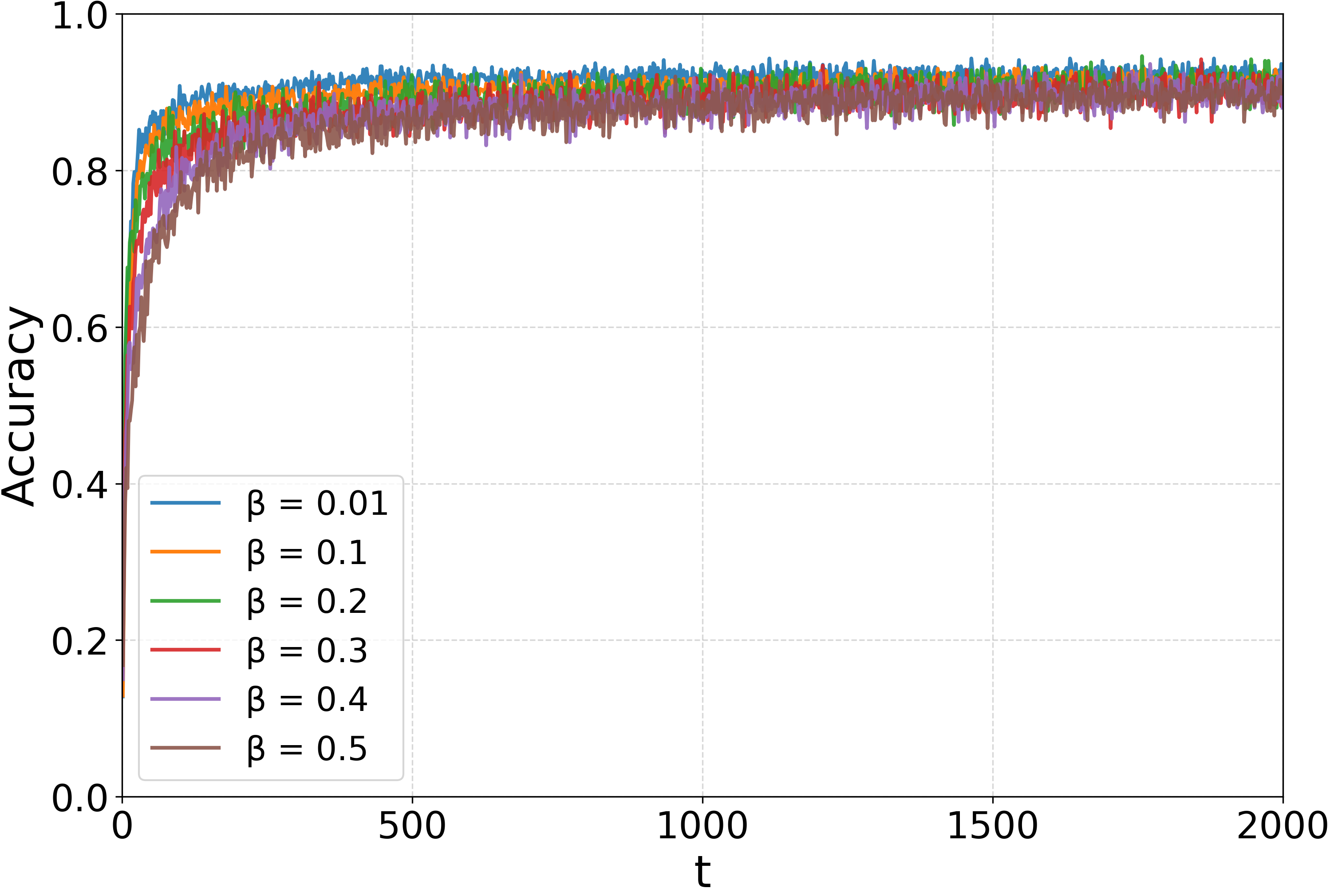}
\caption{Accuracy under different $\beta$ values on the MNIST dataset}
\label{fig:3_mnist_beta_acc}
\end{figure}

\section*{Appendix}
\appendix
\section{Proof of Lemma \ref{lemmayit}}

\begin{proof}
To bound \( \mathbb{E} \left[ \left \| \sum_{i=1}^n y_{i,t} \right\|^2 \right] \), we first apply the triangle inequality of norms to split \(\left\| \sum_{i=1}^n y_{i,t} \right\|\) as
\begin{align*} 
\left\| \sum_{i=1}^n y_{i,t} \right\| \leq \left\| \sum_{i=1}^n (y_{i,t} - \nabla f_{i,t}(x_{i,t})) \right\| + \left\| \sum_{i=1}^n \nabla f_{i,t}(x_{i,t}) \right\|.
\end{align*}
By the property of the global optimal solution \(x_t^*\), namely \(\sum_{i=1}^n \nabla f_{i,t}(x_t^*) = 0\), we obtain
\begin{align*} 
\left\| \sum_{i=1}^n \nabla f_{i,t}(x_{i,t}) \right\| & = \left\| \sum_{i=1}^n (\nabla f_{i,t}(x_{i,t}) - \nabla f_{i,t}(x_t^*)) \right\| \notag \\
& \leq \sum_{i=1}^n \left\| \nabla f_{i,t}(x_{i,t}) - \nabla f_{i,t}(x_t^*) \right\|.
\end{align*}
Since \(\nabla f_{i,t}\) is \(L_g\)-Lipschitz continuous, one has
\begin{align*} 
\left\| \nabla f_{i,t}(x_{i,t}) - \nabla f_{i,t}(x_t^*) \right\| \leq L_g \| x_{i,t} - x_t^* \|,
\end{align*}
which leads to
\begin{align}
\label{lemmaysima1}
\left\| \sum_{i=1}^n \nabla f_{i,t}(x_{i,t}) \right\| \leq L_g \sum_{i=1}^n \| x_{i,t} - x_t^* \| \leq L_g \sqrt{n} \varphi_t \| \mathbf{{x}}_t - \mathbf{x}_t^* \|_{\phi_t}.
\end{align}
By applying Lemma~\ref{lem:weighted-variance} with \(\gamma_i = \left[ \phi_{t} \right]_i\), \(u_i = x_{i,t}\), and \(\nu = x_t^*\), it can be derived that 
\begin{align*}
\|\mathbf{x}_t - \mathbf{x}_t^*\|_{\phi_t}^2 = \| \hat{{x}}_t - {x}_t^* \|^2 + \| \mathbf{x}_t - \hat{\mathbf{x}}_t \|_{\phi_t}^2.
\end{align*}
Noting that \(\sum_{i=1}^n y_{i,t} = \sum_{i=1}^n z_{i,t}\), we derive
\begin{align}
\label{lemmaysima2}
 \left\| \sum_{i=1}^n (y_{i,t} - \nabla f_{i,t}(x_{i,t})) \right\| &= \left\| \mathbf{1}_n^\top \otimes I_p (\mathbf{z}_t  -  \nabla F_t(\mathbf{x}_{t}))\right\| \notag \\ &\leq \sqrt{n} \left\|  (\mathbf{z}_t  -  \nabla F_t(\mathbf{x}_{t}))\right\|.
\end{align}
Combining \eqref{lemmaysima1} and \eqref{lemmaysima2}, it holds that
\begin{align*}
\left \| \sum_{i=1}^n y_{i,t} \right\|^2 & \leq 2n \left\| \mathbf{z}_t  -  \nabla F_t(\mathbf{x}_{t})\right\|^2 + 2 L_g^2 n \varphi_t^2 \| \mathbf{x}_t - \hat{\mathbf{x}}_t \|_{\phi_t}^2 \notag \\ & \quad+ 2 L_g^2 n \varphi_t^2\| \hat{x}_t - x_t^* \|^2.
\end{align*}
Taking the conditional expectation completes the proof.
\end{proof}

\section{Proof of Lemma \ref{le:ypai}}

\begin{proof}
Under the given assumptions, Lemma~\ref{lem:right-eigenvector} ensures that all components of the stochastic vector \( \pi_t \) are strictly positive. The scaling \( [\pi_t]_i^{-1} \) is therefore well-defined for all \( i \in \mathcal{V} \) and \( t \geq 0 \). By definition, we have
\begin{align*}
\| \mathbf{y}_t \|_{\pi_t^{-1}}^2 = \sum_{i=1}^{n} \frac{\| y_{i,t} \|^2}{[\pi_t]_i} = \sum_{i=1}^{n} [\pi_t]_i \left\| \frac{ y_{i,t} }{[\pi_t]_i}\right\|^2.
\end{align*}
Applying Lemma~\ref{lem:weighted-variance} with \( \gamma_i = [\pi_t]_i \), \( u_i = y_{i,t} / [\pi_t]_i \), and \( \nu = 0 \), it holds that
\begin{align*}
\sum_{i=1}^{n} [\pi_t]_i \left\| \frac{ y_{i,t} }{[\pi_t]_i}\right\|^2
=  S^2(\mathbf{y}_t, \pi_t) + \left\| \sum_{j=1}^{n}  y_{j,t} \right\|^2.
\end{align*}
Taking the conditional expectation on both sides and applying Lemma~\ref{lemmayit} completes the proof.
\end{proof}

\section{Proof of Lemma \ref{le:hattxing}}
\begin{proof}
According to the update rule in \eqref{xchange1}, it follows that \(\hat{x}_{t+1} = \hat{x}_t - \alpha \hat{y}_t\), so that \(\| \hat{x}_{t+1} - x_{t+1}^* \|^2 = \| \hat{x}_t - \alpha \hat{y}_t - x_{t+1}^* \|^2\). Introducing the auxiliary term \(\alpha n \phi_t^\top \pi_t \bar{y}_t\), where \(\bar{y}_t = \frac{1}{n} \sum_{j=1}^n y_{j,t}\), the error can be decomposed as
\begin{align}
& \hat{x}_t - \alpha \hat{y}_t - x_{t+1}^*  \notag \\ 
&= \underbrace{\hat{x}_t -  \alpha n \phi_{t}^\top \pi_t \nabla f_t(\hat{x}_t)-x_t^*}_{r_1} + \underbrace{\alpha \left( n \phi_{t}^{\top} \pi_t \bar{y}_t - \hat{y}_t \right)}_{r_4}  \notag \\
&\quad + \underbrace{\alpha n \phi_{t}^{\top} \pi_t h_t(\mathbf{x}_t)-\alpha n \phi_{t}^{\top} \pi_t \bar{y}_t}_{r_3} \notag 
+ \underbrace{x_t^* - x_{t+1}^*}_{r_5}  \notag \\
&\quad+ \underbrace{\alpha n \phi_{t}^\top \pi_t \nabla f_t(\hat{x}_t)-\alpha n \phi_{t}^{\top} \pi_t h_t(\mathbf{x}_t)}_{r_2} .
\end{align}
Applying Lemma~\ref{lemmazeta}, the following inequality holds
\begin{align}
\| \hat{x}_{t+1} - x_{t+1}^* \|^2 & \leq \zeta \| r_1  \|^2 + \frac{4\zeta}{\zeta - 1} \| r_2  \|^2 + \frac{4\zeta}{\zeta - 1} \| r_3  \|^2 \notag \\
&\quad+ \frac{4\zeta}{\zeta - 1} \| r_4  \|^2 + \frac{4\zeta}{\zeta - 1} \| r_5  \|^2.
\end{align}
Since \(f_t\) is \(\mu\)-strongly convex, Lemma~\ref{lemma:mu} implies that if the step size satisfies \(0 < \alpha < \frac{2}{n(\mu +L_g)\phi_{t}^\top \pi_t}\), then
\(
\| r_1 \|^2  \leq (1 - \mu \alpha n \phi_{t}^\top \pi_t)^2\| \hat{x}_t - x_t^* \|^2.
\)
By Lemma \ref{lem:average-gradient-diff}, we obtain
\(
\| r_2 \|^2  \leq  n \alpha^2 (\phi_{t}^\top \pi_t)^2 L_g^2 \varphi_t^2 \| \mathbf{x}_t - \hat{\mathbf{x}}_t \|_{\phi_t}^2.
\)
Since \(\bar{y}_t = \bar{z}_t\) and based on the definition of the gradient tracking error, it holds that
\[
\mathbb{E} \left[ \| r_3 \|^2  \right]  \leq \alpha^2 n(\phi_{t}^\top \pi_t)^2\mathbb{E} \left[ \| \mathbf{z}_{t} - \nabla F_{t}(\mathbf{x}_{t}) \|^2 \right].
\]
Applying Lemma~\ref{lem:weighted-variance} with \(u_i = [\pi_t]_i \left( \frac{y_{i,t}}{[\pi_t]_i} - \sum_{j=1}^{n} y_{j,t} \right)\), \(\gamma_i = [\phi_t]_i\), and \(\nu = 0\), we obtain
\[
\| r_4 \|^2  
\leq \alpha^2\sum_{i=1}^{n} \left[ \pi_t \right]_{i} \left\| \frac{y_{i,t}}{\left[ \pi_t \right]_i} - \sum_{j=1}^{n} y_{j,t} \right\|^2.
\]
Therefore, from the definition of \(S^2(\mathbf{y}_t, \pi_t)\) in \eqref{eq:grad-track-error}, we have
\[
\mathbb{E} \left[ \| r_4 \|^2  \right] \leq \alpha^2 \mathbb{E} \left[ S^2 (\mathbf{y}_t, \pi_t)  \right].
\]
Combining the results above, and under the condition that \(0 < \alpha < \frac{2}{n(\mu +L_g)\phi_{t}^\top \pi_t}\), we have
\begin{align}
& \mathbb{E} \left[ \| \hat{x}_{t+1} - x_{t+1}^* \|^2  \right] \notag \\
&\leq \zeta(1 - \mu \alpha n \phi_{t}^\top \pi_t)^2 \mathbb{E} \left[ \| \hat{x}_t - x_t^* \|^2  \right]  + \frac{4\zeta}{\zeta - 1} \alpha^2 \mathbb{E} \left[ S^2(\mathbf{y}_t, \pi_t) \right] \notag \\
& + \frac{4\zeta}{\zeta - 1} \| x_t^* - x_{t+1}^* \|^2 + \frac{4\zeta}{\zeta - 1} \alpha^2 n(\phi_{t}^\top \pi_t)^2 \mathbb{E} \left[\left \|  \mathbf{z}_t  -  \nabla F_t(\mathbf{x}_{t})\right\|^2\right] \notag \\
&+ \frac{4\zeta}{\zeta - 1} n \alpha^2 (\phi_{t}^\top \pi_t)^2 L_g^2 \varphi_t^2 \mathbb{E} \left[\| \mathbf{x}_t - \hat{\mathbf{x}}_t \|_{\phi_t}^2\right].
\end{align}
Finally, choosing \(\zeta = \frac{1}{1 - \mu \alpha n \phi_{t}^\top \pi_t}\) ensures convergence and completes the proof.
\end{proof}

\section{Proof of Lemma \ref{xtxhat}}
\begin{proof}
Since \(\hat{\mathbf{x}}_{t+1} = \hat{\mathbf{x}}_t - \alpha \hat{\mathbf{y}}_t\) and \(\mathbf{x}_{t+1} = A_t\mathbf{x}_t - \alpha A_t\mathbf{y}_t\), it follows that \(\mathbf{x}_{t+1} - \hat{\mathbf{x}}_{t+1} = (A_t\mathbf{x}_t - \hat{\mathbf{x}}_t) - \alpha(A_t\mathbf{y}_t - \hat{\mathbf{y}}_t)\). Taking the \({\phi_{t+1}}\)-norm on both sides and applying Lemma~\ref{lemmazeta}, we obtain
\vspace{-1em}  
\begin{small}
\begin{align*}
\|\mathbf{x}_{t+1} - \hat{\mathbf{x}}_{t+1}\|_{\phi_{t+1}}^2 = \zeta \|A_t\mathbf{x}_t - \hat{\mathbf{x}}_t\|_{\phi_{t+1}}^2 + \frac{\zeta \alpha^2}{\zeta - 1} \|A_t\mathbf{y}_t - \hat{\mathbf{y}}_t\|_{\phi_{t+1}}^2.
\end{align*}
\end{small}
Both terms \(\|A_t\mathbf{x}_t - \hat{\mathbf{x}}_t\|_{\phi_{t+1}}^2\) and \(\|A_t\mathbf{y}_t - \hat{\mathbf{y}}_t\|_{\phi_{t+1}}^2\) conform to the structure of Lemma~\ref{le:phi}, with \(A = A_t\) and \(x_i = x_{i,t}\) for all \(i \in \mathcal{V}\). In addition, Lemma~\ref{lem:left-eigenvector} implies that \(\phi_{t+1}^\top A_t = \phi_t^\top\). Letting \(\pi = \phi_{t+1}\), \(\phi = \phi_t\), and \(\hat{x}_\phi = x_t\), and substituting into Lemma~\ref{le:phi}, we obtain
\(
\|A_t\mathbf{x}_t - \hat{\mathbf{x}}_t\|_{\phi_{t+1}}^2 \leq c_t^2 \|\mathbf{x}_t - \hat{\mathbf{x}}_t\|_{\phi_t}^2.
\)
Using the upper bound of \(c_t\), this gives
\begin{equation}
\label{axxhat}
\|A_t\mathbf{x}_t - \hat{\mathbf{x}}_t\|_{\phi_{t+1}}^2 \leq c^2 \|\mathbf{x}_t - \hat{\mathbf{x}}_t\|_{\phi_t}^2.
\end{equation}
Similarly, it follows that
\begin{equation}
\label{yxyhat}
\|A_t\mathbf{y}_t - \hat{\mathbf{y}}_t\|_{\phi_{t+1}}^2 \leq c^2 \|\mathbf{y}_t - \hat{\mathbf{y}}_t\|_{\phi_t}^2.
\end{equation}
To bound \(\|\mathbf{y}_t - \hat{\mathbf{y}}_t\|_{\phi_t}^2\), we apply Lemma~\ref{lem:weighted-variance} with \(\gamma_i = [\phi_t]_i\), \(u_i = y_{i,t}\), and \(\nu = 0\). Then, we have
\begin{align}\label{y-diff-phi}
\|\mathbf{y}_t - \hat{\mathbf{y}}_t\|_{\phi_t}^2 
&= \sum_{i=1}^n [\phi_t]_i \|y_{i,t} - \sum_{j=1}^n [\phi_t]_j y_{j,t}\|^2 \notag \\
&\leq \sum_{i=1}^n [\phi_t]_i \|y_{i,t}\|^2 = \sum_{i=1}^n [\phi_t]_i [\pi_t]_i \frac{\|y_{i,t}\|^2}{[\pi_t]_i} \notag \\
&\leq \gamma_t^2 \| \mathbf{y}_t \|_{\pi_t^{-1}}^2.
\end{align}
where \(\gamma_t = \sqrt{\max_{i \in \mathcal{V}} \left( [\phi_t]_i [\pi_t]_i \right)}\), and \(\| \mathbf{y}_t \|_{\pi_t^{-1}}^2 = \sum_{i=1}^n \frac{\|y_{i,t}\|^2}{[\pi_t]_i}\). Therefore,
\[
\|\mathbf{x}_{t+1} - \hat{\mathbf{x}}_{t+1}\|_{\phi_{t+1}}^2 = \zeta c^2 \|\mathbf{x}_t - \hat{\mathbf{x}}_t\|_{\phi_t}^2 + \frac{\zeta \alpha^2 c^2 \gamma_t^2}{\zeta - 1} \| \mathbf{y}_t \|_{\pi_t^{-1}}^2.
\]
Letting \(\zeta = \frac{1 + c^2}{2 c^2}\), we obtain
\begin{small} 
\begin{align*}
\|\mathbf{x}_{t+1} - \hat{\mathbf{x}}_{t+1}\|_{\phi_{t+1}}^2 \leq \frac{1 + c^2}{2} \|\mathbf{x}_t - \hat{\mathbf{x}}_t\|_{\phi_t}^2 + \frac{1 + c^2}{1 - c^2} \alpha^2 c^2 \gamma_t^2 \| \mathbf{y}_t \|_{\pi_t^{-1}}^2.
\end{align*}
\end{small}
Taking the conditional expectation and applying Lemma~\ref{le:ypai} completes the proof.
\end{proof}

\section{Proof of Lemma \ref{xt1xt}}
\begin{proof}
By adding and subtracting \(\hat{\mathbf{x}}_t\), we obtain
\(
\|\mathbf{x}_{t+1} - \mathbf{x}_{t}\|
= \|\mathbf{x}_{t+1} - \hat{\mathbf{x}}_{t} + \hat{\mathbf{x}}_{t} - \mathbf{x}_{t}\|
\le \|A_{t}\mathbf{x}_{t} - \hat{\mathbf{x}}_{t}\|
+ \|\mathbf{x}_{t} - \hat{\mathbf{x}}_{t}\|
+ \alpha \| A_{t}\mathbf{y}_{t}\|,
\)
where the inequality follows from the update rule of \(x\) in Equation~\eqref{xchange1} and the triangle inequality. Expanding the norms and applying Lemma \ref{le:phi} yield
\begin{align*}
& \|\mathbf{x}_{t+1} - \mathbf{x}_{t}\|\notag \\
& \leq \varphi_{t+1} \|A_{t}\mathbf{x}_{t} - \hat{\mathbf{x}}_{t}\|_{\phi_{t+1}}
+ \varphi_{t} \|\mathbf{x}_{t} - \hat{\mathbf{x}}_{t}\|_{\phi_{t}}
+ \alpha \|A_{t}\mathbf{y}_{t}\| \notag \\
& \leq (c\varphi_{t+1}+\varphi_{t}) \|\mathbf{x}_{t} -\hat{\mathbf{x}}_{t}\|_{\phi_{t}} + \alpha \|A_{t}\mathbf{y}_{t}\|.
\end{align*}
Using inequality~\eqref{yxyhat}, \eqref{y-diff-phi} and the definition \(\gamma_t= \sqrt{ \max_{i} [\phi_t]_i [\pi_t]_i }\), we obtain
\begin{align*}
\|A_{t}\mathbf{y}_{t}\|  & \leq \|A_{t}\mathbf{y}_{t}-\hat{\mathbf{y}}_{t}\| +\|\hat{\mathbf{y}}_{t}\| \notag \\
& \leq \varphi_{t+1}\|A_{t}\mathbf{y}_{t}-\hat{\mathbf{y}}_{t}\|_{\phi_{t+1}}+\|\hat{\mathbf{y}}_{t}\| \notag \\
& \leq c \varphi_{t+1}\|\mathbf{y}_t-\hat{\mathbf{y}}_t\|_{\phi_{t}} + \gamma_t \| \mathbf{y}_t \|_{\pi_t^{-1}} \notag \\
&\leq c \gamma_t \varphi_{t+1} \| \mathbf{y}_t \|_{\pi_t^{-1}} + \gamma_t \| \mathbf{y}_t \|_{\pi_t^{-1}}.
\end{align*}
By employing the norm inequality \(\|A_t\mathbf{x}_t - \hat{\mathbf{x}}_t\|_{\phi_{t+1}} \leq c \|\mathbf{x}_t - \hat{\mathbf{x}}_t\|_{\phi_t}\) as given in Equation~(\ref{axxhat}) and invoking Lemma~\ref{lemmazeta}, we derive
\begin{align*}
\|\mathbf{x}_{t+1} - \mathbf{x}_{t}\|^2
& \leq 2(c\varphi_{t+1}+\varphi_{t})^2 \|\mathbf{x}_{t} -\hat{\mathbf{x}}_{t}\|_{\phi_{t}}^2
\\ & \quad + 2 \alpha^2 \gamma_t^2 (c \varphi_{t+1} + 1)^2 \| \mathbf{y}_t \|_{\pi_t^{-1}}^2.
\end{align*}
Taking expectation on both sides and applying the bound from Lemma~\ref{le:ypai} yields the desired result.
\end{proof}

\section{Proof of Lemma \ref{zt1zt}}
\begin{proof}
Based on the update rule of the hybrid stochastic gradient estimator given in Equation~\eqref{zchange1}, the update difference between \( z_{i,t+1} \) and \( z_{i,t} \) can be expressed as
\begin{align*}
z_{i,t+1} - z_{i,t} &= \nabla \hat{f}_{i,t+1}(x_{i,t+1}, \xi_{i,t+1}) - \nabla \hat{f}_{i,t+1}(x_{i,t}, \xi_{i,t+1})  \notag \\
& \quad + \beta (\nabla \hat{f}_{i,t+1}(x_{i,t}, \xi_{i,t+1}) - \nabla f_{i,t}(x_{i,t})) \notag \\
& \quad - \beta (z_{i,t} - \nabla f_{i,t}(x_{i,t})).
\end{align*}

Applying the norm inequality and Lemma~\ref{lemmazeta}, we decompose \(\| z_{i,t+1} - z_{i,t} \|^2\) into three terms
\vspace{-0.8em}  
\begin{small} 
\begin{align*}
\| z_{i,t+1} - z_{i,t} \|^2 & \leq 3 \| \nabla \hat{f}_{i,t+1}(x_{i,t+1}, \xi_{i,t+1}) - \nabla \hat{f}_{i,t+1}(x_{i,t}, \xi_{i,t+1}) \|^2 \notag\\
& \quad + 3 \beta^2 \| \nabla \hat{f}_{i,t+1}(x_{i,t}, \xi_{i,t+1}) - \nabla f_{i,t}(x_{i,t}) \|^2 \notag\\
& \quad + 3 \beta^2 \| z_{i,t} - \nabla f_{i,t}(x_{i,t}) \|^2.
\end{align*}
\end{small}

From Assumption~\ref{ass:Lg}, the stochastic gradient \(\nabla \hat{f}_{i,t+1}(\cdot, \xi_{i,t+1})\) is \(L_g\)-Lipschitz continuous, and hence
\(
\mathbb{E} \left[ \| \nabla \hat{f}_{i,t+1}(x_{i,t+1}, \xi_{i,t+1}) - \nabla \hat{f}_{i,t+1}(x_{i,t}, \xi_{i,t+1}) \|^2 \right] \leq L_g^2 \mathbb{E} \left[ \| x_{i,t+1} - x_{i,t} \|^2 \right].
\)

Furthermore, decomposing the variance of stochastic gradients and temporal variation yields
\begin{align*}
& \mathbb{E} \left[ \| \nabla \hat{f}_{i,t+1}(x_{i,t}, \xi_{i,t+1}) - \nabla f_{i,t}(x_{i,t}) \|^2 \right] \notag\\
&\leq 2 \mathbb{E} \left[ \| \nabla \hat{f}_{i,t+1}(x_{i,t}, \xi_{i,t+1}) - \nabla f_{i,t+1}(x_{i,t}) \|^2 \right] \notag \\
&\quad + 2 \mathbb{E} \left[ \| \nabla f_{i,t+1}(x_{i,t}) - \nabla f_{i,t}(x_{i,t}) \|^2 \right] \notag \\
&\leq 2 \sigma^2 + 2 q_t^2,
\end{align*}
where \(\sigma^2\) denotes the variance from the stochastic gradients due to Assumption \ref{ass:sigma}, and \(q_t\) is defined in \eqref{qt}.

Combining the bounds above, we obtain
\begin{align*}
\label{usezt1zt}
\mathbb{E} \left[ \| \mathbf{z}_{t+1} - \mathbf{z}_{t} \|^2  \right] &\leq 3 L_g^2 \mathbb{E} \left[ \| \mathbf{x}_{t+1} - \mathbf{x}_{t} \|^2 \right] + 6 \beta^2 n q_t^2 \notag\\
& \quad + 3 \beta^2 \mathbb{E} \left[ \| \mathbf{z}_{t} - \nabla F_{t}(\mathbf{x}_{t}) \|^2 \right] + 6 \beta^2 n \sigma^2.
\end{align*}

Substituting the bound from Lemma~\ref{xt1xt} into the expression completes the proof.
\end{proof}

\section{Proof of Lemma \ref{le:yt1pait1}}
\begin{proof}
Since \(B_t\) is a column-stochastic matrix, the update rule of the gradient tracking variable can be written compactly as 
\[\mathbf{y}_{t+1} = B_t\mathbf{y}_{t}+B_t\mathbf{z}_{t+1}-B_t\mathbf{z}_{t}. \] By multiplying both sides with \(\text{diag}^{-1}(\pi_{t+1})\) and subtracting the state \(\mathbf{s}_{t+1} = \mathbf{1}_n \mathbf{1}_n^\top \mathbf{y}_{t+1}= \mathbf{s}_t +\mathbf{1}_n \mathbf{1}_n^\top (\mathbf{z}_{t+1} - \mathbf{z}_t)\), we obtain
\begin{align*}
& \text{diag}^{-1}(\pi_{t+1}) \mathbf{y}_{t+1} - \mathbf{s}_{t+1}\notag \\
& 
= \text{diag}^{-1}(\pi_{t+1}) B_t \mathbf{y}_t - \mathbf{s}_t 
+ \text{diag}^{-1}(\pi_{t+1}) B_t (\mathbf{z}_{t+1} - \mathbf{z}_t) \notag \\
& \quad- \mathbf{1}_n \mathbf{1}_n^\top (\mathbf{z}_{t+1} - \mathbf{z}_t).
\end{align*}

\noindent Define \(r_1 = \text{diag}^{-1}(\pi_{t+1}) B_t \mathbf{y}_t - \mathbf{s}_t\), and \(r_2 = \text{diag}^{-1}(\pi_{t+1}) B_t (\mathbf{z}_{t+1} - \mathbf{z}_t) - \mathbf{1}_n \mathbf{1}_n^\top (\mathbf{z}_{t+1} - \mathbf{z}_t)\). We analyze \(r_1\) and \(r_2\) separately.

For \(r_1\), we have
\begin{align*}
\|r_1\|_{\pi_{t+1}}^2 
& = \sum_{i=1}^n [\pi_{t+1}]_i \left\| \frac{\sum_{j=1}^n [B_t]_{ij} y_{j,t}}{[\pi_{t+1}]_i} - \sum_{j=1}^{n} y_{j,t} \right\|^2 \notag \\
&
\leq \tau_t^2 \sum_{i=1}^n [\pi_t]_i \left\| \frac{y_{i,t}}{[\pi_t]_i} - \sum_{j=1}^{n} y_{j,t} \right\|^2 
\notag \\
& = \tau_t^2 S^2(\mathbf{y}_t, \pi_t),
\end{align*}
where the inequality is based on Lemma~\ref{le:pi}, by taking \(\mathcal{G} = \mathcal{G}_t\), \(B = B_t\), \(\pi = \pi_{t+1}\), and \(\nu = \pi_t\), together with the definition of \(\tau_t\). 

Taking conditional expectation and applying \(\tau_t \leq \tau\), we obtain
\begin{equation}
\label{ytpi1}
\mathbb{E} \left[ \|r_1\|_{\pi_{t+1}}^2 \right] \leq \tau^2 \mathbb{E} \left[ S^2(\mathbf{y}_t, \pi_t) \right].
\end{equation}
For \(r_2\), we define \(\Delta \mathbf{z}_t = \mathbf{z}_{t+1} - \mathbf{z}_t\) and $\tilde{\Delta}= \sum_{j=1}^n \Delta z_{j,t}$, then
\begin{align*}
\left\| r_2 \right\|_{\pi_{t+1}}^2 
&\leq \tau_t^2 \sum_{i=1}^n [\pi_t]_i \left\| \frac{\Delta z_{i,t}}{[\pi_t]_i} - \sum_{j=1}^n \Delta z_{j,t} \right\|^2 \\
&=\tau_t^2 \sum_{i=1}^{n} \pi_i \left( \left\| \frac{\Delta z_i}{[\pi_t]_i} \right\|^2 - 2 \left\langle \frac{\Delta z_i}{[\pi_t]_i}, \tilde{\Delta} \right\rangle + \|\tilde{\Delta}\|^2 \right)\\
&=\tau_t^2 \sum_{i=1}^{n} \frac{1}{[\pi_t]_i}\|\Delta z_{i,t}\|^2 - \|\tilde{\Delta}\|^2\\
&\leq \tau_t^2 \kappa_t^2 \left\| \Delta \mathbf{z}_t \right\|^2,
\end{align*}
where $\kappa_t$ is defined in \eqref{definecon}. 
Then, applying Lemma~\ref{lemmazeta}, it can be derived that
\begin{align}
\label{ytpi3}
& \mathbb{E} \left[ S^2(\mathbf{y}_{t+1}, \pi_{t+1}) \right] \notag \\
& \leq \zeta \tau^2 \mathbb{E} \left[ S^2(\mathbf{y}_t, \pi_t) \right] 
+ \frac{\zeta}{\zeta - 1} \tau^2  \kappa_t^2 \mathbb{E} \left[ \|\mathbf{z}_{t+1} - \mathbf{z}_t\|^2 \right].
\end{align}
\noindent Choosing \(\zeta = \frac{1}{\tau} > 1\) and substituting into \eqref{ytpi3} yields the desired result.
\end{proof}

\section{Proof of Lemma \ref{zt1xt1}}

\begin{proof}
Define the stochastic gradient noise at agent \(i\) and time \(t+1\) as \(\delta^1_{i,t+1} = \nabla \hat{f}_{i,t+1}(x_{i,t+1}, \xi_{i,t+1}) - \nabla f_{i,t+1}(x_{i,t+1})\), and an auxiliary noise term \(\delta^2_{i,t} = \nabla \hat{f}_{i,t+1}(x_{i,t}, \xi_{i,t+1}) - \nabla f_{i,t}(x_{i,t})\), where the randomness is induced by \(\xi_{i,t+1}\). 
Note that $\mathbb{E}[\delta^1_{i,t+1}]=0 $ but $\mathbb{E}[\delta^2_{i,t}]\neq 0$ generally due to the time-varying objective functions. 

Let $\mathbf{\delta}^1_t=[\delta^1_{i,t}]_{i\in \mathcal{V}}$ and $\mathbf{\delta}^2_t=[\delta^2_{i,t}]_{i\in \mathcal{V}}$. It can be derived that
\vspace{-1.5em}  
\begin{small} 
\begin{align}
& \mathbb{E} \left[ \| \mathbf{z}_{t+1} - \nabla F_{t+1}(\mathbf{x}_{t+1}) \|^2 \right]\notag \notag \\
&= \mathbb{E} \left[ \|  
    \beta \mathbf{\delta}^1_{t+1} 
    + (1 - \beta)(\mathbf{\delta}^1_{t+1} - \mathbf{\delta}^2_{t}) + (1 - \beta) \left( \mathbf{z}_{t} - \nabla F_{t}(\mathbf{x}_{t}) \right) 
\|^2 \right]\notag  \\
&\leq 2\beta^2\mathbb{E} \left[ \|  
     \mathbf{\delta}^1_{t+1} \|^2 \right] +2(1 - \beta)^2\mathbb{E} \left[ \|\mathbf{\delta}^1_{t+1} - \mathbf{\delta}^2_{t}\|^2\right] \notag \\
& \quad -2(1 - \beta)^2\langle \nabla F_{t+1}(\mathbf{x}_{t})-\nabla F_{t}(\mathbf{x}_{t}), \mathbf{z}_{t} - \nabla F_{t}(\mathbf{x}_{t})\rangle \notag \\
&  \quad +  (1 - \beta)^2\mathbb{E} \left[  \|\mathbf{z}_{t} - \nabla F_{t}(\mathbf{x}_{t})\|^2\right].\label{E-Zf2}
\end{align}
\end{small}

Moreover, for any $\zeta_0>0$, we have
\begin{align}
 &   -2\langle \nabla F_{t+1}(\mathbf{x}_{t})-\nabla F_{t}(\mathbf{x}_{t}), \mathbf{z}_{t} - \nabla F_{t}(\mathbf{x}_{t})\rangle \notag \\
 & \leq \zeta_0 \|\mathbf{z}_{t} - \nabla F_{t}(\mathbf{x}_{t}\|^2 +\zeta_0^{-1}\|F_{t+1}(\mathbf{x}_{t})-\nabla F_{t}(\mathbf{x}_{t})\|^2 \notag  \\
 & \leq \zeta_0 \|\mathbf{z}_{t} - \nabla F_{t}(\mathbf{x}_{t}\|^2+\zeta_0^{-1}nq_t^2. \label{cross-term}
\end{align}

By applying Assumptions \ref{ass:Lg} and \ref{ass:sigma}, we have $ \mathbb{E} \left[ \| \delta^1_{i,t+1} \|^2 \right]\leq \sigma^2$ and 
\begin{align*}
    &\mathbb{E} \left[ \| \delta^1_{i,t+1} - \delta^2_{i,t} \|^2 \right]  \\
    \leq&  2\mathbb{E}\left[ \|\nabla \hat{f}_{i,t+1}(x_{i,t+1}, \xi_{i,t+1}) - \nabla \hat{f}_{i,t+1}(x_{i,t}, \xi_{i,t+1}) \|^2 \right] \\
    &+ 2 \|\nabla f_{i,t+1}(x_{i,t+1}) - \nabla f_{i,t}(x_{i,t}) \|^2 \\
    \leq&  2\mathbb{E}\left[ \|\nabla \hat{f}_{i,t+1}(x_{i,t+1}, \xi_{i,t+1}) - \nabla \hat{f}_{i,t+1}(x_{i,t}, \xi_{i,t+1}) \|^2 \right] \\
    &+ 4 \|\nabla f_{i,t+1}(x_{i,t+1}) - \nabla f_{i,t+1}(x_{i,t}) \|^2 \\
    &+ 4 \|\nabla f_{i,t+1}(x_{i,t}) - \nabla f_{i,t}(x_{i,t}) \|^2 \\
    \leq & 6L_g^2\|x_{i,t+1}-x_{i,t}\|^2+4 q_t^2,  
\end{align*}
which implies that 
\begin{align}\label{delta-diff}
\mathbb{E} \left[ \|\mathbf{\delta}^1_{t+1} - \mathbf{\delta}^2_{t}\|^2 \right] \leq  6L_g^2\|\bm{x}_{t+1}-\bm{x}_{t}\|^2+4 nq_t^2. 
\end{align}
Then, substituting \eqref{cross-term} and \eqref{delta-diff} into \eqref{E-Zf2} results in \eqref{z_bound}. 
\end{proof}

\section{Proof of Corollary \ref{cor1}}
\begin{proof}
When $f_t= f$, the previous Lemmas \ref{zt1zt} and \ref{zt1xt1} related to the time-varying term $q_t$ can be revised as follows. Following the proof of Lemma \ref{zt1zt}, we have
\begin{align*}
\|z_{i,t+1} - z_{i,t}\|^2 &= \|\nabla \hat{f}_{i}(x_{i,t+1}, \xi_{i,t+1}) - \nabla \hat{f}_{i}(x_{i,t}, \xi_{i,t+1})  \notag \\
& \quad + \beta (\nabla \hat{f}_{i}(x_{i,t}, \xi_{i,t+1}) - \nabla f_{i}(x_{i,t})) \notag \\
& \quad - \beta (z_{i,t} - \nabla f_{i}(x_{i,t}))\|^2\\
&\leq 2 \| \nabla \hat{f}_{i}(x_{i,t+1}, \xi_{i,t+1}) - \nabla \hat{f}_{i}(x_{i,t}, \xi_{i,t+1})\|^2 \\
& \quad + 2\beta^2\| \nabla \hat{f}_{i}(x_{i,t}, \xi_{i,t+1})- \nabla f_{i}(x_{i,t})) \|^2 +\notag \\
& \quad + 2\beta^2\| z_{i,t} - \nabla f_{i}(x_{i,t})\|^2\\
&\leq 2 L_g^2\| x_{i,t+1} -x_{i,t}\|^2+ 2\beta^2\sigma^2  \\
& \quad + 2\beta^2\| z_{i,t} - \nabla f_{i}(x_{i,t})\|^2
\end{align*}
where the above inequalities uses Lemma \ref{lemmazeta} and Assumptions \ref{ass:Lg}, \ref{ass:sigma}. Hence, we obtain
\begin{align*}
\mathbb{E} \left[ \| \mathbf{z}_{t+1} - \mathbf{z}_{t} \|^2  \right] &\leq 2 L_g^2 \mathbb{E} \left[ \| \mathbf{x}_{t+1} - \mathbf{x}_{t} \|^2 \right] + 2 n\beta^2 \sigma^2 \notag\\
& \quad + 2 \beta^2 \mathbb{E} \left[ \| \mathbf{z}_{t} - \nabla F_{t}(\mathbf{x}_{t}) \|^2 \right].
\end{align*}

For Lemma \ref{zt1xt1}, we define 
\(\delta^1_{i,t+1} = \nabla \hat{f}_{i}(x_{i,t+1}, \xi_{i,t+1}) - \nabla f_{i}(x_{i,t+1})\) and \(\delta^2_{i,t} = \nabla \hat{f}_{i}(x_{i,t}, \xi_{i,t+1}) - \nabla f_{i}(x_{i,t})\). Then, one can reorganize \eqref{E-Zf} as
\vspace{-1.5em}  
\begin{small} 
\begin{align}
& \mathbb{E} \left[ \| \mathbf{z}_{t+1} - \nabla F_{t+1}(\mathbf{x}_{t+1}) \|^2 \right]\notag \notag \\
&= \mathbb{E} \left[ \|  
    \beta \mathbf{\delta}^1_{t+1} 
    + (1 - \beta)(\mathbf{\delta}^1_{t+1} - \mathbf{\delta}^2_{t}) + (1 - \beta) \left( \mathbf{z}_{t} - \nabla F_{t}(\mathbf{x}_{t}) \right) 
\|^2 \right]\notag  \\
&\leq 2\beta^2\mathbb{E} \left[ \|  
     \mathbf{\delta}^1_{t+1} \|^2 \right] +2(1 - \beta)^2\mathbb{E} \left[ \|\mathbf{\delta}^1_{t+1} - \mathbf{\delta}^2_{t}\|\right] \notag \\
&  \quad +  (1 - \beta)^2\mathbb{E} \left[  \|\mathbf{z}_{t} - \nabla F_{t}(\mathbf{x}_{t})\|^2\right]\notag \\
& \leq (1 - \beta)^2 \mathbb{E} \left[ \| \mathbf{z}_t - \nabla F_t(\mathbf{x}_t) \|^2 \right] 
+ 2n \beta^2 \sigma^2 \notag \\
& \quad  + 2 (1 - \beta)^2 L_g^2 \mathbb{E} \left[ \| \mathbf{x}_{t+1} - \mathbf{x}_t \|^2 \right], \label{E-Zf}
\end{align}
\end{small}
where the first inequality holds due to $\mathbb{E}[\mathbf{\delta}^1_t]= \mathbb{E}[\mathbf{\delta}^2_t]=0$, and the second inequality is obtained by applying $\mathbb{E}[\|\xi-\mathbb{E}[\xi]\|^2] = \mathbb{E}[\|\xi\|^2] - \| \mathbb{E}[\xi]^2\|$ and Assumption \ref{ass:Lg}. 

With these modifications, one can derive a new positive matrix $\widehat{M}(\alpha)\leq M(\alpha)$ element-wise, sharing the same structure as $M(\alpha)$ but with slightly different number coefficients and $m_0=(1-\beta)^2$. In this case, the following inequality system holds
\begin{align}
\label{vt3}
V_{t+1} \leq M(\alpha)V_t + b,
\end{align}
with $b= [0,\frac{2 n\tau^2 \psi}{1 - \tau}\beta^2  \sigma^2,0,2n \beta^2 \sigma^2]^{\top}$. By iteratively expanding this inequality, we get
\begin{align*}
V_{t+1}\leq M(\alpha)^t V_0 + \sum_{k=0}^{t-1} M(\alpha)^k b.
\end{align*}

Since the spectral radius $\rho(M(\alpha)) < 1$, we have $\lim_{t \to \infty} M(\alpha)^t = 0$. Therefore, the first term $M(\alpha)^t V_0$ tends to zero as $t \to \infty$ with a linear decay rate of $\rho_M$. Next, consider the sum $\sum_{k=0}^{t-1} M(\alpha)^k b$, which is a geometric series that can be written as
$$
\sum_{k=0}^{t-1} M(\alpha)^k b = (\mathbb{I} - M(\alpha))^{-1}( \mathbb{I}-M(\alpha)^t)b.
$$
As $t \to \infty$, $M(\alpha)^t \to 0$, so the above expression simplifies to
$$
\sum_{k=0}^{\infty} M(\alpha)^k b= -(\mathbb{I} - M(\alpha))^{-1}b.
$$
Therefore, when $t \to \infty$, 
$\limsup_{t \to \infty} V_t \leq -(\mathbb{I}- M(\alpha))^{-1}b.
$ with a linear convergence rate of $\rho_M$.
\end{proof}

\bibliographystyle{plain}
\bibliography{AUTO2025}

\end{document}